\documentclass[lettersize,onecolumn]{IEEEtran}
\usepackage{array}
\usepackage{textcomp}
\usepackage{stfloats}
\usepackage{url}
\usepackage{verbatim}
\usepackage{graphicx}
\hyphenation{info-rmation net-works}

\usepackage{amsmath, amssymb}
\usepackage{amsthm}

\theoremstyle{plain}
\newtheorem{theorem}{Theorem}
\newtheorem{corollary}{Corollary}

\theoremstyle{definition}
\newtheorem{definition}{Definition}
\theoremstyle{remark}
\newtheorem{remark}{Remark}


\usepackage[mathscr]{eucal}
\DeclareMathAlphabet{\mathrsfso}{U}{rsfso}{m}{n}


\usepackage[backend=bibtex,natbib=true]{biblatex}
\addbibresource{bib.bib}

\usepackage{algorithm}
\usepackage{algpseudocode}
\algdef{SE}[DOWHILE]{Do}{doWhile}{\algorithmicdo}[1]{\algorithmicwhile\ #1}%
\algnewcommand{\algorithmicgoto}{\textbf{go to}}%
\algnewcommand{\Goto}[1]{\algorithmicgoto~\ref*{#1}}%

\usepackage{centernot}

\usepackage{subcaption}

\usepackage{enumitem}

\usepackage{multicol}

\usepackage{xr}
\makeatletter
\newcommand*{\addFileDependency}[1]{
  \typeout{(#1)}
  \@addtofilelist{#1}
  \IfFileExists{#1}{}{\typeout{No file #1.}}
}
\makeatother


\usepackage[linguistics]{forest}

\usepackage{bookmark}

\newcommand{\papertitle}{
    Identifying Information from Observations \\with Uncertainty and Novelty
}


\usepackage{longtable}

\usepackage{color, colortbl}

\newcommand{\vvvec}[2][3mu]{\vec{#2\mkern-#1}\mkern#1}
\newcommand{\vvec}[1]{#1}

\newcommand{\entropyRate}{\vvvec{H}}

\newcommand{\Complex}{\mathbb{C}} 
\newcommand{\C}{\Complex} 

\newcommand{\Real}{\mathbb{R}} 
\newcommand{\R}{\Real} 

\newcommand{\Computable}{\mathbb{K}} 
\newcommand{\K}{\Computable} 

\newcommand{\Rational}{\mathbb{Q}} 
\newcommand{\Q}{\Rational} 

\newcommand{\Integer}{\mathbb{Z}} 
\newcommand{\Z}{\Integer} 

\newcommand{\Natural}{\mathbb{N}} 
\newcommand{\N}{\Natural} 

\newcommand{\B}{\mathbb{B}}

\newcommand{\PS}{\mathbb{P}} 




\newcommand{\Sample}{\Omega} 
\newcommand{\Event}{\mathit{\Sigma}} 
\newcommand{\Prob}{P} 

\newcommand{\MeasurableSigma}{\mathit{\Sigma}} 


\newcommand{\Dissim}{\mathcal{D}}



\newcommand{\KL}{D_{\!K\!L} }

\newcommand{\Time}{\mathrsfso{T}}




\newcommand{\MeasurableState}{\mathrsfso{S}}
\newcommand{\MeasurableSigmaState}{\MeasurableSigma_\MeasurableState}

\newcommand{\Input}{X}
\newcommand{\IdealInput}{\Input}
\newcommand{\DataInput}{\vec{\sampleInput}}
\newcommand{\SampleInput}{\Sample_{\Input}}
\newcommand{\EventInput}{\Event_{\Input}}
\newcommand{\ProbInput}{\Prob_{\Input}}
\newcommand{\TripleInput}{(\SampleInput, \EventInput, \ProbInput)}

\newcommand{\MeasurableSigmaInput}{\MeasurableSigma_\MeasurableInput}

\newcommand{\MeasurableInput}{\mathrsfso{\Input}}

\newcommand{\sampleInput}{x}

\newcommand{\Output}{Y}

\newcommand{\MeasurableOutput}{\mathrsfso{\Output}}

\begin{document}

\title{\papertitle}

\author{Derek S. Prijatelj
    \href{https://orcid.org/0000-0002-0529-9190}{\includegraphics[height=.8em]{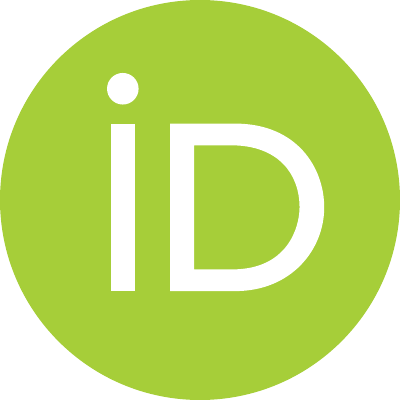}},
    Timothy J. Ireland 
    \href{https://orcid.org/0000-0003-0092-1777}{\includegraphics[height=.8em]{graphics/orcid-icon.pdf}},
    Walter J. Scheirer
    \href{https://orcid.org/0000-0001-9649-8074}{\includegraphics[height=0.8em]{graphics/orcid-icon.pdf}},
    ~\IEEEmembership{Senior Member,~IEEE}
\thanks{Manuscript received April 09, 2025;}
\thanks{Derek S. Prijatelj was with the Department of Computer Science and Engineering, University of Notre Dame, Notre Dame, IN 46556, USA (email: dprijate@nd.edu, website: \url{https://prijatelj.github.io/}).}
\thanks{Timothy J. Ireland contributed as an independent researcher (email: 4dimensionalcube@gmail.com, website: \url{https://4dimensionalcube.info/}).}
\thanks{Walter J. Scheirer is with the Department of Computer Science and Engineering, University of Notre Dame, Notre Dame, IN 46556, USA (email: walter.scheirer@nd.edu, website: \url{https://wjscheirer.com/}).}
\thanks{This research was sponsored in part by the National Science Foundation (NSF) grant CAREER-1942151 and by the Defense Advanced Research Projects Agency (DARPA) and the Army Research Office (ARO) under multiple contracts/agreements including
HR001120C0055, W911NF-20-2-0005, W911NF-20-2-0004, HQ0034-19-D-0001, W911NF2020009.
The views contained in this document are those of the authors and should not be interpreted as representing the official policies, either expressed or implied, of the DARPA or ARO, or the U.S. Government.}
}

\markboth{IEEE Transactions on Information Theory,~Vol.~?, No.~?, April~2026}%
{Shell \MakeLowercase{\textit{et al.}}: A Sample Article Using IEEEtran.cls for IEEE Journals}

\IEEEpubid{0000--0000~\copyright~2026 IEEE}

\maketitle

\begin{abstract}
    A machine that learns a task from observations must encounter and process uncertainty and novelty,
especially when it is to maintain performance when observing new information and to select the hypothesis that best fits the current observations.
In this context, some key questions arise:
    what and how much information did the observations provide,
    how much information is required to identify the data-generating process,
    how many observations remain to get that information, and
    how does a predictor determine that it has observed novel information?
We formalize \textit{identifying information} to answer these questions and synthesize prior works.
Identifying information are bits that verify or falsify a hypothesis as the data-generating process.
In this formalization, we prove the information theoretic characteristics of the computation of hypothesis identification and the resulting sample complexity.
We define hypothesis identification and sample complexity via the computation of an indicator function over a set of hypotheses, bridging algorithmic and probabilistic information.
We detail the sample complexity and its properties for data-generating processes ranging from deterministic processes to ergodic stationary stochastic processes,
which connect the notion of identifying information in finite steps with asymptotic statistics and PAC-learning.
The indicator function's computation naturally formalizes novel information and its identification from observations with respect to a hypothesis set, which detects a misspecified hypothesis set.
We also proved that a computable PAC-Bayes learners' sample complexity distribution is determined by its moments in terms of the prior probability distribution over a fixed finite hypothesis set, and thus
an approximation of the sample complexity distribution is always computable within the desired precision that resources allow.
\end{abstract}

\begin{IEEEkeywords}
    computability,
    identifiability,
    information theory,
    machine learning,
    PAC-Bayes,
    sample complexity,
    statistical learning
\end{IEEEkeywords}


\section{Introduction}

\IEEEPARstart{I}{dentification}
is colloquially defined as
``to determine an
    object or concept to which some observations belong.''
For example, one could identify a person from their portrait or an artist from their distinctive creative style.
The sample complexity of an identification problem is the number of observations required before an identity is determined. 
Often, we do not make an identification with absolute certainty.
In these cases, the sample complexity corresponds to the number of observations required to reach an \textit{a priori} level of certainty regarding the identity.
Most humans have a powerful memory for faces; identification of a known person from a portrait generally requires only a single image, so we would say that the sample complexity of this problem is one.
However, what happens when we can only observe part of the portrait, or when the image is of poor quality?
We concern ourselves with this sort of inference problem, where we want to characterize the amount of evidence required to overcome general identification problems.
As we will see, an extremely broad class of mathematical and statistical problems can be characterized as identification problems and thus addressed with the formalism we present.
This framework will ensure that we have appropriate language and clear assumptions such that the information present in observed phenomena relative to an identification problem can be precisely quantified and modeled.

Thanks to the ubiquity of identification problems, the theory of identification and sample complexity lacks unity and rigor across algorithmic and probabilistic contexts.
This becomes extremely clear when we wish to quantify the information that is contained within the model's description and the observations.
Most texts call models ``identifiable" when they may be differentiated based on observations.  
In the probability literature, this definition frequently includes that their probability distributions differ, as done by~van~der~Vaart~\cite[Eq.~5.34]{vaartAsymptoticStatistics2000}.
While this definition serves as an appropriate starting point, the nuance in how the observations effect model identifiability is 
    not as evident as telling two observations apart and can greatly vary for different hypothesis sets.
This complexity has resulted in some recent work that attempts to directly elucidate model identifiability, such as \citet{lewbelIdentificationZooMeanings2019}.
To identify an abstract set of observations with respect to a proposed model is to determine that the observations are consistent with what would be observed were the observations drawn from that model.
The theory of computability provides a rigorous foundation for identification in deterministic contexts with full observability, as we can treat this sort of identification problem exactly as we would approach the computation of a Boolean indicator function over some domain.
We reinforce the connection between computational theory and identifiable models in statistics by working from computational identification through asymptotic identification up to probably approximately correct identification from statistical learning theory.
    
Statistical learning theory is primarily concerned with the sample complexity, which is the number of observations required to identify a model from some explicit or implicit set of hypotheses~\citep{shalev-shwartzUnderstandingMachineLearning2014}.
Sample complexity as a term seems to originate from \citet{hausslerQuantifyingInductiveBias1988} after \citet{valiantTheoryLearnable1984} introduced what \citet{angluinQueriesConceptLearning1988} and \citet{angluinLearningNoisyExamples1988} called and is now popularly known as Probably Approximately Correct (PAC) learning.
\citet{valiantTheoryLearnable1984} specified PAC with a polynomial computational complexity constraint and an initial sample size constraint to be polynomial in the hypothesis set's cardinality.
Before then, the related term ``sample size'' was used for the samples that meet the desired empirical risk minimization considered in the 1966 work of~\citet{vapnikUniformConvergenceRelative2015} and the 1971 English translation that founded VC-Theory.
The sample size was also included in following complexity analysis works, e.g., \citet{pearlConnectionComplexityCredibility1978a}.
To better clarify identifiability and sample complexity we formalize these concepts in information-theoretic terms in both algorithmic and probabilistic contexts.
This results in a more thorough understanding of the information present in the model's description and how that information is expressed via the observations.

Novelty and the unknown have concerned scholars since antiquity, as evidenced by \citeauthor{platoRepublic1998}'s ``Allegory of the Cave''~\citeyearpar[Book VII]{platoRepublic1998}, \citeauthor{platoApology1999}'s ``Apology''~\citeyearpar{platoApology1999}, and the reasoning about the unobserved or ``nonapparents'' from signs~\cite[Ch.~10]{asmisEpicurusScientificMethod1984}.
There has been a resurgence of academic interest in these topics within the artificial intelligence and machine learning literature over the past three decades, as expressed by \citet{boultUnifyingFrameworkFormal2021, boultUnifyingFrameworkFormal2024}.
Practical problems often involve unseen events with respect to the in-sample observations or prior information, and successful machine learning algorithms need to continue to perform well in the presence of such events.
Novelty is associated with many subtopics in machine learning, including novelty detection~\citep{markouNoveltyDetectionReview2003,markouNoveltyDetectionReview2003a,pimentelReviewNoveltyDetection2014,fariaNoveltyDetectionData2016}, open world learning~\citep{bendaleOpenWorldRecognition2015,pintoMeasuringPerformanceOpenWorld2022,doctorDefiningDomainComplexity2023,langleyOpenWorldLearningRadically2020}, and anomaly, outlier, or out-of-distribution detection~\citep{ruffUnifyingReviewDeep2021,chandolaAnomalyDetectionSurvey2009}.
Novelty is also related to the concepts of ``discovery'' and ``emergence'' in complex systems as studied in statistical and computational mechanics~\citep{crutchfieldCalculiEmergenceComputation1994}. 
We refer to ``novelty'' informally as ``something that is unknown to another thing at a moment in time,'' which is consistent with prior work and for which we further formalize using set theory and information theory~\citep{shannonMathematicalTheoryCommunication1948, coverElementsInformationTheory1991, mackayInformationTheoryInference2003}.
Generalization of the predictor's learned patterns inherently involves incorporation of or robustness to novel information, as evidenced by \citet{zhangRobustPatternRecognition2020}.
We emphasize the presence of novelty throughout our formalization of identifying information and determine the sample complexity to identify both known and unknown models.
Our theory formally addresses identifying novel information with respect to a model and its hypothesis set, 
such that a predictor can detect if its hypothesis set is misspecified, which is a problem that plagues Bayesian estimators and PAC-Bayes learning~\citep{kleijnMisspecificationInfinitedimensionalBayesian2006,kleijnBernsteinVonMisesTheoremMisspecification2012,walkerBayesianInferenceMisspecified2013,wangVariationalBayesModel2019,lotfiBayesianModelSelection2022}.

\subsection{Contributions}
After introducing our notation and problem setting, we present our
information theoretic formalization of the computation of \textbf{hypothesis identification} and the resulting \textbf{sample complexity} distribution
in the following structure:
\begin{enumerate}
    \item In Section~\ref{sec:what_is_novelty},
        we generalize
        the novelty spaces from \citet{boultUnifyingFrameworkFormal2021} by reducing them into a single measure space
        coupled with the observations within that space and transition functions between the spaces to allow their state to affect each other.
        A random variable or stochastic process
        naturally generalizes that measure space connecting to information theory.
        The definition of novelty is dependent upon the indicator function, preparing us to formally discuss novel information as we formalize the computation of hypothesis identification and the sample complexity.
        \item In Section \ref{sec:id_info},
        we generalize the definitions of hypothesis identifiability and sample complexity in terms of information up to stationary ergodic stochastic processes, unifying the algorithmic and probabilistic cases.
        \item  In Section \ref{sec:id_direct}, we prove our definitions hold in the direct observation case whether considering finite or infinite observation sequences or hypothesis sets. We transition to probability through the hypergeometric distribution.
        \item  In Section \ref{sec:id_comp}, we briefly prove the connection between hypothesis identification and computability.
        \item  In Section \ref{sec:finite_over_infinite}, we prove how the identifying information spreads over infinite observations in the practical case of indirect obseravtions and prove a generalization of asymptotic identification.
        \item  In Section \ref{sec:part-id_distrib_sc}, we prove our sample complexity generalization for PAC-Bayes, which lets us prove how to compute the sample complexity until falsification of the hypothesis set given the observations and thus detect a misspecified hypothesis space.
\end{enumerate}

\pagebreak
\section{Notation}
\label{bg:prob}

As this topic is inherently interdisciplinary, we recommend the reader to be familiar with relevant concepts in machine learning, probability, statistics, and computational theory along with their connections to information theory.
A review of the relevant topics is provided in Appendix~\ref{sec:app:bg}.
Note that we use $\entropyRate(X)$ to denote the entropy rate of a process, as in Table~\ref{tab:info_theory_bg}.
See Table~\ref{tab:syn_map} for an approximate synonym mapping of important terms across disciplines, and Appendix~\ref{sec:app:bg} for further details.

We model both the data generating process and the predictor as computational stochastic processes.
The hypothesis set consists of models represented as programs that, when coupled with an entropy source, implement candidate data generating processes for the observations.
The predictor is a program tasked with either identifying the best hypothesis in its hypothesis set or exhaustively falsifying each hypothesis given the observations.
Parameterized models form families of processes, as in the case of parametric statistical models.
It is important to note that not all parametric models used in statistics have identifiable parameters by design, which makes determining identifiable subsets important for learning equivalent hypotheses or their parts.
We often refer to identifying a hypothesis, which is equivalent to identifying a model, a parameter for a parametric model, or an algorithmic description.

We often concern ourselves with binary strings where a bit is within the Boolean domain $\B \triangleq \{0,1\}$.
The set of $L$ length bit strings are denoted $\B^L$ and the set of arbitrary length bit strings are denoted with a Kleene star $\B^* \triangleq \cup_{i \ge  0} \B^i$.
The space of the computable numbers $\K$ is the union of the computable irrationals with the rationals $\Q \triangleq \frac{\Z}{\Z^+}$, where $\Z$ is the set of integers,
    $\Z^+$ is the set of positive integers $\{1, 2, 3,...\}$.
$\N$ is the natural numbers starting at $0$, $\N \triangleq \{0\} \cup \Z^+$. 
The proper subset relationship chain from $\N$ to complex numbers $\C$ is
$
    \N \subset \Z \subset \Q \subset \K \subset \R \subset \C
$.

\begin{figure*}[t]
    \centering
    \subcaptionbox{
        $\PS^2$ of $(a, b)$
        \label{fig:prob_simplex:2}
    }[.2\textwidth]{
        \centering
        \includegraphics[width=.8\linewidth]{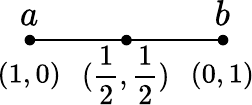}
    }
    \hfill
    \subcaptionbox{
        $\PS^3$ of $(a, b, c)$
        \label{fig:prob_simplex:3}
    }[.3\textwidth]{
        \centering
        \includegraphics[width=.9\linewidth]{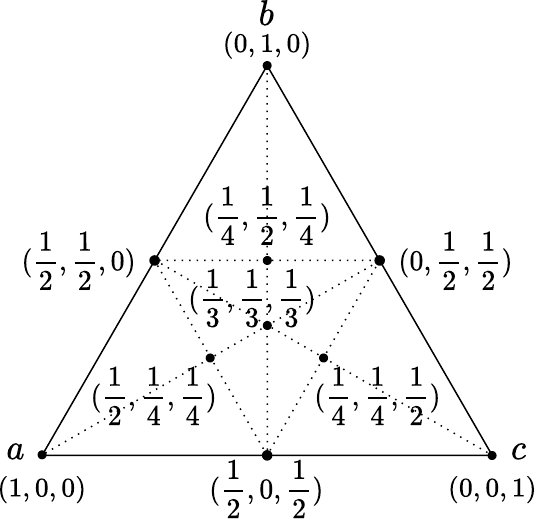}
    }
    \hfill
    \subcaptionbox{
        $\PS^4$ of $(?, a, b, c)$
        \label{fig:prob_simplex:4}
    }[.33\textwidth]{
        \centering
        \includegraphics[width=.9\linewidth]{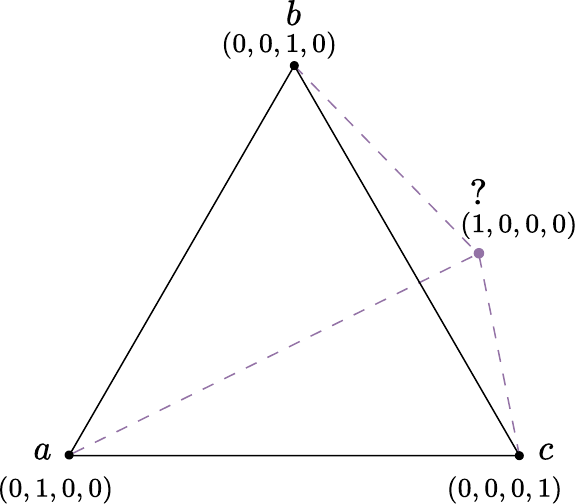}
    }
    \caption{
        \small
        Example probability vectors in their Barycentric coordinates of the probability simplex $\PS^k$ of $k$ mutually exclusive symbols.
        The traditional empirical process's $\PS^k$ of
            $k$ known symbols
            is a subspace of the empirical process's $\PS^{k+1}$ with a symbol `$?$' to represent unknown observations.
    }
    \label{fig:prob_simplex}
\end{figure*}

We rely upon the common set-theoretic definitions of probability and random variables as by \citet{doobMeasureTheory2012} and \citet{taoIntroductionMeasureTheory2011}.
We often refer to a discrete $(\MeasurableInput, \Sigma_\MeasurableInput)$-valued random variable $X$ with its associated probability triple $\TripleInput$ and its sample function $S_\Input: [0, 1] \mapsto \Omega_X$.
Random variables generalize deterministic variables, which can be viewed as random variables with an event space containing only a single element.
We use random variables to define the unknown and potentially random observed states at each time-step.
Given our focus on computable learning, we will consider discrete sequences of observations and define probability distributions over those discrete events.
The probability simplex $\PS^k$
    that contains the possible probability mass function, as a vector $p$, of some discrete random variable with support of $k$ mutually exclusive symbols, 
    is defined as
\begin{equation*}
    \mathbb{P}^k \triangleq
    \bigg\{p:
        \sum^k_{i=1} p_i = 1
        ,~ p_i \in [0, 1] 
        ,~ \forall i \in \{1,2,..., k-1, k\}
    \bigg\}
\end{equation*}
where $k$ is the number of classes
    in a classification task.
$\PS^k$ is equilateral in that all edge lengths are equal.
Figure~\ref{fig:prob_simplex} visualizes example probability vectors in their Barycentric coordinates of $\PS^k$.

\begin{table}[b]
    \centering
    \caption{\textsc{Approximate Interdisciplinary Synonym Map}}
    \fontsize{10}{10}\selectfont
    \def\arraystretch{1.2}
    \begin{tabular}{l|l|l|l}
        \textbf{Probability \& Statistics} &
        \textbf{Statistical Learning} &
        \textbf{Machine Learning} &
        \textbf{Computational Theory}
        \\
        \hline
        identifiable statistical model
        &
        identifiable hypothesis
        &
        identifiable concept
        &
        computable function
        
        \\
        \hline
        model/hypothesis selection,
        &
        hypothesis selection,
        &
        converged model,
        &
        estimated function
        \\
        converged model/estimator
        &
        learned concept
        &
        learned concept
        &
        \\
        \hline
        
        model,
        stochastic process
        &
        hypothesis
        &
        parameter coordinates
        &
        program/algorithm
        
        \\
        \hline
        parameterized model,
        set of processes
        &
        hypothesis set/space
        &
        parameter space
        &
        equivalent algorithm set
        \\
    \end{tabular}
    \label{tab:syn_map}
\end{table}

In this work, we consider stochastic processes that represent a time-ordered series of unknown, potentially random, quantities represented as an ordered sequence of random variables.
Let all processes be referenced by their latent state sequence $X$ with respect to time, or their latent state set $\Grave{X}$ with respect to its own index set.
Let $\vvec{X}^{L}_t$ indicate an $L$ length time ordered sequence possibly with repetition that \textit{inclusively} starts at the index in the subscript, where
$\vvec{X}^{1}_t = \vvec{X}_t$.
To refer to $L$ prior states, $\vvec{X}^{-L}_t$ is the ordered $L$ length sequence ending \textit{exclusively} at $t$.
$\vvec{X}^0$ is the empty set.
Boldfaced letters, $\vvec{\mathbf{X}}^t_1$, represent a set of time index-aligned processes over $\MeasurableInput^t$.
The length of bits $\ell(\vvec{x}^t_1) \neq t$ when $\MeasurableInput \neq \B$.
The general definition of a process as per \citet{bellmanAdaptiveControlProcesses1961} is as follows in our terminology and notation.
\begin{definition}
\label{def:process}
A \textbf{process} $\vvec{\Input}$
    consists of the following
    parts in order of dependency in definition,
where it is a \textbf{deterministic process} if there is no randomness in either the states or the transition function.
Otherwise, 
    the process is considered a \textbf{stochastic process}.
\begin{enumerate}[noitemsep,nolistsep]
    \item \textbf{Time} $t \in \Time : \Time \subseteq \Z$;
        There is some index-set that represents time.
        As we focus on computable processes, this is discrete time.
        
    \item \textbf{Latent state set} $\Grave{\Input} \triangleq \{\Grave{\Input}_1, \Grave{\Input}_2, ..., \Grave{\Input}_m \}$;
        A set of $m$ potentially random variables that each serve as a unique state of the (stochastic) process
        where a state may possibly repeat in the process throughout time. 
        This set may be indexed by some index-set, which is not necessarily time.
        
    \item \textbf{Observations} $\vvec{\sampleInput}^{L}_t \triangleq (\vvec{\sampleInput}_i = S(\vvec{X}_i) : i \in (t, t+1, ..., L-1, L))$;
        the observed states of a process is the time-ordered sequence of samples of the latent states.
        If the latent state is deterministic, then, in the computable setting, the observed state is the output of a Turing machine given the latent state description.
    
    \item \textbf{Transition function}
        $T_{\Input}(t, \vvec{\Input}^{-L}_t, \vvec{\sampleInput}^{-L}_t) = \Input_t$;
        The transition function is 
            a map of the current state and all necessary dependent prior states to the next latent state.
        $T_\Input$ may be a random function, meaning there could be inherent uncertainty independent of the function's parameters that affects determining the next state.
\end{enumerate}
\end{definition}

\begin{figure*}[t]
    \centering
    \includegraphics[width=.9\linewidth]{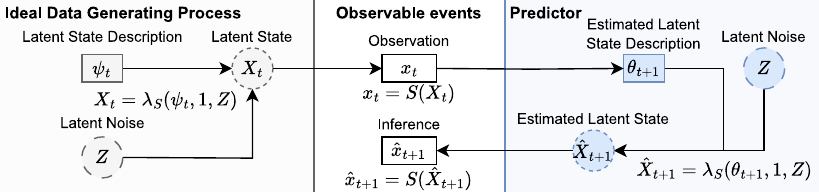}
    \caption{
        \small
        A single time-step of a predictor updating its estimated hypothesis that best describes the observations up to this point in time.
    }
    \label{fig:infer}
\end{figure*}

We treat time-step $0$ as the time of the predictor's initialization with any prior knowledge and time-step $1$ is the predictor's first observation.
    When the model is a fixed description and latent state independent of time and prior state, then the result is an
    i.i.d.
    random variable, or a repeated sequence of a constant if deterministic, as the transition function is then the identity function given the latent state.
The ideal data-generating process is denoted by $X$ and is independent of the predictor.
We use the hat notation to indicate the predictor's estimated versions of the ideal elements, such as an inferred observed state $\hat{x}$ and the estimated random variable $\hat{X}$.
The latent state descriptions of are denoted by $psi$ for the ideal and $\theta$ for the predictor, where the predictor's hypothesis set is $\Theta$.
In the direct observation case $\Theta = X$.
The descriptions are often probability distributions in $\PS^k$.
Figure~\ref{fig:infer} depicts this notation in use for a single time-step of a predictor updating its internal state.

\section{Problem Setting \& Statement}

This paper is concerned with the computation of hypothesis identification and the resulting sample complexity distribution.
The sample complexity informs how many more observations are required for the predictor to satisfy \textit{a priori} probability and correctness thresholds to identify a hypothesis, i.e., either verify that a hypothesis in the set satisfactorily describes the observations within the accepted probability and correctness measures or falsify all of the hypotheses in the set given the observations.
The sample complexity \textit{distribution} is the probability distirbution over the number ($\N$) of observations that remain until hypothesis identification occurs.
The sample complexity is determined by the amount of information the future possible observations provide to identify the hypotheses from each other.
There is also the issue of a \textit{misspecified} hypothesis set,
which is when the ideal process is not adequately described by any hypothesis in the hypothesis set
~\citep[Ch.~8.5]{ghosalFundamentalsNonparametricBayesian2017} and is known  in statistical learning as ``agnostic learning'' where the predictor's best possible minimum error is greater than zero~\cite[Def.~3.3]{shalev-shwartzUnderstandingMachineLearning2014}.
The global minimum error can be zero only if the correct hypothesis is in the set, 
which is referred to as ``realizable learning'' in statistical learning~\cite[Def.~2.1]{shalev-shwartzUnderstandingMachineLearning2014}.

The problem in the current literature across statistical learning and other mathematical modeling fields is that the sample complexity distribution is only ever indirectly estimated or only its properties are considered, such as bounds on its expected value.
Existing work rarely considers sample complexity in as general a case as stationary ergodic stocahstic processes.
We prove the sample complexity can be computed within the \textit{a priori} accepted probability and correctness measures for a predictor's learning of such processes and are able to do so by considering the computational constraints of machine learning.
With the sample complexity defined for a well-specified hypothesis set, we can then use it to inform us when the observations are atypical to the hypotheses considered, letting us detect novelty and thus know when the hypothesis set is misspecified.
Before formalizing identification and the sample complexity within terms of computation and information theory, we address the novelty in terms of measure theory and its computation using the indicator function, and then we define the common objects used throughout the paper.

\subsection{What is Known, Unknown, and Novel?}
\label{sec:what_is_novelty}

To construct a generic space in which uncertainty and novelty occur for a phenomenon,
    we use a straightforward generalization of the spaces from
    \citet{boultUnifyingFrameworkFormal2021}.
This generalizes their three spaces for the world, perception, and agent internals into one space that is necessary and sufficient by definition to measure novelty that occurs within that space over time, as long as the space and measure capture the novelty of interest.
Having this abstracted space in which novelty may occur allows for using it as a component for specifying an arbitrary combination of such spaces to discuss what is novel where and with respect to which observer or what evidence.
\begin{enumerate}
    \item
    \textbf{State Space} $(\MeasurableState, \MeasurableSigmaState)$;
        The space in which the states of the phenomenon occur is a measurable space which consists of a set of possible observations $\MeasurableState$ and a $\sigma$-algebra on that set which specifies the subsets of that set that can be measured~\citep[Ch.~1.4]{taoIntroductionMeasureTheory2011}. 
    \item
    \textbf{Dissimilarity Measure} $\Dissim_{\MeasurableState}$: $\MeasurableSigmaState \mapsto [0, +\infty)$;
        the quantified dissimilarity between two states.
        The dissimilarity measure may be selected to tell apart all types of state in the measurable space, or may be chosen to correspond to the task to ignore task-irrelevant changes in the state.
        A non-binary measure may have a threshold or similar mapping applied to acquire a binary measure of dissimilarity, which may be represented by an indicator function.
        Notably, a measure on the measurable space forms a measure space $(\MeasurableState, \MeasurableSigmaState, \Dissim_{\MeasurableState})$~\cite[Ch.~1.4]{taoIntroductionMeasureTheory2011}, \cite[Ch.~3.1]{doobMeasureTheory2012}.
    \item
    \textbf{Observations} $\vvec{s}^{t}_1$
        a sequence of observations that serves as a recorded history of previous states within $(\MeasurableState, \MeasurableSigmaState)$.
        Ideally, this is a map of time to prior states, but a dataset as a mathematical set is still experience simply without the samples ordered by time within, thus unable to say which sample came before another, but if future data occurs, then such a relative ordered comparison may be made.
        Time could be discrete resulting in the experience as a time-indexed tuple of states, or a tensor where one dimension represents time as in \citet{boultUnifyingFrameworkFormal2021}.
\end{enumerate}
These three elements allow one to determine if a state is novel relative to the observations of the phenomenon, which is directly mapped to the data of the observed phenomenon as the experience $\DataInput$ within a measure space $(\MeasurableInput, \MeasurableSigmaInput, \Dissim_{\MeasurableInput})$.
Notably, the probability space $\TripleInput$ defined before is a (probability) measure space, and there can be different measures on the same measurable-space forming different measure spaces.
When combined with a model of the observed phenomenon as the random variable $\hat{X}$, novelty of future observed events relative to the predictor may be expressed in both probability and information theory.
If the model is instead the ideal $\IdealInput$, then this is the omniscient evaluator knowledge of the phenomenon and novelty.

Each space defined by \citet{boultUnifyingFrameworkFormal2021} may be defined individually using a measure space and observations of events within that space.
Then a set of functions are specified that maps these state spaces to each other, and, in doing so, defining the dynamics of the system.
Modeling a system this way enables constructing as simple or as complex a model as desired for either the ideal system
    or the predictor's learned system.
The simplest case involves only two spaces, the ideal system (world) and a predictor (agent), where the channel or measurements used for observation are perfect,
meaning no added or irrelevant information in transfer from the observations
to the predictor, or in the predictor's chosen actions to a subset of the ideal if the predictor has agency.
This may be seen as the teacher-student pairing in teacher student learning~\cite[ch. 19.3.7]{murphyProbabilisticMachineLearning2022}.
We focus on this case where the predictor is to learn the description of exactly what it observes, without its internal state or outputs effecting what it observes.

\subsection{On the Indicator Function and Determining Novelty}
With the connection of \citet{boultUnifyingFrameworkFormal2021} to measure theory, we can discuss what it means to measure novelty.
The simplest case of determining known and unknown events is if the events are in the known set or not, which is determined by the observations and any prior knowledge incorporated into the model of that space.
The set of unknown events is then any event that is not in the predictor's current known set,
which captures the relative nature of the unknown and novelty to what is known at a moment in time.
An \textit{indicator function} $\mathbf{1}(X_t, x)$ determines if an event $x$ is known or unknown at a moment in time with respect to some known set $X_t \subseteq \MeasurableInput$, and thus an event is to be considered novel when not in the known set
\begin{equation*}
\label{eq:indicator}
    \mathbf{1}(X_t, x) = 
    \begin{cases}
        1 & \text{if~~} x \in X_t \\
        0 & \text{if~~} x \notin X_t
    \end{cases}
\end{equation*}
A novel event is then simply an event that is unknown at a moment in time.
Recalling our informal definition of novelty in the introduction, ``something is new to another thing,'' the first ``something'' is the observed event, and  ``another thing'' is a set of known events, such as the predictor's event space or an evaluator's record of observations.
The temporal aspect ``new'' is relative to the current time-step $t$ of $X_t$.
We use $\mathbf{1}(\vvec{X}, \vvec{x})$ for a single process and $\mathbf{1}(\vvec{\mathbf{X}}, \vvec{x})$ for a set of processes to indicate if the observed sequence belongs to any of the processes' sample spaces.
More often we will use this function input signature for the first two inputs of the sample complexity function $\mathbf{i}(\cdot)$ as later defined in Definition~\ref{def:sample_complexity}.

The unknown truth value of a logical statement in a moment in time has been considered at least as early as \citet{aristotleOnInterpretation}.
He considered the undetermined state of logical expressions due to not enough information being currently available to determine whether the statement was either true or false.
This may be viewed as an ill-defined problem, and it is in terms of deductive logic as the information is incomplete.
In inductive inference, it is important to consider how new information updates that which is already known and if a statement's incomplete information has become complete or complete enough to make a decision with high certainty.
When something is uncertain that means the actual value is unknown and the degree of certainty may be expressed in terms of probability or likelihood. 
Something uncertain is currently undetermined, regardless of whether the system is deterministic or nondeterministic.
Uncertainty is about a lack of complete information.
A degree of certainty is about the presence of partial information.
We use the following terminology:
\begin{itemize}
    \item \textbf{Verified:}  logical expressions are determined to be true.
    \item \textbf{Falsified:} logical expressions are determined to be false.
    \item \textbf{Undetermined:} logical expressions are yet to be determined as true or false.
\end{itemize}
If a model has yet to be falsified, then it is not to be discarded as it may still be true, as per the \textit{principle of multiple explanations}~\cite[Ch.~18]{asmisEpicurusScientificMethod1984}.
In line with this fallibilistic perspective~\cite{haackTwoFallibilistsSearch1977,buchler2014}, we use falsification akin to \citet{popperLogicScientificDiscovery2002} as to be able to falsify a known model or hypothesis is crucial to know that the evidence indicates there is an unknown model yet to be discovered that better describes the observations.
A useful distinction between undetermined and undefined expressions is that undetermined is yet to be determined, but still can be determined, while undefined is the absence of a description to express an expression's semantics.
Undefined expressions include $\frac{a}{0} : a \in \R$, or the probability of some event given the empirical process upon initialization without any prior occurrences observed.
In Section~\ref{sec:id_info}, we modify the algorithm that computes an indicator function to measure the distinct states of complex objects represented as strings within some language and extend this concept to asymptotic statistics and PAC learning.
Measuring novelty then turns from a Boolean measure to a counting measure and then finally to a general computable measure of information.

\subsection{
    Identifying Information and the Sample Complexity
}
\label{sec:id_info}

\begin{figure*}[t]
    \centering
    \includegraphics[width=.9\linewidth]{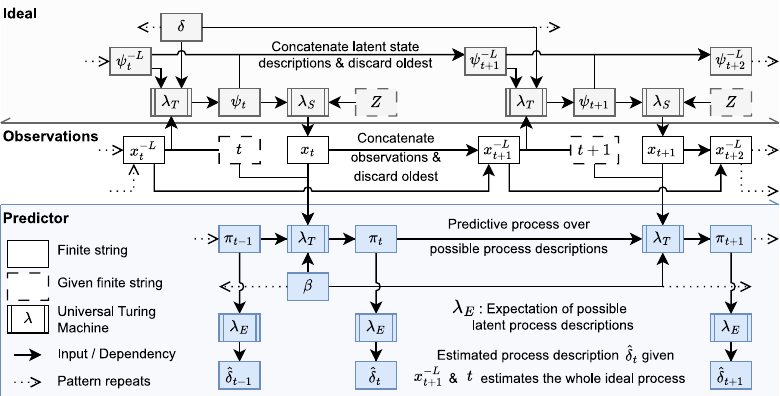}
    \caption{
        \small
        The computation of the predictor updating its internal state given observations of the unknown ideal process.
    }
    \label{fig:cbn_observe}
\end{figure*}

\label{sec:id_info_defs}
We define the concept of identifiable hypotheses in terms of relative information in order to quantify the degree to which a hypothesis may be verified or falsified within some system.
We aim to construct these definitions in such a way that there is an intuitive connection between identifying information and the computation of the indicator function.
Let $\lambda$ be a Turing machine that computes which element $\theta$ in a set of hypotheses $\mathbf{\Theta}$ most probably generated observations $\vvec{x}^{t}_1$.
Note that $t \ge 1$ is the total number of samples, and the computation of the sample function is denoted as $\lambda_S(\theta, t, \vvec{z}) = \vvec{x}^t_1$ where $\vvec{z} \sim Z$ is an entropy source that may be ignored and will be covered in depth in Section~\ref{sec:finite_over_infinite}.
We use the set $\vvec{\mathbf{X}}^t_1 = \{\vvec{X}^t_1 = \lambda_S(\theta, t, Z) : \theta \in \mathbf{\Theta} \} $ to represent a set of hypothesized observable processes with descriptions in $\mathbf{\Theta}$.
To simplify our state representation, we consider Turing machines with separate input, memory, and output tapes, which are reducible to single tape Turing machines~\cite[Ch.~6.3.3]{minskyComputationFiniteInfinite1967}.
In the following, the input tape will always contain exclusively the description of the output.
We focus on finite descriptions and finite memory tapes, but will address how relaxing these restrictions can affect identification.
Of particular interest are scenarios with infinite output but finite input, e.g., machines that output the digits of computable irrational numbers.
We also let the computation time be unbounded so we can focus on when and how a finite sample complexity can be ensured,
noting that identification with absolute certainty can only occur when the time to identify is surely finite, i.e., with surely finite sample complexity.

Figure~\ref{fig:cbn_observe} depicts the process of a predictor updating its internal state with each new observation of the ideal process.
$\beta$ is the string description of the predictor's learning algorithm, which remains constant.
$\pi$ is the predictor's description of its internal state, e.g. a representation of a posterior over the hypothesis set for a Bayesian predictor.
The computation of the expected hypothesis in the set, represented $\lambda_E$, produces a description $\hat{\delta}$ which can be used alongside our sampling function for inference. 
However, we focus primarily on how the predictor's internal state converges in the case of verifying a hypothesis or fails to converge in the case of falsifying the hypothesis set given the observations.

We start with the common point identifiable parameter of a statistical model as per van~der~\citet[Eq.~5.34]{vaartAsymptoticStatistics2000} and extend that to all the parameters (hypotheses) in a set.

\begin{definition}
\label{def:id_param}
    (van~der~\citet{vaartAsymptoticStatistics2000},~Eq.~5.34)
    A $\mathbf{\Theta}$ parameterized family or set of probability distributions defined by the model $\mathbf{P}_\mathbf{\Theta}:\mathbf{\Theta} \times \MeasurableInput \mapsto [0,1]$
    is said to have a (point) \textbf{identifiable parameter} $\theta_i \in \mathbf{\Theta}$
    when
    $\theta_i$ results in a unique probability distribution different from all other parameters' resulting probability distributions,
    \begin{equation*}
        \exists (\theta_i \in \mathbf{\Theta}), \forall (\theta_j \in \mathbf{\Theta}: \theta_i \neq \theta_j ) :
            P_{\theta_i} \neq P_{\theta_j}
    \end{equation*}
\end{definition}

\begin{definition}
\label{def:id_space}
    When
        all of the parameters in the set are identifiable parameters, the model has a \textbf{fully identifiable parameter set} $\mathbf{\Theta}$
    where
         every parameter results in a unique probability distribution.
    \begin{equation*}
        \forall (\theta_i, \theta_j \in \mathbf{\Theta}) :
            \theta_i \neq \theta_j \iff P_{\theta_i} \neq P_{\theta_j}
    \end{equation*}
\end{definition}
    
\citet{lewbelIdentificationZooMeanings2019} in Section~3.2
provides a more nuanced definition of identifiable, which we redefine in our own words here, emphasizing that what makes it identifiable is that the parameter $\theta_i$ \textit{is uniquely determined by the observations} and thus the identifiable parameters are \textbf{not} \textit{observationally equivalent} to other parameters.
\begin{definition}
    A set of hypotheses are
    \textbf{observationally equivalent} with respect to some observations
        if no hypothesis in the set can be verified or falsified via those observations, given some dissimilarity measure $D$ of the observations and hypotheses.
\end{definition}
\begin{remark}
Synonyms are ``unidentified'', ``undetermined'', ``indistinguishable'', and ``degenerate''.
In physics, two or more states are ``degenerate'' when they have the same measurement, which is often their energy in quantum mechanics~\cite[Ch.~4.1.3]{sakuraiModernQuantumMechanics2020}.
The measure provides a different enumeration of the distinct states and if two states have the same measurement, then they are observationally equivalent with respect to that measure.
In machine learning, we always want to find the observationally equivalent set of best performing models given our performance measures.
We can then find the most resource efficient models within that set, and if we can identify correct parts shared across that hypothesis subset, then we can better understand the structure of the underlying concepts.
\end{remark}

\begin{definition}
   An observationally equivalent subset of parameters, while not identifiable amongst themselves, may together be identifiable from some larger set of parameters. In this case that subset is deemed \textbf{partially identifiable} within the larger parameter set.
\end{definition}

We note that the dimensionality of observations does not appear in these definitions; thus, we can replace the distributions with stochastic processes without loss of generality. 
Identification requires only the existence of discoverable information that differentiates distinct processes from each other.
This information might be contained directly in the observable output.
Alternatively, it could be latent, e.g., describing the sampling procedure to generate the observations or indexing a model from a universe of possible models.
In either case, we call this the \textit{identifying information}.
We quantify this identifying information by measuring the number of observations that would be required by an effective procedure that determines whether a process is a member of a given set. 
In the case of latent identifying information as in Section~\ref{sec:finite_over_infinite}, we illustrate how that information can be spread over an infinite process and formalize the notion of a probabilistic procedure for identification.
We can then examine the distribution of the observations required to satisfy a probabilistic decision criterion.
We now introduce the deterministic case of identification, which we further generalize later.

\begin{definition}
    \textbf{Identification} is the verification of an observed element's membership in a set, either by verifying that it is equal to a member of the set, or by exhaustively falsifying equality with all members of the set. 
\end{definition}
\begin{definition}
\label{def:partial_id}
    \textbf{Partial identification} is the falsification of an observed element's equality with a strict subset of elements, such that the complement of that subset contains elements that have yet to be falsified or verified. This complement is the \textbf{partially identified subset}.
\end{definition}
In the context of novelty, what is unknown is determined by exhaustive falsification of what is known.
Novelty can then be described in terms of a series of symbol comparisons, where candidates in the known set are falsified one symbol at a time until novel information is encountered, addressed in Section~\ref{sec:id_nov_direct} after covering the computation of direct identification.
Probable novelty is more nuanced and is covered in Section~\ref{sec:prob_exhaust_false}.

For now, we define sample complexity in its most general form for stationary ergodic processes that includes as special cases the deterministic case, asymptotic case, and PAC case of identification, all of which we address in the following subsections.
Our generalized sample complexity definition
relies on typical sets, which are sets of observation sequences with probability that are representative samples of the block entropy of that probability distribution, i.e., the log probability of the sequences is near that block entropy.

\begin{definition}
(\citet{coverElementsInformationTheory1991}~Ch.~3.1)
The \textbf{typical set} $A^{L}_\epsilon : L \in \Z^+$ with respect to $P(X^L_1)$ is the set of observable sequences $x^L_1 \in \MeasurableInput^L$ with the property
$$2^{-L(\entropyRate(X)+\epsilon)} \le P(X^{L}_{1} = x^{L}_{1}) \le 2^{-L(\entropyRate(X)-\epsilon)}$$
\end{definition}

Thresholds $p$ and $q$ define two typical sets $\mathcal{A}_p \subseteq \mathcal{A}_q$, where equal only if $p=q$, for a known process where the most probable sequences all have probability with some deviation from the entropy rate of the process~\cite[Ch.~3]{coverElementsInformationTheory1991}.
Figure~\ref{fig:typical_sets} shows an example of the typical set and atypical sets over the probability measure of the observations.
\citet[Ch.~16.8]{coverElementsInformationTheory1991}
proved that typical sets exist for stationary ergodic stochastic processes.

\begin{figure*}[t]
    \centering
    \subcaptionbox{
        \small
        $p>q \!~\cup~\! |\Theta| > 1 \!\!\implies\!\! \exists$ undetermined set.
        \label{fig:typical_sets:neq}
    }[.4\textwidth]{
        \centering
        \includegraphics[width=.9\linewidth]{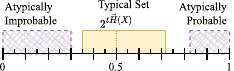}
    }
    \hfill
    \subcaptionbox{
        \small
        Typical set thresholds over total observations.
        \label{fig:typical_sets:plot}
    }[.592\textwidth]{
        \centering
        \includegraphics[width=.9\linewidth]{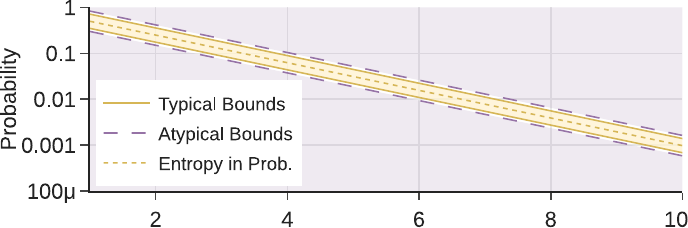}
    }
    \caption{
        \small
        The sample complexity's typical set thresholds for a single hypothesis of a fair coin with $p=0.7$, $q=0.6$ over $P(\vec{X}^t_1 = \vec{x}^t_1)$.
        Figure~\ref{fig:typical_sets:neq} depicts the typical and atypical sets at $t=1$ bounding the block entropy in probability space. 
        Figure~\ref{fig:typical_sets:plot} shows the bounds over 10 observations with log scaled probability.
        Observation sequences within the typical set support the hypothesis.
        $p$ and $q$ respectively determine the accepted probability of verifying a hypothesis and rejecting all hypotheses.
        If $p > q$ or $|\Theta| > 1$, then there exists an undertermined set of observations where more samples are required to determine set membership.
    }
    \label{fig:typical_sets}
\end{figure*}

\begin{definition}
\label{def:sample_complexity}
    For stationary ergodic processes with information dissimilarity measure $D$ and an accepted measure error $\epsilon_D \in \mathbb{K}, 0 \le \epsilon_D < \infty$,
    the \textbf{sample complexity}
    $\mathbf{i}(
        \vvec{\mathbf{X}}^*_1,
        \vvec{x}^t_1,
        p, 
        q, 
        \epsilon_D, 
        r
        ) \mapsto \N$
    is the number of query string symbols $\vvec{x}^t_1 \in \MeasurableInput^* : \ell(\vvec{x}^t_1) \le -\log_2(r), r \in \K, 0 \le r \le 1$
    observed when one of the following mutually exclusive cases is first satisfied:
    \begin{itemize}
        \item 
        verified set membership with probability $p$ where 
        $\exists(\vvec{X}^*_1 \in \vvec{\mathbf{X}}^*_t, \psi \in \vvec{X}^*_1) : D(\psi, \vvec{x}^t_1) \le \epsilon_D$
            and
            $\exists (\vvec{X}^*_1 \in \vvec{\mathbf{X}}^*_1):
                2^{t\entropyRate(\vvec{X}) + \log_2(p)}\le P(\vvec{X}^*_1 = \vvec{x}^t_1) \le 2^{t\entropyRate(\vvec{X}) - \log_2(p)}
                , p \in \K, 0 \le p \le 1$,
        \item 
        falsified set membership with probability $q$ where
        $\forall(\vvec{X}^*_1 \in \vvec{\mathbf{X}}^*_t, \psi \in \vvec{X}^*_1) : D(\psi, \vvec{x}^t_1) > \epsilon_D$
        or
        $\forall (\vvec{X}^*_1 \in \vvec{\mathbf{X}}^*_1):
            \big(
            P(\vvec{X}^*_1 = \vvec{x}^t_1) \le 2^{t\entropyRate(\vvec{X}) + \log_2(q)}
            \big) \cup \big( 2^{t\entropyRate(\vvec{X}) - \log_2(q)} \le P(\vvec{X}^*_1 = \vvec{x}^t_1) \big)
            , q \in \K, 0 \le q \le p$,
    \end{itemize}
\end{definition}
\begin{remark}
An alternative name could be the ``\textit{identification complexity}'' as it is fundamentally about how many observations are required in order to identify whether the observations belong to an element in the hypothesis set; however, we use the common term for consistency with prior works.
Also, the set of stochastic processes $\vvec{\mathbf{X}}^*_1$ may result in the sample complexity as a distribution over $\N$ as discussed in the following sections and generalizes the deterministic case of a set of strings $\MeasurableInput^*$.
\end{remark}

If $p=0$ or $\epsilon_D \rightarrow \infty$, then any model may be used without any observations as certainty and correctness are not of concern, respectively,
and accepting $r = 1 \implies$ accepting $p=0$ and the possibility $\epsilon_D \rightarrow \infty$.
$p = q =1$ requires absolute certainty. 
$\epsilon_D = 0$ requires absolute correctness.
Both imply only non-partial identification is accepted.
These default values, along with unbounded observations $-\log_2(r) \rightarrow \infty$ when $r = 0$, may result in a nonhalting program that attempts to compute the sample complexity.
We are most interested in when a finite sample complexity occurs without enforcing a maximum with $r > 0$, and so we consider the default value $r=0$.
Probability $q$ is for probable exhaustive falsification when we cannot have absolute certainty.
We refer specifically to the sample complexity or its properties with at least probability $p = q = 1$, and at most error $\epsilon_D = 0$ with respect to the \citet{hammingErrorDetectingError1950} distance up until asymptotic sample complexity in Theorem~\ref{th:asymp_sample_complexity}, where after we use the relative entropy.
\section{Identifying Information from Direct Observations}
\label{sec:id_direct}
We now prove properties of identification and sample complexity
    when the set of known hypotheses $\mathbf{\Theta}$ is also the sample space of the observed string $\theta$.
In this case, there is no hidden or latent state space as the hypothesis space \textit{is} the observed sample space.
Note that $\mathbf{\Theta}$ is a set meaning all the strings are unique; however, in order to compute set membership, the comparison of the index-aligned symbols in the strings requires there to be a chosen order of comparison.
Every set that is computable is fully identifiable when the strings in the set are what is observed, as we prove here and later, but they may not be fully identifiable when what is observed is the output of a computable function of the set's strings.
Given computation is our concern, these sets will often be manipulated as sequences of their elements.

We assume without loss of generality that strings are parsed from left-to-right.
A set of strings parsed in such a consistent way can be represented as a prefix-tree, analogous to a trie data structure~\citep{knuthArtComputerProgramming1998}.
This is known as a ``context-tree model'' as introduced by \citet{rissanenModelingShortestData1978,rissanenUniversalDataCompression1983}.
Algorithm~\ref{alg:id_sorted_set} computes $\mathbf{i}(\mathbf{\Theta}, \theta, 1, 1, 0, r)$ as the modified indicator function over a sorted set of binary strings that corresponds to a context-tree model,
where $p$ and $q$ are not able to be changed as parameters of the algorithm.
Depth and breadth first search variants are possible without assuming $\mathbf{\Theta}$ is sorted.
Algorithm~\ref{alg:id_sorted_set} is useful for explaining the nuance of identification with absolute certainty and serves as a starting point from which we build upon the more general computations of identification.
    
\begin{algorithm}[t]
\caption{Modified Indicator Function to Identify a Binary String using a Sorted Set}
\label{alg:id_sorted_set}
    \begin{algorithmic}[1]
    \Procedure{Identify}{$\mathbf{\Theta}, \vvec{\theta}, r$} \Comment{sorted set, query string, \& resolution}
    \State $i \gets 1$ \Comment{The information to identify if in or out of set is at least one bit}
    \State $j \gets 1$ \Comment{The index or number of elements checked}
    \State $\vvec{\psi} \gets \mathbf{\Theta}_j$
    \For{$i \gets 1; i \le \ell(\vvec{\theta}) \And i \le -\log_2(r); i \gets i + 1$} \Comment{Incrementally observe bits of $\vvec{\theta}$}
        \State $h \gets j$
        \While{$\vvec{\psi}^{~\!\!i}_1 < \vvec{\theta}^{~\!\!i}_1$} \Comment{Falsify those less than in value}
            \State $j \gets j + 1$
            \State $\vvec{\psi} \gets \mathbf{\Theta}_j$
            \If{$j > |\mathbf{\Theta}| ~||~ \vvec{\psi}^{~\!\!i}_1 > \vvec{\theta}^{~\!\!i}_1$} \Comment{Exhaustively falsified the set's elements}
                \State \textbf{return} $(0, h, i)$
            \EndIf
        \EndWhile
    \EndFor 
    \State \textbf{return} $(1 , h, i)$ \Comment{Verified in set.}
    \EndProcedure
    \Comment{\!Halts $\!\iff\!$ finite subset $\{\vvec{\psi} \in \mathbf{\Theta}\}$: each sharing a finite prefix with $\vvec{\theta}$}
    \end{algorithmic}
\end{algorithm}

\begin{theorem}
\label{th:id_verify}
    (Verify a String's Set Membership by Exhaustive Symbol Comparisons)
    Pairwise equality of a string $\vvec{\theta}$ to another $\vvec{\psi} \in \mathbf{\Theta}$, and thus verification that $\vvec{\theta} \in \mathbf{\Theta}$, can only occur once all symbol pairs are observed and measured as equivalent between $\theta$ and $\vvec{\psi}$.
    \begin{equation*}
        \exists (\vvec{\psi} \in \mathbf{\Theta}) : \big(
        \forall (j \in \{1, 2, ..., \ell(\vvec{\psi})\}) : \vvec{\psi}_j = \vvec{\theta}_j
        \big)
    \end{equation*}
\end{theorem}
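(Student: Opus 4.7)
The plan is to prove this essentially by appealing to the definition of string equality together with the definition of set membership, and then showing that any shortcut in symbol comparison leaves open the possibility of inequality. The statement is really two claims rolled into one: (i) verification of $\vvec{\theta} \in \mathbf{\Theta}$ requires the existence of some witness $\vvec{\psi} \in \mathbf{\Theta}$ with $\vvec{\psi} = \vvec{\theta}$, and (ii) establishing $\vvec{\psi} = \vvec{\theta}$ requires observing and matching every symbol pair, including the length agreement $\ell(\vvec{\psi}) = \ell(\vvec{\theta})$.

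First, I would invoke the set-theoretic definition of membership: $\vvec{\theta} \in \mathbf{\Theta}$ holds iff there exists some element $\vvec{\psi} \in \mathbf{\Theta}$ equal to $\vvec{\theta}$. Since $\mathbf{\Theta}$ is a set of strings, equality is determined symbol-by-symbol after a length check, i.e., $\vvec{\psi} = \vvec{\theta}$ iff $\ell(\vvec{\psi}) = \ell(\vvec{\theta})$ and $\vvec{\psi}_j = \vvec{\theta}_j$ for all $j \in \{1, \ldots, \ell(\vvec{\psi})\}$. So verification of membership with absolute certainty reduces to establishing this full conjunction for at least one candidate $\vvec{\psi}$.

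Next, I would argue the necessity of exhaustive comparison by contradiction. Suppose verification were declared at some step where not all symbol pairs have been observed and measured equal, so there exists an index $j^* \in \{1, \ldots, \ell(\vvec{\psi})\}$ with $\vvec{\psi}_{j^*}$ versus $\vvec{\theta}_{j^*}$ not yet compared. Since the measure $D$ is set to the Hamming distance with $\epsilon_D = 0$ (per the setting stated just before the theorem), a single unverified position leaves the Hamming distance undetermined: the observed evidence is consistent both with the case $\vvec{\theta}_{j^*} = \vvec{\psi}_{j^*}$ and with the case $\vvec{\theta}_{j^*} \neq \vvec{\psi}_{j^*}$. In the latter case $\vvec{\psi} \neq \vvec{\theta}$, so equality cannot have been verified with absolute certainty ($p=1$, $q=1$), contradicting the supposed verification. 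The same argument handles the length check: if $\ell(\vvec{\theta})$ has not yet been fully observed, a longer $\vvec{\theta}$ could still disagree with $\vvec{\psi}$ beyond $\ell(\vvec{\psi})$.

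Finally, I would close by pointing out that conversely, once all pairs have been observed and matched, the conjunction defining equality is fully established, yielding the witnessed $\vvec{\psi}$ that the theorem asserts. The main obstacle, though minor, is being careful about the length condition in the quantifier: the statement ranges $j$ over $\{1, \ldots, \ell(\vvec{\psi})\}$, so I would explicitly note the implicit prerequisite $\ell(\vvec{\theta}) = \ell(\vvec{\psi})$, which Algorithm~\ref{alg:id_sorted_set} enforces by continuing to read symbols of $\vvec{\theta}$ until it either falsifies via the sorted ordering or exhausts the candidate's length. Everything else is a direct appeal to the two definitions and the absolute-certainty regime fixed in Definition~\ref{def:sample_complexity}.
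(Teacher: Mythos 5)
Your proposal is correct, and its core coincides with the paper's: equality of strings is a conjunction over index-aligned symbol pairs (plus length agreement), so certain verification of membership demands that every conjunct be observed and checked. Where you differ is in how the necessity is argued and in what surrounding conditions you treat. You give a clean adversarial/indistinguishability argument by contradiction: an unobserved position $j^*$ leaves the evidence consistent with both $\vvec{\theta}_{j^*} = \vvec{\psi}_{j^*}$ and $\vvec{\theta}_{j^*} \neq \vvec{\psi}_{j^*}$, so absolute certainty ($p=q=1$, $\epsilon_D=0$ under Hamming distance) is impossible before exhaustion. The paper instead proceeds operationally, tracking what Algorithm~\ref{alg:id_sorted_set} must do and under what circumstances it halts: it notes that $-\log_2(r) \ge \ell(\vvec{\theta})$ is needed (else only partial identification), that length mismatch or a length prefix can settle inequality early, that a stopping symbol or length encoding must be assumed available (otherwise one runs into the halting problem), that verification requires $\ell(\vvec{\theta}) < \infty$, and that $|\mathbf{\Theta}| < \infty$ or at least no countably infinite prefix-sharing subset may precede the matching string, else identification stays undetermined. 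Your route buys a sharper logical justification of the ``only once all symbol pairs are observed'' claim, which the paper largely asserts rather than derives; the paper's route buys the computability caveats that say when verification can actually terminate. The only substantive omission on your side is the termination-information point: your length-check remark presumes the lengths are knowable, whereas the paper explicitly flags that without encoded stopping information the comparison itself cannot be brought to a halt. That is a caveat worth adding, not a flaw in your necessity argument.
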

\begin{proof}
    Let there be a binary string $\vvec{\theta}$ in question of belonging to a set of binary strings $\mathbf{\Theta}$.
    To verify or falsify the equality of $\vvec{\theta}$ to each $\vvec{\psi} \in \mathbf{\Theta}$ requires a comparison of their index-aligned symbols and $-\log_2(r) \ge \ell(\vvec{\theta})$.
    Otherwise, when $-\log_2(r) < \ell(\vvec{\theta})$ only partial identification is possible, resulting in a subset of $\mathbf{\Theta}$ whose prefixes match $\vvec{\theta}^{-\log_2(r)}_1$, but no verification with certainty.
    If the lengths do not match, then $\vvec{\theta} \neq \vvec{\psi}$.
    If the strings are prefixed with their length, then this is determined early on by checking first the binary representation of their lengths, where since zero length strings would not be included then a length of ``zero'' could be used to inform an infinitely long string.
    Otherwise, the number of symbol comparisons is as few as the length of their shared prefix to a maximum of as many as $\min(\ell(\vvec{\theta}), \ell(\vvec{\psi}))$ for that string pair, assuming a stopping symbol is encoded in the binary strings.
    Stopping information must be assumed available otherwise we encounter the halting problem.
    Once a different symbol between the strings is observed, that pair's equality is falsified.
    Only once a matching string is found in $\mathbf{\Theta}$, Algorithm~\ref{alg:id_sorted_set} will halt with verification.
    Verification requires that the query string be finite in length in order for the Turing machine to determine pairwise string equality and halt.
    This also requires that $|\mathbf{\Theta}| < \infty$ or that there be no countably infinite subsets that must be checked prior to the comparing the correct string.
    Otherwise, the machine will not halt and identification remains in an undetermined state.
\end{proof}

\begin{theorem}
\label{th:id_falsify}
    (Verify a String is Not in the Set by Exhaustive Falsification of Strings)
    An observed string $\vvec{\theta}$ can only be determined to not belong to a set of strings $\mathbf{\Theta}$ if all the possible strings are determined to not equal $\vvec{\theta}$ due to at least one differing pair of symbols.
    \begin{equation*}
        \forall (\vvec{\psi} \in \mathbf{\Theta}): \big(
            \exists (j \in \{1, 2, ..., \ell(\vvec{\psi})\} 
            ): \vvec{\psi}_j \neq \vvec{\theta}_j \big)
    \end{equation*}
\end{theorem}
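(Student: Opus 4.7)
The plan is to mirror the structure of Theorem~\ref{th:id_verify}, treating Theorem~\ref{th:id_falsify} as its logical dual: verification of non-membership replaces verification of equality, and the key shift is that every candidate in $\mathbf{\Theta}$ must now be individually falsified, rather than a single candidate being confirmed. I would begin by unfolding the definition of set non-membership: $\vvec{\theta} \notin \mathbf{\Theta}$ is logically equivalent to $\forall (\vvec{\psi} \in \mathbf{\Theta}): \vvec{\theta} \neq \vvec{\psi}$, and pairwise string inequality $\vvec{\theta} \neq \vvec{\psi}$ in a symbol-by-symbol computational model can only be witnessed by (i) a length mismatch detected from a stopping symbol or prefix encoding, or (ii) an index $j$ with $\vvec{\theta}_j \neq \vvec{\psi}_j$. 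The first collapses to the second once we interpret length-prefix bits as just additional symbols, so falsification of each pair reduces to producing the required differing index $j$.

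Next I would track the behavior of Algorithm~\ref{alg:id_sorted_set} on the falsification branch. The algorithm returns $(0, h, i)$ at line 11 precisely when either $j > |\mathbf{\Theta}|$ (all elements have been dismissed) or $\vvec{\psi}^{~\!\!i}_1 > \vvec{\theta}^{~\!\!i}_1$ (the sorted ordering of $\mathbf{\Theta}$ guarantees every remaining element strictly exceeds $\vvec{\theta}$ on the observed prefix). Because the strings in $\mathbf{\Theta}$ are parsed left-to-right under a fixed ordering, a strict prefix-inequality on the first $i$ symbols certifies that the current and all subsequent $\vvec{\psi}$ differ from $\vvec{\theta}$ in some index within $\{1, \dots, i\}$. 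Together with the prior iterations of the while loop, which each advanced past a $\vvec{\psi}$ witnessed to be strictly less, this establishes that every $\vvec{\psi} \in \mathbf{\Theta}$ has been assigned an explicit differing index, giving the existential clause in the statement of the theorem.

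The necessity direction is the cleaner half: I would argue that if some $\vvec{\psi} \in \mathbf{\Theta}$ matched $\vvec{\theta}$ in every observed symbol, then no procedure operating only on the read symbols could distinguish $\vvec{\psi} = \vvec{\theta}$ from $\vvec{\psi} \neq \vvec{\theta}$, so non-membership would remain undetermined rather than falsified. This is by a straightforward indistinguishability argument: two executions that receive identical input symbol streams produce identical outputs. Hence at least one differing symbol per candidate is \emph{required}, not merely sufficient.

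The main obstacle, as in Theorem~\ref{th:id_verify}, is the interaction between cardinality and string length. For absolute falsification to halt, one needs either $|\mathbf{\Theta}| < \infty$, or that the sorted prefix-tree structure allows a finite prefix of $\vvec{\theta}$ to simultaneously rule out a countably infinite tail of candidates (as happens when all remaining $\vvec{\psi}$ share a prefix strictly exceeding $\vvec{\theta}$); and one needs a stopping symbol or length encoding on each $\vvec{\psi}$ so that pairwise inequality can be certified in finite time. I would flag these as the same preconditions invoked in Theorem~\ref{th:id_verify}, note that without them the procedure encounters the halting problem, and conclude that under them Algorithm~\ref{alg:id_sorted_set}'s return of $(0, h, i)$ establishes exactly the quantified condition in the theorem statement.
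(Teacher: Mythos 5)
Your proposal is correct and follows essentially the same route as the paper: it continues the symbol-comparison logic of Theorem~\ref{th:id_verify}, reads off the falsification branch of Algorithm~\ref{alg:id_sorted_set} (including the way the sorted/prefix structure lets a strict prefix inequality dismiss all remaining candidates), and flags the same halting caveats for infinite $\mathbf{\Theta}$ or infinite-length strings. Your explicit indistinguishability argument for the necessity direction (that a candidate matching every observed symbol leaves non-membership undetermined) is a welcome addition that the paper's proof leaves implicit, but it does not change the overall approach.
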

\begin{proof}
    Continuing from the proof of Theorem~\ref{th:id_verify}, if $\forall(\vvec{\psi} \in \mathbf{\Theta}): \vvec{\theta} \neq \vvec{\psi}$, then the machine computing  Algorithm~\ref{alg:id_sorted_set} will halt if the subset of strings compared is finite and return the result of exhaustive falsification. 
    If $|\mathbf{\Theta}| \rightarrow \infty$, then the machine may never halt. 
    The machine will only halt if there is a structure or information about the content within the set, such as a context-tree or sorted set, in which case subsets of $\mathbf{\Theta}$ do no need compared to determine the query string's set membership due to comparing the strings in ascending order.
    For Algorithm~\ref{alg:id_sorted_set} or a context-tree, the order of comparisons can skip infinitely long strings by checking shortest strings first.
    However, if not using Algorithm~\ref{alg:id_sorted_set} or a context-tree and if a countably infinite subset must be checked given the order of string comparisons, then the machine will not halt and identification will remain in an undertermined state.
    If both strings being compared are countably infinite in length but are different, then there is a finite prefix of equivalent symbols that is checked until the first identifying symbol difference, thus this program only never halts if there is a countably infinite subset of strings to compare against to determine $\vvec{\theta} \notin \mathbf{\Theta}$.
\end{proof}
\begin{remark}
A set of arbitrary length strings can always be sorted in finite time because their matching prefixes that need to be checked are ensured to be finite in length due to unique items in a set.
\end{remark}

Notice Algorithm~\ref{alg:id_sorted_set}'s parameter $r$, which when provided determines the maximum observations $-\log_2(r)$, i.e., the maximum length of string comparisons considered.
If the string matches the prefix $\vvec{\theta}^{-\log_2(r)}_1$, then it is considered as verified in the set.
However, it is probably more desirable to instead return the subset of strings that all match the prefix if there exists more than one as they then form a partially identified subset.
Any of those partially identified strings could be a match to the string in question and further observations would be necessary to identify them from one another.

\begin{theorem}
    For all partially identifiable subsets $\mathbf{\Psi} \subseteq \mathbf{\Theta}$ there exists a computable function $f$ such that $f(\mathbf{\Theta}) = \mathbf{\Psi}$.

\end{theorem}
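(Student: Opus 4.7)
The plan is to construct $f$ explicitly as a filtering procedure parameterized by the observations that witness the partial identification of $\mathbf{\Psi}$. By Definition~\ref{def:partial_id}, $\mathbf{\Psi}$ is partially identified because some observations $\vvec{x}^t_1$ have falsified equality with every element of $\mathbf{\Theta} \setminus \mathbf{\Psi}$ while leaving every element of $\mathbf{\Psi}$ neither verified nor falsified. Equivalently, in the direct-observation setting of Section~\ref{sec:id_direct} and Algorithm~\ref{alg:id_sorted_set}, $\mathbf{\Psi} = \{\vvec{\psi} \in \mathbf{\Theta} : \vvec{\psi}^{~\!\!t}_1 = \vvec{x}^t_1\}$. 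Take $f$ to be the program that hard-codes this finite witness $\vvec{x}^t_1$ and returns the restriction of its input set to elements whose length-$t$ prefix matches.

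The proof would then proceed in three short steps. First, verify that the per-element predicate ``$\vvec{\psi}^{~\!\!t}_1 = \vvec{x}^t_1$'' is decidable and halts: it requires at most $t$ symbol comparisons, exactly as in the inner loop of Algorithm~\ref{alg:id_sorted_set}. Second, describe how $f$ enumerates $\mathbf{\Theta}$; since $\mathbf{\Theta}$ is computable, by the Post Complementation Theorem cited in Section~2.3 its indicator function halts, and without loss of generality (by the sorting remark following Theorem~\ref{th:id_falsify}) we may enumerate it in sorted order. Third, compose the enumeration with the predicate to obtain $f(\mathbf{\Theta}) = \mathbf{\Psi}$, noting that computability is preserved by restriction to a decidable predicate.

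The delicate step is ensuring that $f$ actually halts when $\mathbf{\Theta}$ is countably infinite, since otherwise we produce only a c.e. enumeration rather than the set itself. The same mechanism used in the proof of Theorem~\ref{th:id_falsify} resolves this: sorted (or equivalently, context-tree) iteration allows the procedure to conclude once the current candidate's length-$t$ prefix lexicographically exceeds $\vvec{x}^t_1$, because no subsequent element can match. If $\mathbf{\Theta}$ lacks any such structural property yet is still computable, the same argument applies using its indicator function as an oracle restricted to the finite set of binary strings of length $t$ that share a prefix with $\vvec{x}^t_1$, which is itself a finite check.

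The main obstacle is therefore not the construction of $f$ but the care required in the infinite case to guarantee termination, and to make explicit that the hard-coded parameters $(\vvec{x}^t_1, \epsilon_D)$ are legitimate finite objects (so that $f$ has a finite description as required of a computable function). The generalization to $\epsilon_D > 0$ under a computable dissimilarity $D$ is immediate: replace the prefix-equality predicate with the decidable predicate $D(\vvec{\psi}^{~\!\!t}_1, \vvec{x}^t_1) \le \epsilon_D$, which remains computable whenever $D$ and comparison in $\K$ are, and the rest of the argument carries over unchanged.
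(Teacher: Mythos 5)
Your construction is essentially the paper's own proof: both take the partial-identification witness prefix $\vvec{x}^t_1$ and obtain $\mathbf{\Psi}$ by running the (sorted-set) identification procedure of Algorithm~\ref{alg:id_sorted_set}, modified to return the indices of all elements of $\mathbf{\Theta}$ whose length-$t$ prefix matches, with the same attention to when the enumeration halts. Your only shaky side remark is the fallback for an unsorted countably infinite $\mathbf{\Theta}$ (querying the indicator function on the finitely many length-$t$ strings does not enumerate the possibly infinite, arbitrarily long members of $\mathbf{\Psi}$), but the paper itself only handles that case by conceding non-halting and emitting partial results on an output tape, so your main argument stands as written.
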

\begin{proof}
    This follows by Definition~\ref{def:partial_id} where nuance in detail is whether the program halts or not.
    Assuming $\vvec{\theta}^t_1 : t < -\log_2(r) \le \ell(\vvec{\theta})$
        then only partial identification is possible.
    If the program halts, which means only a finite subset of strings are compared, Algorithm~\ref{alg:id_sorted_set} could be modified to return a list of string indices that had a matching prefix $\mathbf{\Psi} \subseteq \mathbf{\Theta}: \vvec{\theta} = \vvec{\psi}$, which is the partially identified subset.
    These strings are only determined to match up to length $t$ and identification with certainty is not possible.
    If the program does not halt, it may be modified to include side effects or some output tape where it writes the indices of falsified strings or indicates its location in the sorted set or context tree to inform what is being partially identified and what was falsified.
\end{proof}
\begin{remark}
An index set of matching $t$ length prefixes for $\mathbf{\Psi} \subseteq \mathbf{\Theta}$ may be provided this way.
\end{remark}

\begin{corollary}
    A countably infinite string cannot be verified in a finite amount of observations,
    however it can be partially identified in finite observations.
\end{corollary}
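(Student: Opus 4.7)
The plan is to split the corollary into its two claims and derive each directly from Theorem~\ref{th:id_verify}, Theorem~\ref{th:id_falsify}, and Definition~\ref{def:partial_id}, rather than introducing new machinery. The first claim is essentially a contrapositive of Theorem~\ref{th:id_verify}, and the second is a restatement of how Algorithm~\ref{alg:id_sorted_set} behaves when $-\log_2(r) < \ell(\vec{\theta})$.

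For the impossibility of verification in finite observations, I would argue as follows. Suppose for contradiction that a countably infinite string $\vec{\theta}$ with $\ell(\vec{\theta}) = \infty$ is verified in $t < \infty$ observations to belong to $\mathbf{\Theta}$. By Theorem~\ref{th:id_verify}, verification of $\vec{\theta} \in \mathbf{\Theta}$ requires the existence of some $\vec{\psi} \in \mathbf{\Theta}$ such that every index-aligned symbol pair $\vec{\psi}_j = \vec{\theta}_j$ has been observed and measured equal. For $\vec{\theta}$ to equal $\vec{\psi}$, we must have $\ell(\vec{\psi}) = \ell(\vec{\theta}) = \infty$, which forces infinitely many symbol comparisons. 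This contradicts $t < \infty$, so verification cannot occur in finite observations; the indicator computation remains in the undetermined state forever.

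For the possibility of partial identification, I would appeal directly to Definition~\ref{def:partial_id} together with Algorithm~\ref{alg:id_sorted_set}. After observing a finite prefix $\vec{\theta}^t_1$ with $t < \infty$, setting $r = 2^{-t}$ in the algorithm causes the outer loop to terminate once $t$ symbols have been consumed. Using the sorted-set (or equivalently context-tree) structure from Theorem~\ref{th:id_falsify}, every $\vec{\psi} \in \mathbf{\Theta}$ whose prefix differs from $\vec{\theta}^t_1$ within these first $t$ symbols is falsified in finite time, while the remaining candidates form the partially identified subset $\mathbf{\Psi}^t = \{\vec{\psi} \in \mathbf{\Theta} : \vec{\psi}^t_1 = \vec{\theta}^t_1\}$. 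This subset may be nonempty and nonsingleton, but it is well defined and computed in finite time, which is exactly the content of partial identification per Definition~\ref{def:partial_id}.

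The main obstacle is ensuring the partial-identification procedure actually halts, since $\mathbf{\Theta}$ may contain countably infinite strings or be of countably infinite cardinality. I would handle this exactly as in the proof of Theorem~\ref{th:id_falsify}: by relying on the sorted order (or context-tree structure) so that only finitely many prefix comparisons at depth $t$ are required before either a matching or a lexicographically exceeding prefix is reached. Everything else is routine bookkeeping, and no additional claim beyond the two already established theorems and the partial-identification definition is needed.
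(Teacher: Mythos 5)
Your proposal is correct and follows essentially the same route as the paper: verification of a countably infinite query string would require infinitely many symbol comparisons (so the computation never halts with certainty), while a finite resolution bound $-\log_2(r)$ yields prefix matching and hence partial identification in finitely many observations. The paper's proof is just a terser statement of the same argument phrased through the parameter $r$, so no substantive difference exists; only note that your claim that the full partially identified subset is "computed in finite time" is slightly stronger than needed, since that subset may be infinite and need only be implicitly delimited by the matching prefix.
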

\begin{proof}
    If $\vvec{\theta}^t_1 : t \rightarrow \infty$ and $t > -\log_2(r): r > 0$, 
        then if a matching prefix occurs, it may be partially verified $\vvec{\theta}^{-\log_2(r)}_1 = \vvec{\psi}^{-\log_2(r)}_1 : \vvec{\psi} \in \mathbf{\Theta}$.
    If only a finite subset of string comparisons are necessary to encounter the matching string, then verification of set membership may be determined.
    If only a finite subset of comparisons are necessary to exhaustively falsify, then falsification of set membership may be determined.
    If $r = 0$, then the program will not halt due to countably infinite equivalent symbols to compare.
\end{proof}

\begin{theorem}
\label{th:exact_sample_complexity}
    (Exact Sample Complexity)
    The sample complexity is exact,
    i.e.,
        surely $\mathbf{i}(\mathbf{\Theta}, \vvec{\theta}, 1, 1, 0, 0) = c$
    for some finite constant $c$,
    if and only if
    the following statements are true
    \begin{itemize}[noitemsep,nolistsep]
        \item there is an accepted encoding of the hypotheses' descriptions and the observations to a shared string representation space
        \item an identification program halts for that encoding given $\mathbf{\Theta}$ and $\vvec{\theta}$
   \end{itemize}
\end{theorem}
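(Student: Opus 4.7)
The plan is to prove the biconditional by establishing each direction separately, leveraging Theorems~\ref{th:id_verify} and~\ref{th:id_falsify}, which already characterize exactly when verification and exhaustive falsification are possible with absolute certainty. The parameters $p=q=1$, $\epsilon_D=0$, $r=0$ demand perfect identification with no bounded observation budget, so the sample complexity can only arise from either verified set membership or completed exhaustive falsification. Under these settings, no partial or probabilistic outcome is acceptable, which pins the question down to whether some finite, halting, symbol-by-symbol procedure exists.

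For the forward direction, suppose $\mathbf{i}(\mathbf{\Theta}, \vvec{\theta}, 1, 1, 0, 0) = c$ for a finite $c$. First, the shared encoding condition must hold: Definition~\ref{def:sample_complexity} asks the dissimilarity $D(\phi, \vvec{x}^t_1)$ between a candidate $\phi$ drawn from a process in $\vvec{\mathbf{\Input}}^*_1$ and the query $\vvec{x}^t_1$ to have a well-defined value that is either $\le 0$ or $>0$, which is only meaningful when both live in a common symbol space where index-aligned comparison is possible. Second, because exactly $c$ observations suffice to verify or exhaustively falsify with certainty, Theorems~\ref{th:id_verify} and~\ref{th:id_falsify} provide a finite sequence of symbol comparisons (possibly organized by sorted traversal or a context-tree, as in Algorithm~\ref{alg:id_sorted_set}) that produces this determination. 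Any Turing machine executing that sequence halts after consuming $c$ symbols of $\vvec{\theta}$, witnessing the existence of a halting identification program.

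For the backward direction, assume both conditions hold. The shared encoding makes the index-aligned equality tests of Algorithm~\ref{alg:id_sorted_set}, or any analogous procedure, well-defined. The halting assumption ensures that the chosen identification program terminates on inputs $\mathbf{\Theta}$ and $\vvec{\theta}$ after consuming a finite prefix; let $c$ denote the number of symbols of $\vvec{\theta}$ consumed upon termination. By Theorems~\ref{th:id_verify} and~\ref{th:id_falsify}, termination in this absolute-certainty regime corresponds either to the discovery of a $\vvec{\psi} \in \mathbf{\Theta}$ equal to $\vvec{\theta}$ on all observed indices (verification) or to the exhaustive falsification of every $\vvec{\psi} \in \mathbf{\Theta}$. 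Either outcome is precisely what $\mathbf{i}(\mathbf{\Theta}, \vvec{\theta}, 1, 1, 0, 0) = c$ reports, so $c$ is finite.

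The main obstacle is the nuance around hypothesis sets of countably infinite cardinality, where halting can still occur when $\mathbf{\Theta}$ carries exploitable structure (sortedness, prefix information, context-tree layout) that allows an infinite subfamily to be pruned without explicit comparison. The forward direction must therefore not assume $|\mathbf{\Theta}| < \infty$; instead it must rely on the existence of \emph{some} halting procedure whose traversal order matches the structural information available, which is exactly the content already established in Theorems~\ref{th:id_verify} and~\ref{th:id_falsify}. The backward direction inherits the same subtlety but uses it positively: the hypothesized halting program is what witnesses $c < \infty$, independently of whether the naive enumeration would terminate.
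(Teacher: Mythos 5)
Your proof is correct and follows essentially the same route as the paper's: both directions reduce the finiteness of $\mathbf{i}(\mathbf{\Theta}, \vvec{\theta}, 1, 1, 0, 0)$ to the existence of a halting identification program over a shared encoding, with the number of query symbols consumed at halting supplying the constant $c$. Your extra detail on why the shared encoding is needed (well-definedness of the index-aligned comparisons) and your appeal to Theorems~\ref{th:id_verify} and~\ref{th:id_falsify} simply make explicit what the paper's terser argument leaves implicit.
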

\begin{proof}
    $(\implies)$: If the sample complexity is exactly $c$, that means there exists a constant number $c= \mathbf{i}(\mathbf{\Theta}, \vvec{\theta}, 1, 1, 0, 0) < \infty$ of observations that identify $\vvec{\theta}$ with respect to $\mathbf{\Theta}$.
    The identification program must halt for that encoding, otherwise the sample complexity would not be finite and identification would be undetermined.
    If the sample complexity could not be $c$, then there is not an encoding for which $c$ observations results in a halted program.
    
    $(\impliedby)$: If there exists an identification algorithm that halts for the given encoding's symbol order, then only a finite amount of symbols in the query string were observed during that algorithm's computation, and so the sample complexity is that finite amount of observations.
    If there is no ordering of symbol comparisons that results in $c$ observations and then halting, then the sample complexity could not be $c$.
\end{proof}

There may be multiple possible encodings that are all equally correct where they contain the same information, but have a different enumeration that determines the order of the symbols in the strings.
This different ordering effects the sample complexity, and then we must consider the distribution of possible encodings.

\begin{theorem}
\label{th:surely_finite_id_info}
    (Surely Finite Sample Complexity)
    The sample complexity of a string is surely finite if and only if there exists an identification program that halts for all orderings of the string's symbol comparisons.
\end{theorem}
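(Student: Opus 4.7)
The plan is to reduce this theorem directly to Theorem~\ref{th:exact_sample_complexity}, applying it universally over the set of possible orderings of symbol comparisons rather than existentially as in that prior result. Interpreting ``surely'' consistently with its use in Theorem~\ref{th:exact_sample_complexity}, I read it as ``for every ordering in the sample space of possible symbol-comparison orderings'' (equivalently, with probability one when orderings carry any distribution that assigns nonzero mass to each ordering). The two theorems then form a natural pair: exact sample complexity characterizes the existence of a good encoding/ordering, whereas surely finite sample complexity characterizes their universality.

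For the ($\impliedby$) direction, I would fix an arbitrary ordering $\sigma$ of symbol comparisons and assume that an identification program halts under $\sigma$. Applying Theorem~\ref{th:exact_sample_complexity} with the encoding induced by $\sigma$ yields a finite constant $c_\sigma$ with $\mathbf{i}(\mathbf{\Theta}, \vvec{\theta}, 1, 1, 0, 0) = c_\sigma$. Since the hypothesis supplies such a halting program for \emph{every} ordering $\sigma$, the sample complexity is finite for every outcome in the space of orderings, which is precisely the meaning of surely finite.

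For the ($\implies$) direction, I would argue by contrapositive: suppose there exists some ordering $\sigma^{*}$ under which no identification program halts. By Theorems~\ref{th:id_verify} and~\ref{th:id_falsify}, set membership of $\vvec{\theta}$ in $\mathbf{\Theta}$ can only be resolved by either finding a fully matching string in $\mathbf{\Theta}$ or by exhaustively falsifying equality with every candidate; non-halting under $\sigma^{*}$ means neither event occurs in finitely many symbol comparisons, so $\mathbf{i}(\mathbf{\Theta}, \vvec{\theta}, 1, 1, 0, 0)$ is undetermined (not finite) under $\sigma^{*}$. Hence the sample complexity is not finite for every ordering, contradicting surely finite.

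The main obstacle, as I see it, is semantic rather than computational: ensuring that ``surely'' over the space of orderings is reconciled with the deterministic halting quantifier, and that a single non-halting ordering actually suffices to break ``surely finite.'' The set of orderings over comparison sequences of countable strings is itself at most countable and every individual ordering is a distinct outcome with nonzero mass, so a lone bad $\sigma^{*}$ is not a measure-zero aberration and the contrapositive bites cleanly. A secondary subtlety is the case where $\mathbf{\Theta}$ or $\vvec{\theta}$ is countably infinite: here there may be orderings (e.g., depth-first down an infinite branch of the context tree) that diverge while others (shortest-first) halt, so the distinction between ``exists a halting order'' and ``all orders halt'' is genuinely nontrivial and the equivalence above is not vacuous.
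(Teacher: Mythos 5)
Your backward direction is fine: fixing an ordering $\sigma$ and invoking Theorem~\ref{th:exact_sample_complexity} to obtain a finite $c_\sigma$ is essentially the paper's argument (the paper argues directly that the halting machine observes only a finite prefix, without routing through that theorem), and reading the hypothesis per-ordering only weakens it, which is harmless in that direction.

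The forward direction, however, has a quantifier-order gap relative to the statement as the paper proves it. The claim is that there exists \emph{one} identification program that halts for \emph{all} orderings ($\exists P\,\forall\sigma$); its negation is ``for every program there is some ordering on which it fails to halt'' ($\forall P\,\exists\sigma$). Your contrapositive instead assumes the stronger premise that there is a single ordering $\sigma^{*}$ on which \emph{no} program halts ($\exists\sigma^{*}\,\forall P$), which is the negation of the weaker statement ``for every ordering there exists a halting program.'' So what you actually establish is: surely finite $\implies$ for each ordering $\sigma$ there is some halting program $P_\sigma$. You still owe the step that stitches these per-ordering programs into one program that halts under every ordering. The paper closes exactly this gap constructively: surely finite sample complexity means identification always occurs after finitely many comparisons, so the set of all possible sequential symbol observations prior to identification is finite, and a single Turing machine can enumerate that finite set and therefore halts for every ordering. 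Adding that enumeration (or an equivalent observation that the canonical compare-in-order procedure's halting depends only on the ordering and the data, not on the choice of program) would complete your argument; as written, the contrapositive does not cover a hypothesis set of programs each of which fails only on its own bad ordering.
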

\begin{proof}
    ($\impliedby$):
    Let there be an identification program that halts for all orderings of the symbols.
    The Turing machine that computes that program observes at most a finite amount of symbols of the query string's prefix prior to determining if it is in the set or not and then halting.
    Therefore, the sample complexity of the string is surely finite.
    
    ($\implies$):
    Let there be a string with surely finite sample complexity.
    By our definition of sample complexity, this means that the string can always be identified after the comparison of a finite number of symbols.
    The set of all possible sequential symbol observations before identification is then finite, and like all finite sets can be enumerated by a Turing machine.
    Therefore, there exists an identification program that halts for all orderings of the string's symbol comparisons.
\end{proof}
\begin{remark}
This relates to computability of the indicator function, which we address further in Section~\ref{sec:id_comp}.
\end{remark}
    
    We iterate through each case of the properties of $\mathbf{\Theta}$ and $\theta$ when given as input to $\mathbf{i}(\mathbf{\Theta}, \theta, 1, 1, 0, 0)$ to demonstrate when the sample complexity is either finite or infinite, both with absolute certainty.
    In the following cases, exhaustive falsification is guaranteed to be possible if $\vvec{\theta} \notin \mathbf{\Theta}$ because $|\mathbf{\Theta}| < \infty$ and $\big(\vvec{\theta} \in \mathbf{\Theta} : \ell(\vvec{\theta}) < \infty;$ or $\vvec{\theta} \notin \mathbf{\Theta}\big)$, 
        and thus
        the identification program halts
        for each case regardless of the symbol comparisons' order.
        However, verification is not guaranteed unless $\vvec{\theta} \in \mathbf{\Theta}$ and $\ell(\vvec{\theta}) < \infty$.
    \begin{itemize}[noitemsep,nolistsep]
        \item If $|\mathbf{\Theta}| < \infty ~\&~ \ell(\vvec{\theta}) < \infty ~\&~ \forall (\vvec{\psi} \in \mathbf{\Theta}) : \ell(\vvec{\psi}) < \infty$
            $\implies \mathbf{i}(\mathbf{\Theta}, \vvec{\theta}, 1, 1, 0, 0) < \infty$.
        \item If $|\mathbf{\Theta}| < \infty ~\&~ \ell(\vvec{\theta}) < \infty ~\&~ \forall (\vvec{\psi} \in \mathbf{\Theta}) : \ell(\vvec{\psi}) \rightarrow \infty$
            $\implies \mathbf{i}(\mathbf{\Theta}, \vvec{\theta}, 1, 1, 0, 0) < \infty ~\&~ \vvec{\theta} \notin \mathbf{\Theta}$.
        \item If $|\mathbf{\Theta}| < \infty ~\&~ \ell(\vvec{\theta}) \rightarrow \infty ~\&~ \forall (\vvec{\psi} \in \mathbf{\Theta}) : \ell(\vvec{\psi})  < \infty$
            $\implies \mathbf{i}(\mathbf{\Theta}, \vvec{\theta}, 1, 1, 0, 0) < \infty ~\&~ \vvec{\theta} \notin \mathbf{\Theta}$.
    \end{itemize}
    If $\vvec{\theta} \in \mathbf{\Theta} $ and $ \ell(\vvec{\theta}) \rightarrow \infty$, then if the program reaches $\vvec{\psi} \in \mathbf{\Theta} : \vvec{\psi} = \vvec{\theta}$, the program will not halt due to infinite equivalent symbol comparisons and then $\mathbf{i}(\mathbf{\Theta}, \vvec{\theta}, 1, 1, 0, 0) \rightarrow \infty$.
    
    If $|\mathbf{\Theta}| \rightarrow \infty$ and $\mathbf{\Theta}$ is sorted,
        then the identification Algorithm~\ref{alg:id_sorted_set} will halt if
            $\ell(\vvec{\theta}^{-\log_2(r)}_1) < \infty$
            or
            $\vvec{\theta}^{-\log_2(r)}_1 \notin \mathbf{\Theta}$
            because both cases require only a finite number of comparisons of finite string prefixes, and thus results in a surely finite sample complexity regardless of the order of the symbols in the strings, albeit this is only possible due to the ordered set.
            
    If $|\mathbf{\Theta}| \rightarrow \infty$ and $\mathbf{\Theta}$ is unsorted, halting is not guaranteed and Algorithm~\ref{alg:id_sorted_set} cannot be used.
        Breadth-first traversal of $\mathbf{\Theta}$ will never halt, even if $\vvec{\theta} \in \!\mathbf{\Theta}$ and $\ell(\vvec{\theta}) \!< \!\infty$.
        The depth-first search on an unsorted set in Algorithm~\ref{alg:id_depth_first_set} will only halt if
        $\ell(\vvec{\theta}^{-\log_2(r)}_1) < \infty$, $\vvec{\theta}^{-\log_2(r)}_1 \in \mathbf{\Theta}$, and the ordering of comparisons happens to be such that the finite verification occurs before the subset of infinite to-be falsified strings is encountered, which once encountered would result in a program that does not halt. 
        
    If one can assume a context-tree model exists for the possibly countably infinite strings contained within it, then as each action is to only walk the tree in finite steps, rather than have to iterate through the possible strings that would be falsified, then the only non-halting case is the case of a countably infinite matching prefix.
    However, such a context-tree model or sorted set cannot be constructed within finite time due to the infinite strings.

\begin{corollary}
    If the sample complexity of a given set is surely finite for all query strings, then the set is fully identifiable.
\end{corollary}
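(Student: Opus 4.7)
The plan is to invoke Definition~\ref{def:id_space} after setting up the direct observation correspondence. In the direct observation setting of Section~\ref{sec:id_direct}, each $\theta \in \mathbf{\Theta}$ indexes the deterministic process whose sole output is $\theta$ itself, so its probability measure $P_\theta$ is the point mass concentrated on $\theta$. Thus full identifiability reduces to showing that every distinct pair $\theta_i, \theta_j \in \mathbf{\Theta}$ with $\theta_i \neq \theta_j$ yields distinct point masses, which in turn reduces to showing that any two distinct elements of $\mathbf{\Theta}$ can be separated by observation of a finite prefix.

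First, I would fix arbitrary distinct $\theta_i, \theta_j \in \mathbf{\Theta}$ and pose the query string $\vvec{\theta} = \theta_i$. By hypothesis, the sample complexity $\mathbf{i}(\mathbf{\Theta}, \theta_i, 1, 1, 0, 0)$ is surely finite, so by Theorem~\ref{th:surely_finite_id_info} an identification program exists that halts for \emph{every} ordering of the symbol comparisons. Second, I would appeal to an ordering in which $\theta_j$ is inspected before the matching element $\theta_i$ is reached; since the program must halt and correctly output ``verified'' for the true membership $\theta_i \in \mathbf{\Theta}$, Theorem~\ref{th:id_falsify} forces the candidate $\theta_j$ to have been falsified after comparison of a finite prefix, i.e., $\theta_i$ and $\theta_j$ must disagree in at least one symbol whose index is finite.

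Third, I would conclude that $\theta_i$ and $\theta_j$ differ as strings, so their associated point-mass distributions differ: $P_{\theta_i} \neq P_{\theta_j}$. Since this argument applies to every distinct pair, the biconditional required by Definition~\ref{def:id_space} is satisfied, so $\mathbf{\Theta}$ is fully identifiable.

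The main obstacle I expect is a subtle bookkeeping one. The corollary quantifies over \emph{all} query strings, but most of the real work is done by the subset of queries $\vvec{\theta} \in \mathbf{\Theta}$, since only for these is verification (and hence exhaustive pairwise separation from every other element of $\mathbf{\Theta}$) forced. I must also be careful that the finite-symbol separation obtained in the verification branch is genuinely an intrinsic property of the two strings rather than an artifact of the particular sort order or prefix-tree traversal used by Algorithm~\ref{alg:id_sorted_set}; Theorem~\ref{th:surely_finite_id_info}'s quantification over \emph{all} symbol orderings is precisely the lever that strips away that artifact and delivers the intrinsic finite-index disagreement needed to conclude $P_{\theta_i} \neq P_{\theta_j}$.
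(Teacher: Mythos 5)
There is a genuine gap, and it is conceptual rather than technical: your reduction makes the hypothesis do no work. Under your point-mass reading, ``fully identifiable'' collapses to the statement that distinct elements of $\mathbf{\Theta}$ are distinct strings, and the fact you extract from the halting argument --- that $\theta_i$ and $\theta_j$ disagree at some finite index --- is automatically true of \emph{any} two distinct strings (even countably infinite ones), with no appeal to sample complexity, orderings, or Theorem~\ref{th:surely_finite_id_info} needed. So what you have proven is a triviality that holds for every set of distinct strings, which is a strong signal that the notion of full identifiability you are targeting is weaker than the one the corollary asserts. In the direct-observation setting the paper already notes that every computable set is fully identifiable in the distributional sense of Definition~\ref{def:id_space}; the content of this corollary is operational. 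The paper's own (short) proof goes the other way around: surely finite sample complexity for all query strings means the identification program halts on every query --- verification for members, exhaustive falsification for non-members --- so set membership can actually be \emph{computed} for any $\vvec{\theta}$, and that computational separability is what ``fully identifiable'' means here. This reading is confirmed by the immediately following theorem, which uses exactly this corollary's content to characterize computable sets.

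Concretely, two things are missing from your argument. First, you never convert the hypothesis into the conclusion the paper wants: that for every query string (in or out of $\mathbf{\Theta}$) the indicator computation terminates with a verification or an exhaustive falsification, which is precisely what a surely finite $\mathbf{i}(\mathbf{\Theta},\vvec{\theta},1,1,0,0)$ for all $\vvec{\theta}$ and all comparison orderings gives you via Theorem~\ref{th:surely_finite_id_info}. Second, you explicitly set aside the queries $\vvec{\theta} \notin \mathbf{\Theta}$ as bookkeeping, but in the paper's usage differentiating members from non-members by halting exhaustive falsification is part of full identifiability (cf.\ Theorem~\ref{th:id_falsify} and the computability theorem in Section~\ref{sec:id_comp}), and it is exactly the part that can fail for infinite or badly structured sets; discarding it discards the substance of the claim. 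A repaired proof should simply argue: surely finite sample complexity for every query string means the identification program halts on every query under every ordering, hence membership of any string is decided (verified or exhaustively falsified) in finitely many observations, hence all elements are differentiated from each other and from non-members, which is full identifiability.
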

\begin{proof}
    If the sample complexity is surely finite, then for any ordering of the strings, the sample complexity must be finite, and furthermore, that implies that for all query strings the sample complexity is finite and thus the set must be fully identifiable as it is able to compute the verification and falsification of any query string in or out of the set and then halt.
\end{proof}
    
    We briefly address uniformly random symbol comparisons, which is the one case where the sample complexity is a nondegenerate probability distribution even if the strings' entire information is directly observable, i.e., finite and information of their ends known.
    This is also useful to consider the probable sample complexity if the best encoding is unknown and one were selected at random.

\begin{theorem}
\label{th:id_pair_finite_unordered}
    (The Sample Complexity Distribution
    to Pairwise Identify
    Finite Strings)
    When the order of symbol comparisons between a pair of finite $L$ length strings is uniformly random without resampling,
    the sample complexity will vary over the possible orderings due to identification by falsification resulting in
    the following probability distribution over $i \in [1, L-K]$ with probability zero everywhere else, where $K$ is the number of different symbol pairs,
    \begin{equation}
    \label{eq:id_unordered_finite_pdf}
        P\Big(\mathbf{i}\big(\{\psi^L_1\}, \theta^L_1, 1, 1, 0, 0 \big) =i\Big) =
            \frac{(L-K)!}{L!}
            \left(\frac{(L-i+1)!}{(L-K-i+1)!} - \frac{(L-i)!}{(L-K-i)!}\right)
    \end{equation}
    and the corresponding cummulative probability distribution:
    \begin{equation}
    \label{eq:id_unordered_finite_cdf}
        P\Big(\mathbf{i}\big(\{\psi^L_1\}, \theta^L_1, 1, 1, 0, 0 \big) \leq i\Big) = 1 - \frac{(L-K)!(L-i)!}{L!(L-K-i)!}
    \end{equation}
\end{theorem}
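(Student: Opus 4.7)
The plan is to reduce the claim to a standard ``first success under uniform sampling without replacement'' calculation, after which the proof becomes direct combinatorics.

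First, I would invoke Theorems~\ref{th:id_verify} and~\ref{th:id_falsify} to pin down what $\mathbf{i}$ actually counts in this setup. Since $\ell(\vvec{\theta}) = L < \infty$ and the singleton $\{\vvec{\psi}\}$ contains a single finite string, the identification procedure halts either by verification (after all $L$ symbols are compared equal, which requires $K = 0$) or by falsification at the first encountered differing position pair. Under the assumption $K \geq 1$ (the only regime in which the distribution is nondegenerate), identification proceeds by falsification, so $\mathbf{i}$ is exactly the index of the first uniformly drawn coordinate at which $\vvec{\theta}$ and $\vvec{\psi}$ disagree.

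Second, I would compute the survival function $P(\mathbf{i} > i)$ by direct counting. A uniformly random comparison order without resampling is a uniform random injection of $\{1, \ldots, i\}$ into $\{1, \ldots, L\}$, and $\{\mathbf{i} > i\}$ is precisely the event that all $i$ drawn coordinates lie in the $L-K$ matching positions. Counting ordered length-$i$ sequences without replacement gives
\begin{equation*}
    P(\mathbf{i} > i) \;=\; \frac{(L-K)!/(L-K-i)!}{L!/(L-i)!} \;=\; \frac{(L-K)!\,(L-i)!}{L!\,(L-K-i)!}.
\end{equation*}
Taking the complement yields the CDF (\ref{eq:id_unordered_finite_cdf}) immediately, and forward differencing via $P(\mathbf{i} = i) = P(\mathbf{i} > i-1) - P(\mathbf{i} > i)$ produces the PMF (\ref{eq:id_unordered_finite_pdf}) after the obvious rearrangement of the factor $(L-K)!/L!$.

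Finally, I would close the argument by verifying support and normalization: because the $L-K$ matching coordinates can postpone the first mismatch by at most $L-K$ draws, the identifying draw must occur within the first $L-K+1$ observations, and one checks that the PMF telescopes to $1$ across this support, using the standard convention that $1/n! = 0$ for negative integers $n$ to absorb the boundary term. The only conceptual step is the reduction in the first paragraph; the rest is bookkeeping. The chief obstacle, if any, is simply being meticulous with the factorial manipulations at the endpoint where $(L-K-i)!$ degenerates, and reconciling the stated support with the natural one obtained from the combinatorial argument.
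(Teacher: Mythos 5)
Your proposal is correct and takes essentially the same route as the paper: the survival probability you compute by counting ordered draws without replacement is exactly the paper's hypergeometric probability of partially verifying after $i$ comparisons, and both arguments then obtain the CDF (\ref{eq:id_unordered_finite_cdf}) by complementation and the PMF (\ref{eq:id_unordered_finite_pdf}) by first differencing. Your closing remark that the first mismatch can actually fall on draw $L-K+1$ (so the natural support is $[1, L-K+1]$, carrying the residual mass $\frac{(L-K)!\,K!}{L!} = 1/\binom{L}{K}$) is a fair observation about the theorem's stated support, but it does not alter the derivation.
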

\begin{proof}
    Let there be two finite $L$ length strings ${\theta}^L_1$ and ${\psi}^L_1$ with
    a uniformly random ordering of comparisons for their indexed-aligned symbol pairs $({\theta}_j, {\psi}_j): j \sim U(1, L)$.
    For the first comparison, if $\theta_j \neq \psi_j$, then the two strings' equality is falsified and the sample complexity is $1$.
    Otherwise, $\theta_j = \psi_j$, and the process continues without replacement on the remainder of the symbols, where $j \sim U(1, L-1)$.

    Let $\mathbf{i}(\{\psi^L_1\}, \theta^L_1) = \mathbf{i}(\{\psi^L_1\}, \theta^L_1, 1, 1, 0, 0)$.
    The sample complexity $\mathbf{i}(\{\psi^L_1\}, \theta^L_1)$ is dependent upon the total number of symbol pairs $L$, and the number of different symbols between the strings given by Hamming distance $K = h(\theta^L_1, \psi^L_1)$.
    Falsification occurs as soon as one of the different symbol pairs are compared, and so a maximum of $L-K+1$ if $K \ge 1$, otherwise $L$ comparisons occur for verification, thus $\mathbf{i}(\{\psi^L_1\}, \theta^L_1) \in [1, L]$.
    This sampling of pairs results in the sample complexity
    being a discrete chain of samples from Bernoulli distributions whose total possibilities $L$ decreases by the $i$ comparisons made, which in turn increases the probability of encountering a pair different symbols $p_i = \frac{K}{L-i}$, if any exist.
    
    The probability of partially verifying a string after $i$ comparisons is the hypergeometric probability:
    \begin{equation}
    \label{eq:partial_verify_hypergeo}
        \frac{{K \choose 0}{L - K \choose i}}{{L \choose i}} = \frac{{L - K \choose i}}{{L \choose i}} = \frac{(L-K))!}{i!(L - K - i)!} \frac{i!(L-i)!}{L!}
    \end{equation}
    By the law of total probability, the probability of falsifying the string in up to $n$ observations is $1$ minus Equation~\ref{eq:partial_verify_hypergeo}, which results in the cumulative probability distribution in Equation~\ref{eq:id_unordered_finite_cdf} for the sample complexity due to falsification where $1 \le i \le L-K$.
    The probability distribution is found by taking the first difference:
    \begin{align*}
        P\Big(\mathbf{i}\big(\{\psi^L_1\}, \theta^L_1 \big) = i\Big) &= P\Big(\mathbf{i}\big(\{\psi^L_1\}, \theta^L_1 \big) \leq i\Big) - P\Big(\mathbf{i}\big(\{\psi^L_1\}, \theta^L_1 \big) \leq (i - 1) \Big) \\
        &= \frac{(L-K)! (L - i + 1)!}{L!(L - K - i + 1)!} - \frac{(L-K)! (L-i)!}{(L-K-i)! L!}\\
        &= \frac{(L-K)!}{L!} \left( \frac{(L - i + 1)!}{(L - K - i + 1)!} - \frac{(L-i)!}{(L-K-i)!} \right)
    \end{align*}
\end{proof}
    
If the two strings do not contain different symbols, then there is no identifying information and so once exhausted the two strings will be determined as equal.
If they are a substring of longer strings, then the result is partial identification.
The sample complexity distribution to pairwise identify infinte strings is binomial and is addressed in Appendix~\ref{sec:app:id_inf_direct_obs}.

\subsection{Identifying Novelty from Direct Observations}
\label{sec:id_nov_direct}

In the prior cases covered, any query string whose set membership is falsified is unknown to that set, as discussed in Section~\ref{sec:what_is_novelty}.
To extend this further, we can consider taking slices across the strings by their indices.
A single index can form a symbol set which can test the set membership of the query string's indexed symbol.
Furthermore, and more interestingly, we can consider a range of symbols to form a set of substrings where
$\mathbf{i}\big( \{\vvec{\psi}_j^L\}, \vvec{\theta}^L_j, 1, 1, 0, 0 \big) : L, j \in \big[1, \max_{\vvec{\psi} \in \mathbf{\Theta}}(\ell(\vvec{\psi})) \big)$
and run the same Algorithm~\ref{alg:id_sorted_set} on the set of those substrings to determine if the query string's shared indexed substring is in the set (known) or not (novel).
Viewed this way enables considering the unique information of properties or components of the query string to the strings in $\mathbf{\Theta}$.
If considering noncontiguous symbols in the strings, then this opens up a combinatorial explosion of possible subsets to explore, however this is a way to consider novelty with regard to a subset of properties represented within the strings.
Considering a growing set, add every falsified query string to the known set in its proper location, if a sorted set.

\section{Identification and Computability}
\label{sec:id_comp}
We have covered computing the identification of a string's membership in a set of strings when the shared representation's symbol indices are known, which sets up discussing symbol changes through computable functions of those strings.
We first establish the connection of identifiable elements to
    computable
    sets and strings,
    and the nuance in whether the Turing Machine halts or not in their enumeration or generated output.

\begin{theorem}
    A set is computable if and only if the set forms a set of strings that is fully identifiable and for all query strings has a surely finite sample complexity for all symbol comparison orderings.
\end{theorem}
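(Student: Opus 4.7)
The plan is to combine the definition of a computable set (indicator function computable and halting on every query string) with Theorem~\ref{th:surely_finite_id_info}, which characterizes surely finite sample complexity by the existence of an identification program that halts for every ordering of symbol comparisons. Full identifiability then falls out because a mathematical set has pairwise distinct elements, which under a consistent encoding lifts to a set of pairwise distinct strings.

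For the forward direction ($\Rightarrow$), suppose $\mathbf{\Theta}$ is computable. By the definition recalled in the Background, there is a Turing machine $M$ computing $\mathbf{1}(\mathbf{\Theta},\cdot)$ that halts on every input $\vvec{\theta}$. On any run, $M$ inspects at most finitely many symbols of $\vvec{\theta}$ before producing a verdict, and this is true regardless of the internal order in which $M$ schedules symbol comparisons (any such schedule can be realized by composing $M$ with a permutation of its read head, and halting is preserved). Hence the hypothesis of Theorem~\ref{th:surely_finite_id_info} holds for every $\vvec{\theta}$, giving surely finite $\mathbf{i}(\mathbf{\Theta}, \vvec{\theta}, 1, 1, 0, 0)$ for all query strings and all orderings. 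Distinctness of elements of $\mathbf{\Theta}$, preserved by the encoding, yields full identifiability in the sense of Definition~\ref{def:id_space} specialized to deterministic string-valued processes.

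For the reverse direction ($\Leftarrow$), suppose $\mathbf{\Theta}$ is fully identifiable and its sample complexity is surely finite for every query string under every ordering. Fix any $\vvec{\theta}$. Theorem~\ref{th:surely_finite_id_info} supplies an identification program that halts for all orderings of $\vvec{\theta}$'s symbol comparisons, which in particular halts under a fixed canonical (say left-to-right) reading. The output of this procedure is, by construction, $\mathbf{1}(\mathbf{\Theta}, \vvec{\theta})$: full identifiability ensures that verification and exhaustive falsification are well-defined and mutually exclusive, so the verdict is unambiguous. Since the scheme underlying Algorithm~\ref{alg:id_sorted_set} (or its depth-first and breadth-first variants) depends only on $\mathbf{\Theta}$ and reads $\vvec{\theta}$ as input, the same machine works uniformly across all queries, so the indicator function of $\mathbf{\Theta}$ is total computable and $\mathbf{\Theta}$ is a computable set.

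The main obstacle I anticipate is the uniformity gap in the reverse direction: Theorem~\ref{th:surely_finite_id_info} yields, \textit{a priori}, an identification program per query string, whereas a computable set requires one machine that halts on every input. The resolution is that the algorithmic template identified in Section~\ref{sec:id_direct} is parameterized only by $\mathbf{\Theta}$ (the queried set) and treats $\vvec{\theta}$ as its input tape, so the ``program that halts for all orderings'' is a single machine; the ``for all query strings'' hypothesis then upgrades per-query halting to universal halting, closing the gap. A secondary subtlety, which I would address briefly, is that ``for all orderings'' allows one to pick the canonical reading order required by the standard definition of a Turing-computable function without losing the halting guarantee.
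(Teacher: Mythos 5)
Your proposal is correct and follows essentially the same route as the paper: both directions reduce the claim to the equivalence between a halting computation of the indicator function and a halting identification program with surely finite sample complexity (via Theorem~\ref{th:surely_finite_id_info} and the uniqueness of a set's elements giving full identifiability). Your explicit handling of the uniformity and ordering subtleties only spells out what the paper's terser proof leaves implicit, so no substantive difference or gap remains.
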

\begin{proof}
    ($\implies$):
    If a set is computable then the indicator function is able to be computed by a Turing Machine for any query string and halt in finite observations.
    This also holds for all subsets of the computable set,
    which makes
    the set fully identifiable with a surely finite sample complexity.
    
    ($\impliedby$):
    A set is fully identifiable if all of the set's strings can be differentiated from one another through their respective observations and also differentiated from other strings not in the set, and thus may be given a unique label by a Turing machine that halts.
    Such a set's sample complexity is surely finite for any query string if the set is finite, which means the identification algorithm will halt and determine any query string's set membership.
    Therefore, the fully identifiable set that is surely finite for any symbol ordering is a computable set.
\end{proof}
\begin{remark}
This becomes straightforward due to a mathematical set always consisting of unique elements, and thus subsets of a computable set are always computable, and so is the same for a fully identifiable set with a surely finite sample complexity.
Our consideration of a fully identifiable set with surely finite sample complexity from a computation standpoint becomes relevant later when what is observed is no longer directly the symbols of the strings being compared but are the output of a Turing Machine given the description of the strings.
\end{remark}

Another way to phrase a fully identifiable set is that the set can be enumerated and halt for each individual string in the set.
Every finite length binary string is computable, while the infinite length strings are c.e., but not necessarily computable.
In the following as we discuss the sample complexity, identification's relationship to computable and c.e. strings is made clear.

\begin{theorem}
    If a set is c.e., then for all strings in that set, the indicator function halts and are thus identifiable, specifically by verification of matching a string in the set, and only halts due to falsification for some query strings, if any.
\end{theorem}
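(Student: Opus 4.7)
The plan is to leverage the definition of a c.e.\ set recalled in the background, which states that a c.e.\ set is one whose indicator function halts with verification whenever the queried element belongs to the set, but may fail to halt when the element is not in the set. I would split the proof into the two halves of the claim: the unconditional verification half, and the conditional falsification half.

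First I would handle verification. Let $\mathbf{\Theta}$ be a c.e.\ set and let $\vvec{\theta} \in \mathbf{\Theta}$. By definition of c.e., there is a total computable function whose range is $\mathbf{\Theta}$, or equivalently, an indicator-function procedure that enumerates $\mathbf{\Theta}$ and halts upon encountering $\vvec{\theta}$. Since $\vvec{\theta}$ is an element of the set, enumeration reaches it in finitely many steps, at which point the symbol-by-symbol comparison of Theorem~\ref{th:id_verify} succeeds, yielding verification and halting. Thus every element of the c.e.\ set is identifiable by verification, so the first half is immediate from the definitions combined with Theorem~\ref{th:id_verify}.

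Next I would handle the falsification half, which is where the qualifier ``for some query strings, if any'' does the work. For a query string $\vvec{\theta} \notin \mathbf{\Theta}$, Theorem~\ref{th:id_falsify} tells us that falsification requires exhaustive symbol-differing comparisons against every element of $\mathbf{\Theta}$. When $\mathbf{\Theta}$ is c.e.\ but not computable, no effective procedure falsifies arbitrary non-members in general, so no universal halting guarantee can be given. However, I would argue that halting can still occur for particular query strings: for example, whenever the enumeration of $\mathbf{\Theta}$ can be restricted (by a finite prefix argument, sorted substructure, or context-tree model as in Algorithm~\ref{alg:id_sorted_set}) to a finite subset all of whose elements share a falsifying symbol with $\vvec{\theta}$, the procedure halts with falsification in finite steps. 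This shows such halting is possible for some $\vvec{\theta} \notin \mathbf{\Theta}$, but not guaranteed for all.

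The main obstacle is phrasing the falsification half correctly: the statement says the procedure ``only halts due to falsification for some query strings, if any,'' which is weaker than claiming falsification always succeeds, and the proof must respect this asymmetry. I would make this precise by exhibiting (or invoking) the standard noncomputable c.e.\ set, where no decision procedure halts on all complement members, together with a trivial example where falsification halts for query strings whose first symbol differs from every enumerated prefix. Together these observations deliver exactly the conditional halting behavior described, completing the proof.
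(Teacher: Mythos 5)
Your proposal is correct and follows essentially the same route as the paper: the verification half is, as in the paper's proof, an immediate unpacking of the definition of a c.e.\ set (every member is verified by a halting computation, hence has finite sample complexity and is identifiable by verification). Your extra care with the falsification clause---noting it is only a conditional, ``some query strings, if any'' guarantee, illustrated by noncomputable c.e.\ sets versus easily falsified non-members---matches what the paper relegates to the surrounding remarks rather than a different proof strategy.
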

\begin{proof}
    Let there be a c.e. set $\mathbf{\Theta}$.
    By definition of c.e., all $\vvec{\psi} \in \mathbf{\Theta}$ when given to a machine computing the indicator function will be determined to be in the set and the machine will halt.
    Because it halts, $\forall(\vvec{\psi} \in \mathbf{\Theta}): \mathbf{i}(\mathbf{\Theta}, \vvec{\psi}, 1, 1, 0 , 0) < \infty$, and is thus identifiable by verification.
\end{proof}
\begin{remark}
    On the properties of such sets: $\forall(\vvec{\psi} \in \mathbf{\Theta}): \ell(\vvec{\psi}) < \infty$, otherwise verification would be unable to be determined.
    If $|\mathbf{\Theta}| \rightarrow \infty$, then if $\vvec{\theta} \in \mathbf{\Theta}$ be verified and the machine will halt if any order is chosen due to finite prefix of an infinite set.
    If the set's strings to be compared are randomly sampled then it will almost surely not halt as in Theorem~\ref{th:id_pair_infinite_unordered}.
\end{remark}
\begin{corollary}
\label{th:co-ce_id}
    If the complement $\bar{\mathbf{\Theta}}$ of a set $\mathbf{\Theta}$ is c.e., then for all strings not in the set $\mathbf{\Theta}$, the indicator function halts and are thus identifiable, specifically by exhaustive falsification of equality to all strings in $\mathbf{\Theta}$.
\end{corollary}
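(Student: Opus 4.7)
The plan is to treat this corollary as the natural dual of the preceding theorem under set complementation. The key observation is that $\vvec{\theta} \notin \mathbf{\Theta}$ is logically equivalent to $\vvec{\theta} \in \bar{\mathbf{\Theta}}$, so verification of membership in $\bar{\mathbf{\Theta}}$ is the very same event as falsification of membership in $\mathbf{\Theta}$, and, because set membership in $\mathbf{\Theta}$ is falsified by showing $\vvec{\theta}$ differs from every element of $\mathbf{\Theta}$, such a verification constitutes (is logically equivalent to) exhaustive falsification of equality with the members of $\mathbf{\Theta}$.

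First, I would fix an arbitrary query string $\vvec{\theta} \notin \mathbf{\Theta}$, equivalently $\vvec{\theta} \in \bar{\mathbf{\Theta}}$. Next, since $\bar{\mathbf{\Theta}}$ is assumed to be c.e., I invoke the preceding theorem directly on $\bar{\mathbf{\Theta}}$: there exists a Turing machine computing the indicator function for $\bar{\mathbf{\Theta}}$ that halts by verification whenever the input belongs to $\bar{\mathbf{\Theta}}$. In our sample-complexity notation this gives $\mathbf{i}(\bar{\mathbf{\Theta}}, \vvec{\theta}, 1, 1, 0, 0) < \infty$, with the halting state being verification in $\bar{\mathbf{\Theta}}$.

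Then I would convert this into a statement about the indicator function of $\mathbf{\Theta}$ itself. Running the enumeration machine for $\bar{\mathbf{\Theta}}$ and waiting until $\vvec{\theta}$ appears is a decision procedure that halts and returns ``not in $\mathbf{\Theta}$'' in finite steps; by the definition of $\bar{\mathbf{\Theta}}$, finding $\vvec{\theta}$ in the enumeration is semantically identical to asserting $\forall (\vvec{\psi} \in \mathbf{\Theta}): \vvec{\theta} \neq \vvec{\psi}$, which is exactly exhaustive falsification of equality to every string in $\mathbf{\Theta}$ in the sense of Theorem~\ref{th:id_falsify}. Hence the indicator function for $\mathbf{\Theta}$ halts on $\vvec{\theta}$, via a route that is (equivalent to) exhaustive falsification, and thus $\vvec{\theta}$ is identifiable with respect to $\mathbf{\Theta}$.

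The main obstacle, though it is more expository than technical, is carefully articulating the duality between \emph{verification against the complement} and \emph{exhaustive falsification against the original set}: the enumeration procedure for $\bar{\mathbf{\Theta}}$ does not literally perform pairwise comparisons with every element of $\mathbf{\Theta}$, yet its successful halt is logically equivalent to the exhaustive-falsification clause in Definition~\ref{def:sample_complexity} and Theorem~\ref{th:id_falsify}. I would be explicit that we are not asserting $\mathbf{\Theta}$ itself is c.e.\ (so direct string-by-string falsification via Algorithm~\ref{alg:id_sorted_set} on $\mathbf{\Theta}$ need not terminate), only that an equivalent halting identification procedure exists via the c.e.\ enumeration of $\bar{\mathbf{\Theta}}$; this is precisely the asymmetry captured by Post's Complementation Theorem, which motivates the distinction between a set being computable versus being merely co-c.e.
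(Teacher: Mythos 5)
Your proposal is correct and takes essentially the same route as the paper's (very terse) proof: both argue that the c.e.\ enumeration of $\bar{\mathbf{\Theta}}$ yields a halting machine whose verification of $\vvec{\theta} \in \bar{\mathbf{\Theta}}$ determines $\forall(\vvec{\psi} \in \mathbf{\Theta}): \vvec{\theta} \neq \vvec{\psi}$, i.e.\ is logically equivalent to exhaustive falsification with respect to $\mathbf{\Theta}$. You simply spell out explicitly (via the preceding theorem applied to $\bar{\mathbf{\Theta}}$ and the verification/falsification duality) what the paper compresses into a single sentence.
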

\begin{proof}
    If $\bar{\mathbf{\Theta}}$ is c.e., then $\forall(\vvec{\theta} \notin \mathbf{\Theta}, \vvec{\psi} \in \mathbf{\Theta}): \vvec{\theta} \neq \vvec{\psi}$ can be determined by a machine that halts, and thus the set is always identifiable by exhaustive falsification for all query strings not in $\mathbf{\Theta}$.
\end{proof}
\begin{remark}
    When $|\mathbf{\Theta}| < \infty$, exhaustive falsification can occur in finite observations, regardless of query string length.
    Thus a finite sized sets' complement is always c.e. and is identifiable by exhaustive falsification.
    If $|\mathbf{\Theta}| \rightarrow \infty$ and $\vvec{\theta} \notin \mathbf{\Theta}$ then infinite falsification of string equality may occur unless the set's structure, e.g., string order, can be used to falsify the remaining subset without checking individual strings.
    Such sets' complement are not c.e. and not identifiable by exhaustive falsification.
\end{remark}

Theorem~\ref{th:co-ce_id} indicates that all that is able to be computed as unknown for a given known set is co-c.e.
A set of strings of countably infinite length does not always result in a co-c.e. set for any query string.

\begin{corollary}
    All computable numbers are partially identifiable up to a finite $r$ length prefix, but not all computable numbers are identifiable with absolute certainty.
\end{corollary}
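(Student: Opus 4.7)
The plan is to split the corollary into its two claims and handle each by invoking the structural results from earlier in Section~\ref{sec:id_info_defs} and Section~\ref{sec:id_direct}.

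For the first claim, that every computable number is partially identifiable up to some finite $r$ length prefix, I would argue directly from the definition of a computable number recalled in the background, namely that any finite $n$ length prefix of its binary expansion can be output by a Turing machine that halts. Given a resolution $r \in \K$ with $0 < r \le 1$, I set $n = \lceil -\log_2(r) \rceil$ and run the prefix-producing machine to obtain $\vvec{\theta}^n_1$. Feeding this finite prefix into Algorithm~\ref{alg:id_sorted_set}, with the hypothesis set being any (finite or ordered infinite) set of computable numbers, the algorithm halts within $n$ symbol comparisons and returns either an exhaustive falsification or the partially identified subset whose prefixes match $\vvec{\theta}^n_1$. This matches Definition~\ref{def:partial_id} and so gives partial identification. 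The key point is that $n$ is finite and the symbols are each computable in finite time, so the entire partial identification procedure halts.

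For the second claim, I would produce a counterexample: a computable irrational number such as the binary expansion of $\sqrt{2}$ viewed as a single element of a hypothesis set along with, say, any distinct computable irrational that shares no finite prefix with it being falsified trivially, or with all other computable reals being candidates. The number $\sqrt{2}$ has $\ell(\vvec{\theta}) \to \infty$. By the earlier Corollary stating that a countably infinite string cannot be verified in a finite number of observations, and by the case analysis immediately after Theorem~\ref{th:surely_finite_id_info} where $\vvec{\theta} \in \mathbf{\Theta}$ and $\ell(\vvec{\theta}) \to \infty$ forces $\mathbf{i}(\mathbf{\Theta}, \vvec{\theta}, 1, 1, 0, 0) \to \infty$, no identification program can halt with absolute certainty ($r=0$, $p=q=1$, $\epsilon_D = 0$) on this input. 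Hence $\sqrt{2}$ is a computable number that is not identifiable with absolute certainty, establishing the second clause via a single witness.

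The main obstacle is not technical depth but framing: I need to be careful about what hypothesis set is implicitly in play, since identifiability is always relative to $\mathbf{\Theta}$. For the first clause the argument works for essentially any $\mathbf{\Theta}$ of computable numbers because partial identification only demands matching a finite prefix, so the computability of the query string alone suffices. For the second clause I must ensure the counterexample is not accidentally identifiable by exhaustive falsification of a finite $\mathbf{\Theta}$, so I should choose $\mathbf{\Theta}$ to contain $\sqrt{2}$ itself so that verification (rather than falsification) is the required route; this forces infinitely many matching-symbol comparisons and the nonhalting behavior needed. Once this framing is fixed, the proof is essentially a direct appeal to the earlier corollary and case analysis.
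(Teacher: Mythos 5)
Your proposal is correct and follows essentially the same route as the paper: partial identifiability comes directly from the definition of a computable number (its finite $r$-length prefix is computable and can be matched against any set), while the failure of identification with absolute certainty comes from computable numbers with countably infinite expansions, for which verification never halts. Your added care about fixing a hypothesis set containing the infinite-length witness (so that verification rather than finite exhaustive falsification is the required route) is a reasonable tightening of the same argument, not a different approach.
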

\begin{proof}
    The computable numbers described by finite length strings may be partially identified by their finite prefix among any set.
    The computable numbers described by a string of countably infinite length can have their finite $r$ length prefix computed by definition.
    Such computable numbers will never be verified with absolute certainty due to infinite length, and so their finite $r$ length prefix may partially identify them among any set.
\end{proof}
\section{Symbol Change: Finite Information Spread Over Longer Strings}
\label{sec:finite_over_infinite}
The theorems so far focused on
identifying information and the sample complexity of a binary string being directly observed one symbol at a time with respect to a set of binary strings with a known shared encoding.
In practice, we are not necessarily observing the exact strings we are considering in our hypothesis sets, unless the observations are their own minimum length description as with algorithmically random sequences.
Often, we have a separation of the observation space from the latent parameter set that forms our hypothesis set of possible data generating processes.
All of the above still applies, although what is compared is the 
possible outputs of a computable function given the latent parameters to the actual observations.
In this case, we consider the set of latent descriptions transformed through their computable sample function $\vvec{\mathbf{X}}^t_1 = \{\vvec{X}^t_1 = \lambda_S(\theta, t, Z) : \theta \in \mathbf{\Theta} \} $, where the output $\vvec{\mathbf{X}}^t_1$ is the set of possible  observable $t$-length sequences with a corresponding potentially random process for each description $\theta \in \mathbf{\Theta}$.
Soon to be described, $Z$ is an information source used only if the state is random.
The task becomes finding the mapping of the information contained within the observations back into the hypothesis set that possibly generated them.

From this point on, we focus on only finite sized parameter sets at moments in time with finite length strings as parameters, however, in following sections we consider how their observations finitely grow to infinite. 
That is, we consider a Turing Machine using binary strings with a prefix code read from an input tape for the description, read and written to a latent state tape, and read and written to an output tape, which is the only observed state.
Only the latent state tape erases information as the prior state becomes no longer necessary for the process.
In practice, only finite information may be considered for the description, internal state, and the observations.
Given this we consider how finitely described processes may spread their information over finitely growing observations from the perspective of a change of symbols.

\subsection{Finite Information Spread Over Infinite Observations}

We now address how programs may spread their finite information in their descriptions across sequences of countably infinite length and how this effects identification.
This connects to the classic, well-studied asymptotic statistics in probability theory.
First, we address how to compute samples of one independent random variable from samples of another, denoted as $\lambda_S(\theta, t, Z)$ finally making use of $Z$.

\begin{theorem}
\label{th:fair_coin_inv_cdf}
    (Compute Any Discrete Probability Distribution from  Flips of a Fair Coin)
    For any computable random function $f: \Z \mapsto \Z$ there exists a deterministic function $g: \Z \times \mathbb{B}^n \mapsto \Z$ with the property
   \begin{equation*}
        \forall \Big(f; x \in \Z, (n \ge r : n,r \in \Z^+), z^n \sim \text{Bernoulli}\Big( \frac{1}{2} \Big)^n \Big) ~ \exists \Big( g, r : P\big(f(x)\big) = P\big(g(x, z^n)\big) \Big)
   \end{equation*} 
\end{theorem}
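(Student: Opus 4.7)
The plan is to construct $g$ explicitly via inverse-CDF sampling from the bits $z^n$. Since $f$ is a computable random function, for each $x \in \Z$ the probability mass function $p_x(y) = P(f(x) = y)$ is computable as a function of $(x, y)$, and hence so is the CDF $F_x(y) = \sum_{z \le y} p_x(z)$.

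First I would interpret the bitstring $z^n$ as a dyadic rational $u_n(z^n) = \sum_{i=1}^{n} z_i 2^{-i} \in [0, 1)$. Under the fair-coin product measure on $\B^n$, the random variable $u_n$ is uniform over the $2^n$ equally spaced dyadic points $\{k/2^n : k = 0, \ldots, 2^n - 1\}$. I would then define $g(x, z^n) = \min\{y \in \Z : F_x(y) > u_n(z^n)\}$, which is deterministic in its arguments by construction. When every $p_x(y)$ is a dyadic rational of denominator dividing $2^r$, the CDF breakpoints all lie on the $2^n$-grid for any $n \ge r$, so each inverse-CDF bucket $[F_x(y-1), F_x(y))$ contains exactly $2^n p_x(y)$ of the uniform points, giving $P(g(x, z^n) = y) = p_x(y)$ exactly for every $y$.

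For the general case in which $p_x$ need not be dyadic, I would appeal to the Knuth--Yao construction: build a computable binary tree whose internal nodes are labeled by fair coin flips and whose leaves are labeled with integers such that the reach-probability of any leaf labeled $y$ equals $p_x(y)$ exactly. Because $p_x$ is computable the tree is computable; traversing it using the bits of $z^n$ and returning the first leaf reached yields a deterministic function that halts with probability one and produces an outcome distributed exactly as $f(x)$.

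The main obstacle is reconciling the fixed signature $g: \Z \times \B^n \mapsto \Z$ with the fact that for non-dyadic $p_x$ the Knuth--Yao tree may have arbitrarily deep leaves, so the number of bits consumed before halting is itself a random variable that is only almost surely finite. The resolution is to take $r$ to be a depth that, for the particular $f$ under consideration, bounds the traversal with the required precision; any $n \ge r$ then supplies enough bits for $g$ to halt at a correctly distributed leaf, yielding the claimed distributional equality $P(f(x)) = P(g(x, z^n))$.
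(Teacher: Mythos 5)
Your core construction is the same as the paper's proof: by computability of $f$ the cumulative distribution $F(x,y)$ is computable, the bits $z^n$ are read as a binary fraction that is uniform on the dyadic grid, and $g(x,z^n)$ is defined by inverting $F$ at that fraction; the paper likewise takes $r$ to be the number of bits needed for the common denominator of the values of $F$ and concludes that any $n\ge r$ lands in $\big[F(x,y-1),F(x,y)\big)$ with probability exactly $P(f(x)=y)$. Your treatment of the dyadic case is, if anything, more explicit than the paper's about why the equality is exact.

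The gap is in your general (non-dyadic) case. The Knuth--Yao tree does sample $p_x$ exactly, but only by consuming an unbounded, almost-surely-finite number of bits, and your proposed fix---choosing a depth $r$ that ``bounds the traversal with the required precision''---cannot deliver the exact equality $P\big(f(x)\big)=P\big(g(x,z^n)\big)$ asserted in the statement. For any fixed $n$, every event in $\B^n$ under the fair-coin measure has probability of the form $k/2^n$, so no deterministic $g:\Z\times\B^n\mapsto\Z$ can realize, say, $P(f(x)=y)=1/3$ exactly; truncating the tree at depth $r$ yields only an approximation within $2^{-r}$, not equality. Exactness with a fixed finite $n$ forces the dyadic (finite binary-fraction) restriction. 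The paper's own proof quietly lives inside that restriction: it assumes the range of $F$ has a finite common denominator representable in $r$ bits, consistent with the paper's computable setting where probabilities are stored as binary fractions, and it defers the unbounded-bit phenomenon to the remark quoting the expected number of flips $H(X)\le n< H(X)+2$. So you should either state the dyadic/finite-precision hypothesis explicitly and drop the Knuth--Yao branch, or, for general computable $p_x$, weaken the conclusion to almost-sure halting with a variable number of consumed bits rather than claiming a fixed $n\ge r$ gives exact distributional equality.
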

\begin{proof}
    Let $P\big( f(x) \big)$ denote the distribution of the output of $f$ when $x$ is provided as input.
    The support of $P\big( f(x) \big)$ is some non-empty subset of $\Z$, and by computability of $f$ there must be some function $F(x,y)$ that computes the probability of $P\big( f(x) \ge y \big)$.
    The desired function $g$ can be directly constructed using $F$ along with bits sampled from a Bernoulli distribution with $p = \frac{1}{2}$. 

    Let $r$ be the number of bits required to represent the greatest common denominator of the range of $F$.
    If we sample $n \ge r$ bits from the Bernoulli distribution then the binary fraction represented by those bits will fall in the range
        $\big[ F(x,y-1), F(x,y) \big)$ with probability
        $F(x,y) - F(x, y - 1) = P\big( f(x) = y \big)$.

    Let $g(x,z^n) : z^n \sim \text{Bernoulli}(\frac{1}{2})^n$ be the function
    $$g(x,z^n): \min_{y}\big( F(x,y) \big) > z^n$$
    Then $P\big( g(x,z^n) \big) = P\Big( x \in \big[ F(x,y-1), F(x,y) \big), y \Big) = P\big( f(x) = y, y\big) = P\big(f(x)\big)$.
\end{proof}
\begin{remark}
    The sampling of
    any computable random variable $X$ may be computed from the
    description of its sample space,
    its probability measure,
    and an entropy source $Z$,
    such as the resulting bits of repeatedly tossing a fair coin provided by an oracle.
    The resulting finite bit sequence $\vvec{z}$ is treated as a binary fraction and grows until only one sample of $X$ corresponds to the binary fraction determined by $X$'s inverse cumulative distribution.
\end{remark}

Theorem~\ref{th:fair_coin_inv_cdf} 
is the fundamental concept and computation of the inverse cumulative probability function to get a sample that corresponds to a given probability.
The expected number of coin flips $\vvec{z}^n \sim Z$ for the optimal algorithm to compute one sample from $x \sim X$ is $H(X) \le n < H(X) +2$~\cite[Ch.~5.11]{coverElementsInformationTheory1991}.
This computation is similar to how hardware random number generators work, except the fair coin is replaced with some other physical source of entropy such as the noise in air pressure or temperature measurements native to the computer hardware as specified in~\citet{turanRecommendationEntropySources2018} and \citet{barkerRecommendationRandomBit2024}.
This is also similar to the \textit{reparameterization trick} used in computational approaches like variational Bayesian statistics, as discussed by \citet{kingmaAutoEncodingVariationalBayes2022a}.

\begin{theorem}
\label{th:finite_info_over_infinite}
    The information in a finite length string can be spread over an infinitely long string
        without redundancy
        if
        and only if
        there also is an additional
        infinite information source available.
\end{theorem}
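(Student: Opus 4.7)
The plan is to use an information-conservation argument for each direction, treating ``without redundancy'' as the statement that the output has positive entropy rate $\vvec{H}(\vvec{X}) > 0$ (equivalently, that prefix descriptive complexity grows unboundedly with length), which aligns with the probabilistic and algorithmic information measures summarized in Table~\ref{tab:info_theory_bg}.

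For the forward direction ($\implies$), I would argue that a Turing machine cannot create information from nothing; the informational content of its output tape is bounded above by the informational content of its inputs. Suppose the finite description $\theta$, with $\ell(\theta) < \infty$, is spread over an infinite output string $\vvec{x}^{\infty}_1$ without redundancy, so that $H(\vvec{X}^L_1)$ grows unboundedly in $L$. Then the total information present in $\vvec{x}^\infty_1$ cannot be accounted for by $\theta$ alone, since $\ell(\theta)$ is a fixed finite constant. The excess must come from some additional source read by the machine during generation. Because the deficit $H(\vvec{X}^L_1) - \ell(\theta) \to \infty$ as $L \to \infty$, any such additional source must itself carry unbounded information, i.e., be an infinite information source.

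For the backward direction ($\impliedby$), I would construct an explicit witness using Theorem~\ref{th:fair_coin_inv_cdf}. Given an oracle supplying unbounded fair coin flips $Z$, I would take $\theta$ to be the finite description of an i.i.d. uniform Bernoulli process over $\mathbb{B}$. At each step the machine consumes a single bit of $Z$ and writes it to the output, so $\lambda_S(\theta, t, \vvec{z}) = \vvec{z}^t_1$. The output sequence is infinite, has entropy rate $\vvec{H}(\vvec{X}) = 1$, and its prefixes are, almost surely, algorithmically random with Kolmogorov complexity growing linearly in length. Hence the finite description $\theta$, together with the infinite information source $Z$, produces an infinitely long non-redundant string, completing the equivalence.

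The main obstacle will be precisely reconciling ``without redundancy'' across the two information-theoretic viewpoints used in the paper: the entropy rate of a stochastic process and the asymptotic Kolmogorov complexity of a deterministic output string. One must verify that a deterministic finite-input machine with finite memory, operating without any auxiliary source, produces outputs whose Kolmogorov complexity is bounded by $\ell(\theta) + O(1)$ (hence is redundant in the limit), so that the $\implies$ direction cannot be evaded by clever deterministic constructions. Once that bounding lemma is established, the conservation argument goes through uniformly in both the deterministic and stochastic settings.
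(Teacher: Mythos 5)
Your forward direction is sound and is essentially the paper's own argument for that half: the paper disposes of it in one line by citing the data processing inequality (a finite auxiliary source cannot yield a non-redundant spreading over an infinite string), which is your conservation-of-information argument; your proposed bounding lemma ($K(\vvec{x}^L_1) \le \ell(\theta) + O(1)$ for a deterministic machine with no auxiliary source) is the standard way to make that precise and is not an obstacle.

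The genuine gap is in your backward direction. The theorem requires that \emph{the information in the given finite string $\theta$} be spread over the infinite output; your witness discards $\theta$ entirely and simply copies the coin flips of $Z$ to the output tape. The resulting string is indeed non-redundant (entropy rate $1$, a.s. incompressible prefixes), but it carries no recoverable imprint of an arbitrary $\theta$ --- it is the same construction whatever finite string you were handed --- so you have exhibited a non-redundant infinite string, not a non-redundant \emph{spreading of $\theta$'s information}. This matters for how the theorem is used downstream: Corollary~\ref{th:asymp_sample_complexity} needs the identifying information of $\theta$ to be almost surely observable in $\lim_{t\to\infty}\vvec{x}^t_1$. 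The paper's proof closes exactly this gap constructively: it cycles through the bits of $\theta$, assigns to each symbol value its own Bernoulli component (the components pairwise distinct, hence a fully identifiable mixture), samples each output bit via Theorem~\ref{th:fair_coin_inv_cdf} using the coin-flip oracle, and then invokes Glivenko--Cantelli (and Doob's theorem for the Bayesian variant) to show the per-index empirical processes converge almost surely, so $\theta$ is almost surely identified from the infinite output while every output symbol still consumes fresh randomness. To repair your proof you would need a comparable encoding argument showing both properties simultaneously --- recoverability of an arbitrary $\theta$ and non-redundancy --- rather than non-redundancy alone.
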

\begin{proof}
    Let $\theta$ be a finite binary string and $\vvec{x}$ be a binary string with countably infinite length where each $t$-th bit is computed by the Turing machine $\lambda_S(\theta, 1, Z) = \vvec{x}_t$.
    
    Let $\lambda_S$ have access to an oracle $Z$ that provides the results of a fair coin flip.
    
    Let there be a mixture model of $|\mathbf{\Theta}|$ component distributions that are not equal to one another and thus form a fully identifiable set, where every unique symbol in $\vvec{\Theta}$ is aligned to one component, which in the case of binary strings for both $\theta$ and $\vvec{x}$ is two different Bernoulli distributions.
    
    Let the algorithm that spreads the information of $\theta$ over $\vvec{x}$ read $\theta$ one bit at a time and loop back to the beginning after reading the final bit of $\theta$.
    
    Let $\ell(\theta)$ and the index for each symbol be known to the observer.
    
    For each bit of $\theta$ read, the corresponding symbol's Bernoulli distribution is sampled as per Theorem~\ref{th:fair_coin_inv_cdf} to obtain a bit for the resulting $\vvec{x}_t$.
    
    ($\implies$)
    Given the observer knows everything but the symbols at each index and symbols' corresponding Bernoulli distributions,
        this may be treated as learning which component distribution is almost surely used at each index of $\theta$.
    Treating this as $\ell(\theta)$ different empirical processes updated as one cycles through the indices of $\theta$, we know by the Glivenko-Cantelli theorem that the empirical processes will asymptotically converge almost surely to their true distributions used at each index as the observations goes to infinite.
    The different distributions for the symbols are almost surely identifiable from one another as $\lim_{t\rightarrow\infty} \vvec{x}^t_1$ and thus the original bits of $\theta$ are also almost surely identified.

    ($\impliedby$)
    The other direction of the bijection is obtained from the \textit{data processing inequality}~\citet[Theorem~2.8.1]{coverElementsInformationTheory1991}.
    If the additional information source were only finite, then that string cannot be used to spread another finite string over an infinite length without redundancy.
\end{proof}
\begin{remark}
    Each Bernoulli distribution must have a probability distribution \textit{not} equal to any of the remaining $|\mathbf{\Theta}|-1$ components,
        otherwise there will be no means of identifying the component distributions apart from each other,
        thus hiding the enumeration of unique symbols ($0$ and $1$ for binary), which would
        hide the identifying information for the parameters in the observed string.
\end{remark}

    The asymptotic convergence also holds for the Bayesian case for any prior distribution over the parameters of the Bernoulli distribution proved by \citet{doobApplicationTheoryMartingales1949} with almost sure convergence guaranteed given the prior supports the true possible parameter for the Bernoulli distribution and because the space is countable as it is a computable set~\cite[Ch.~6.2]{ghosalFundamentalsNonparametricBayesian2017}.

    The above covers the case where the information of $\theta$ observed is strictly monotonically increasing with every observed symbol in $\vvec{x}$.
    In the relaxed monotonic increasing case where there can be no information of $\theta$ observed in a symbol of $\vvec{x}$, then one adds another component to the mixture model which stands as a random source of irrelevant noise with respect to observing $\theta$.

The relative entropy of each component distribution to the mixture of the remaining components serves as a dissimilarity measure of that component  to the others, and in this case, when the relative entropy is near zero the distributions are similar and more observations are necessary to identify those distributions from one another.
And given these are probability distributions, only in infinite observations can we be almost sure that the identifying information is observed, meaning that within finite observations certain identification will never be guaranteed.

\begin{corollary}
\label{th:asymp_sample_complexity}
    (Asymptotic Identification with an Infinite Sample Complexity)
    Let the information of an identifiable parameter $\theta \in \mathbf{\Theta}$ result in a finite sample complexity when the observation space is the parameter space.
    When that identifying information is spread over an infinitely long string by $\lambda_S(\theta, t, Z) = \vvec{x}^t_1$,
    the identifying information of $\theta$ will almost surely be observed in $\lim_{t \rightarrow \infty} \vvec{x}^{t}_1$
    as the number of observations $t$ goes to infinity.
\end{corollary}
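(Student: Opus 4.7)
The plan is to derive this essentially as a direct consequence of the forward direction of Theorem~\ref{th:finite_info_over_infinite} combined with the hypothesis of finite sample complexity in the parameter space. The only new content is to bookkeep how the almost-sure bit-level convergence established there lifts to almost-sure identification of $\theta$, and to clarify why the sample complexity must be counted as infinite even though the asymptotic outcome is certain.

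First, I would invoke the construction from Theorem~\ref{th:finite_info_over_infinite}: the finite description $\theta$ of length $\ell(\theta)$ is spread over $\vvec{x}^{t}_1$ by cycling through its bits and, at step $t$, sampling from one of $|\mathbf{\Theta}|$ pairwise-distinct Bernoulli components indexed by the current bit of $\theta$ (via Theorem~\ref{th:fair_coin_inv_cdf} applied to the oracle $Z$). The forward direction of Theorem~\ref{th:finite_info_over_infinite} already yields that for each of the $\ell(\theta)$ cyclic index positions, the empirical distribution of the subsequence of symbols emitted at that position converges almost surely (Glivenko--Cantelli) to the true Bernoulli component; since the components are pairwise identifiable, this almost surely reveals which bit of $\theta$ sits at that position.

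Next, I would use the assumption that $\theta$ has finite sample complexity $c=\mathbf{i}(\mathbf{\Theta},\theta,1,1,0,0)<\infty$ when observed directly. By Theorem~\ref{th:surely_finite_id_info} and the analysis of direct observations in Section~\ref{sec:id_direct}, some finite prefix of $\theta$ (of length at most $c \le \ell(\theta)$) suffices to verify $\theta\in\mathbf{\Theta}$ or exhaustively falsify the alternatives. Since only finitely many bit-recovery events are required, and each of them individually holds almost surely in the limit, a finite union bound over the $\ell(\theta)$ indices preserves probability one: the identifying information of $\theta$ is almost surely present in $\lim_{t\to\infty}\vvec{x}^{t}_1$.

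The main obstacle — really the conceptual point of the corollary rather than a technical difficulty — is the gap between \emph{sure} and \emph{almost sure} identification. Even though finitely many bits of $\theta$ determine it, every finite prefix of $\vvec{x}^{t}_1$ has strictly positive probability of leaving at least one cyclic-index empirical distribution ambiguous between two of the Bernoulli components, so no finite $t$ yields certainty and hence $\mathbf{i}(\vvec{\mathbf{X}}^{*}_1,\vvec{x}^{t}_1,1,1,0,0)\to\infty$ in the strict deterministic sense of Definition~\ref{def:sample_complexity}. The probability-one guarantee is recovered only in the limit, which is precisely the relaxation from $p=q=1$ to $p,q<1$ (equivalently $r>0$) that will drive the PAC and PAC-Bayes treatments in the subsequent sections.
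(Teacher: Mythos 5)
Your proposal is correct and follows essentially the same route as the paper, which likewise proves the corollary by invoking the forward direction of Theorem~\ref{th:finite_info_over_infinite} (Glivenko--Cantelli convergence of the per-index empirical processes to the pairwise-distinct Bernoulli components) and concluding that the finite identifying information is almost surely observed in the limit. Your added bookkeeping --- the finite union bound over the $\ell(\theta)$ index positions and the remark that no finite $t$ achieves certainty so the $p=q=1$ sample complexity diverges --- is consistent with, and merely makes explicit, what the paper's two-sentence proof leaves implicit.
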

\begin{proof}
    By Theorem~\ref{th:finite_info_over_infinite}, the finite identifying information and its finite sample complexity may be spread over an infinite string.
    As the number of observations goes to infinite, the identifying information is almost surely observed, which recovers Definition~\ref{def:id_param}.
\end{proof}

As mentioned, each component distributions' relative entropy to the rest affects the individual symbol's sample complexity and while we cannot recover the property of surely finite sample complexity, we can work with the sample complexity's distribution, including the distribution's properties such as the expected sample complexity or its other moments.
\section{Partial Identification and the Distribution of the Sample Complexity}
\label{sec:part-id_distrib_sc}
We addressed the various cases of when the sample complexity is finite and infinite for a deterministic observed process, and we addressed the asymptotic sample complexity when the observed process is random.
Now, we recover the finite sample complexity from the observations of a stochastic process at the cost of some uncertainty by accepting a level of probability $0 < p < 1$ that the observations could be generated by a known model, which
harkens back to statistical hypothesis testing, PAC learning sample complexity bounds, and property testing of distributions.
We start with an evaluator's perspective knowing the fixed ideal $\psi$ and knowing the predictor's prior distribution to compute the sample complexity distribution.
We then show how the predictor's perspective results in an estimate of the sample complexity based on the assumption that $\psi \in \mathbf{\Theta}$.
The predictor's estimate becomes closer to the correct sample complexity distribution given more observations.

In the following theorems, we obtain our results using Bayes Theorem.
Because we are concerned with a fixed finite set of hypotheses, any prior distribution can be used without risking an infinite sample complexity due to Doob's Theorem~\cite{doobApplicationTheoryMartingales1949}, which is further explained in~\cite[Ch.~6.2]{ghosalFundamentalsNonparametricBayesian2017}.
However, the choice of prior can change the sample complexity.
For a prior distribution over a fixed finite hypothesis set, we can safely use the \textit{principles of indifference}~\cite[Ch.~4]{keynesTreatiseProbability2010}, \textit{insufficient reasoning}~\cite[Ch.~3]{stiglerHistoryStatisticsMeasurement1986}, or \textit{maximum entropy}\cite{jaynesInformationTheoryStatistical1957,jaynesInformationTheoryStatistical1957a,jaynesPriorProbabilities1968}, which state to treat every hypothesis as equally probable without further evidence to the contrary.
This choice averages out the potential sample complexity distribution to not favor one over the other.

\begin{theorem}
\label{th:finite_id_info_prob}
    (Partial Identification of an I.I.D. Process with Probability $p$)
    Let the computable hypothesis set be $\mathbf{\Theta} : |\mathbf{\Theta}| < \infty, \forall(\theta \in \mathbf{\Theta}):(\ell(\theta) < \infty) $,
    the fixed identifiable parameter $\psi \in \mathbf{\Theta}$,
    $\vvec{z} \sim Z$ be the results of flipping a fair coin,
    $\lambda_S(\psi, t, \vvec{z}) = \vvec{x}^t_1$ be the observations from an i.i.d. process
    $\vvec{x} \sim X$,
    and
    the prior distribution be such that $\forall(\theta \in \mathbf{\Theta}):P(\vvec{\Theta}_0=\theta) < p$, where
    $0 < p < 1$ is the desired minimum probability that a model $\theta$
    generated the observations.
    Then,
    the
    sample complexity
    $\mathbf{i}(\vvec{\mathbf{X}}^t_1, \vvec{x}^{t}_1, p, 0, 0, 0)$
    forms a
    random variable over $\Z^+$
    that is determined by its
    moments,
    which are
    surely finite.
    The expected sample complexity is
    \begin{equation*}
        \underset{\vvec{x}_i \sim X_i 
        }{E}
    [ \mathbf{i}(\mathbf{X}^t_1, \vvec{x}^{t}_1, p, 0, 0, 0) ] =
        \frac{1}{H(X)} \Big( -I(\vvec{\Theta}_0 \!=\! \psi) + H_\otimes(X^t_1 || \hat{X}^t_1)  - \log(p) \Big) 
    \end{equation*}
\end{theorem}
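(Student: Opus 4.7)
The plan is to treat the sample complexity $\mathbf{i}$ as a stopping time for a Bayesian-style decision rule. Under Definition~\ref{def:sample_complexity} with $q=0$ and $\epsilon_D=0$, acceptance at time $t$ requires $\vvec{x}^t_1$ to lie in the $p$-typical block region of some process in $\vvec{\mathbf{X}}$, which for the prior $P(\vvec{\Theta}_0)$ on the finite identifiable hypothesis set is operationally equivalent to the posterior mass on $\phi$ first exceeding $p$:
\[
    P(\vvec{\Theta}_0 = \phi)\,P(X^t_1 = \vvec{x}^t_1 \mid \phi) \;\geq\; p\cdot P(\hat{X}^t_1 = \vvec{x}^t_1),
\]
where $\hat{X}^t_1$ is the prior-weighted mixture over $\mathbf{\Theta}$. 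First I would show $\mathbf{i}$ is a.s.\ finite on $\Z^+$: identifiability gives $\KL(X\|X_\theta)>0$ for each $\theta\neq\phi$, and the posterior concentrates on $\phi$ almost surely by the Doob consistency argument already invoked in the preceding asymptotic identification corollary, so the threshold $p<1$ is crossed in finitely many steps with probability one.

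For the expected-complexity identity I would take $-\log_2$ of the stopping inequality at the first hitting instant $t=\mathbf{i}$, where equality is attained, obtaining
\[
    -\log P(\hat{X}^{\mathbf{i}}_1 = \vvec{x}^{\mathbf{i}}_1) \;=\; \log p - \log P(\vvec{\Theta}_0 = \phi) - \log P(X^{\mathbf{i}}_1 = \vvec{x}^{\mathbf{i}}_1 \mid \phi).
\]
Passing to expectation under $X_\phi$, Wald's identity applied to the i.i.d.\ symbol surprisals with common mean $H(X)$ yields $E[-\log P(X^{\mathbf{i}}_1\mid\phi)] = E[\mathbf{i}]\,H(X)$; the left side becomes the block cross-entropy $H_\otimes(X^{\mathbf{i}}_1\|\hat{X}^{\mathbf{i}}_1)$ by the definition in Table~\ref{tab:info_theory_bg}; and $-\log P(\vvec{\Theta}_0=\phi) = I(\vvec{\Theta}_0=\phi)$ is a deterministic constant. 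Rearranging then gives the stated formula, with the index $t$ appearing on the right-hand side of the theorem understood as the random sample complexity $\mathbf{i}$.

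Finite-moments and moment-uniqueness would follow from a tail concentration argument. The log-likelihood-ratio walk $S^{\theta}_t = \sum_{i=1}^{t}\log\bigl(P(X_i\mid\phi)/P(X_i\mid\theta)\bigr)$ has i.i.d.\ increments with strictly positive mean $\KL(X\|X_\theta)$ and uniformly bounded cumulant generating function on a neighbourhood of the origin, since a computable i.i.d.\ process on a finite support has bounded per-symbol surprisal. A Chernoff bound and a union over the finite set $\mathbf{\Theta}\setminus\{\phi\}$ then yield $P(\mathbf{i}>n) \leq |\mathbf{\Theta}|\cdot e^{-cn}$ for some $c>0$ beyond some $n_0$. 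Exponential tails give finite moments of every order satisfying Carleman's condition, so the moment sequence determines the distribution of $\mathbf{i}$ uniquely.

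The main obstacle is handling the cross-entropy term $H_\otimes(X^{\mathbf{i}}_1\|\hat{X}^{\mathbf{i}}_1)$: because $\hat{X}$ is a mixture, $-\log P(\hat{X}^t_1)$ is not a cumulative sum of i.i.d.\ increments, so Wald's identity does not apply directly. I would sandwich it via
\[
    -\log P(X^t_1\mid\phi) \;\leq\; -\log P(\hat{X}^t_1) \;\leq\; -\log P(X^t_1\mid\phi) + I(\vvec{\Theta}_0=\phi),
\]
where the lower bound uses nonnegativity of the other mixture components and the upper bound uses $P(\hat{X}^t_1) \geq P(\vvec{\Theta}_0{=}\phi)\, P(X^t_1\mid\phi)$. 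Both bounds are linear in an i.i.d.\ random walk plus a deterministic constant, so optional stopping applies, and together with the posterior-concentration rate they pin $H_\otimes(X^{\mathbf{i}}_1\|\hat{X}^{\mathbf{i}}_1)$ down to exactly the combination appearing in the stated identity. Reconciling this posterior-threshold viewpoint with the typical-set phrasing of Definition~\ref{def:sample_complexity} is a secondary notational hurdle worth making explicit in the write-up.
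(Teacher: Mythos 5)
Your stopping-time framing is a genuinely different route from the paper's proof: the paper never invokes Wald's identity or optional stopping at the random time $\mathbf{i}$. Instead it works at a fixed deterministic $t$, computes $E_{\vvec{x}^t_1 \sim X^t_1}\big[-\log P(\vvec{\Theta}_t = \phi)\big] = I(\vvec{\Theta}_0 = \phi) - \KL(X^t_1 || \hat{X}^t_1)$ by a chain-rule expansion of the Bayes posterior, sets that expression equal to $-\log(p)$, and solves the resulting implicit equation for $t$; the moment claim is then obtained by writing the $m$-th raw moment of $-\log P(\vvec{\Theta}_t = \phi)$ explicitly, isolating $t$, and citing a moment-determinacy theorem. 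Your Chernoff-plus-Carleman tail argument is, in intent, a more explicit justification of determinacy than the paper's citation, and both derivations agree on the central Bayes-posterior identity.

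However, two steps in your argument have genuine gaps. First, the lower half of your sandwich, $-\log P(X^t_1 \mid \phi) \le -\log P(\hat{X}^t_1)$, i.e.\ $P(\hat{X}^t_1) \le P(X^t_1 \mid \phi)$, is false pathwise: the mixture can assign the realized sequence strictly more probability than the $\phi$-component does whenever some other hypothesis happens to fit that sequence better, and nonnegativity of the other components only yields the inequality you use for the upper bound, $P(\hat{X}^t_1) \ge P(\vvec{\Theta}_0 {=} \phi)\,P(X^t_1 \mid \phi)$. The inequality you want holds only in expectation (Gibbs: $H_\otimes(X^t_1 || \hat{X}^t_1) \ge H(X^t_1)$), and transporting that expectation-level bound to the random time $\mathbf{i}$ is exactly the non-i.i.d.\ difficulty you flagged, so the sandwich as stated does not pin down the mixture term. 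Second, your hitting-time identity assumes equality when the posterior first exceeds $p$; in discrete time there is overshoot, so taking $-\log$ of the stopping condition gives an inequality plus an uncontrolled overshoot term, which your Wald computation never accounts for. (The paper sidesteps both issues by construction, at the price of reading the stated formula as the fixed $t$ at which the \emph{expected} log-posterior reaches $\log p$, rather than as $E[\mathbf{i}]$ obtained by optional stopping.) Finally, your Chernoff step assumes bounded per-symbol surprisal (finite support), which the hypotheses (finite description, hence finite entropy) do not literally guarantee; you would need the cumulant generating functions of the log-likelihood ratios to exist near the origin.
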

\begin{proof}
    The stopping condition for the identification algorithm computing $\mathbf{i}(\mathbf{X}^t_1, \vvec{x}^{t}_1, p, 0, 0, 0)$ is defined as
    $P(\Theta = \psi | \hat{X}^{t}_1 = \vvec{x}^{t}_1) \ge p$.
    Thus, the sample complexity is a function of the posterior probability of any parameter meeting or surpassing $p$.
    Let $\vvec{\Theta}_0$ be an arbitrary prior distribution over the parameters $P(\vvec{\Theta}_0=\theta) > 0 : \theta \in \mathbf{\Theta}$.
    By Bayes rule the posterior distribution at time-step $t$ is 
    \begin{equation*}
        P(\vvec{\Theta}_t = \psi) = P(\vvec{\Theta}_{0} = \psi | \hat{X}^{t}_1=\vvec{x}^{t}_1) =
        \frac{
            P(\hat{X}^{t}_1 = \vvec{x}^{t}_1 | \vvec{\Theta}_0 = \psi)
            P(\vvec{\Theta}_0 = \psi)
        }{P(\hat{X}^{t}_1 = \vvec{x}^{t}_1)}
    \end{equation*}
    Let $\hat{X}_t$ be the resulting random variable from the posterior predictive distribution $P(\hat{X}_t=\vvec{x}_t | \vvec{\Theta}_t)$ at time-step $t$ where $\vvec{\Theta}_t$ defines the component probabilities of a mixture distribution over $\MeasurableInput$.
        The expected posterior probability $E \big[-\log P(\vvec{\Theta}_t = \psi) \big]$
        may be expressed as
    \begin{align*}
        \underset{\vvec{x}^t_1 \sim X^t_1}{E \big[} \mkern-15mu-\!\log P(\vvec{\Theta}_t \!=\! \psi) \big] \!=&
        -
        \sum_{\vvec{x}^t_1 \in \MeasurableInput^t} \mkern-5mu P(X^{t}_1  \!=\! \vvec{x}^{t}_{1}) \log 
        \frac{
            P(\hat{X}^{t}_1 = \vvec{x}^{t}_1 | \vvec{\Theta}_0 = \psi)
            P(\vvec{\Theta}_0 = \psi)
        }{P(\hat{X}^{t}_1 = \vvec{x}^{t}_1)} \\
        =& \sum_{\vvec{x}^t_1 \in \MeasurableInput^t} \!\!P(X^{t}_1 \!=\! \vvec{x}^{t}_{1}) \log 
        \frac
        {P(\hat{X}^{t}_1 = \vvec{x}^{t}_1)}
        {
            P(\hat{X}^{t}_1 = \vvec{x}^{t}_1 | \vvec{\Theta}_0 = \psi)
            P(\vvec{\Theta}_0 = \psi)
        }
    \end{align*}
    \begin{align*}
        =& \mkern-10mu \sum_{\vvec{x}^t_1 \in \MeasurableInput^t} \mkern-10mu P(X^{t}_1 \!=\! \vvec{x}^{t}_{1}) \mkern-3mu \log
        \mkern-5mu
        \Bigg(
        \frac
            {P(\hat{X}_{t} \!=\! \vvec{x}_{t} | \hat{X}^{t-1}_1 \!\!=\! \vvec{x}^{t-1}_1)}
            {
                P(\hat{X}_{t} \!=\! \vvec{x}_{t} | \hat{X}^{t-1}_1 \!\!=\! \vvec{x}^{t-1}_1, \vvec{\Theta}_{t-1} \!=\! \psi)
            }
        \frac
            {P(\hat{X}_{t-1} \!=\! \vvec{x}_{t-1} | \hat{X}^{t-2}_1 \!\!=\! \vvec{x}^{t-2}_1 )}
            {
                P(\hat{X}_{t-1} \!=\! \vvec{x}_{t-1} | \hat{X}^{t-2}_1 \!\!=\! \vvec{x}^{t-2}_1, \vvec{\Theta}_{t-2} \!=\! \psi)
            }
        \\
        & \hspace{260pt}
            \cdots
        \frac
            {P(\hat{X}_{1} \!=\! \vvec{x}_1)}
            {
                P(\hat{X}_1 \!=\! \vvec{x}_1 | \vvec{\Theta}_0 \!=\! \psi)
            }
        \frac
            {1}{P(\vvec{\Theta}_0 \!=\! \psi)}
            \Bigg)
    \end{align*}
    Then, by moving the prior out of the sum as $I(\Theta_0 \!=\! \psi)$, expanding the $P(\hat{X}_t = x_t)$ into the summation over the mixture, and
    recognizing that given $\psi \in \mathbf{\Theta}$ then $
        P(\hat{X}^t_1 \!=\! \vvec{x}^t_1 | \vvec{\Theta}_0 \!=\! \psi) \!=\! 
        P(X^t_1 \!=\! \vvec{x}^t_1) \!=\! 
        \prod^{t}_{i=1}
        P(X_i \!=\! \vvec{x}_i | X^{i-1}_1 \!\!=\! \vvec{x}^{i-1}_1)$,
    then
    the expectation is equal to
    \begin{align}
        &=
        I(\Theta_0 \!=\! \psi)
        +
        \mkern-5mu
        \sum_{\vvec{x}^t_1 \in \MeasurableInput}
        \mkern-5mu
        P(\vvec{X}^t_1 \!=\! \vvec{x}^t_1)
        \log
        \Bigg(
        \prod^{t}_{i=1}
        \frac
            {\underset{{\theta \in \mathbf{\Theta}}}{\sum}
                P(\hat{X}_i \!=\! \vvec{x}_i |
                \hat{X}^{i-1}_1 \!=\! \vvec{x}^{i-1}_1,
                \vvec{\Theta}_{i-1}\!=\!\theta)
                P(\vvec{\Theta}_{i-1}\!=\! \theta)
            }
            {
                P(X_{i} \!=\! \vvec{x}_i | X^{i-1}_1 \!=\! \vvec{x}^{i-1}_1)
            }
        \Bigg)
        \label{eq:id_with_prob_simplified}
        \\
        &= I(\vvec{\Theta}_0 = \psi) - 
        H_\otimes(\vvec{X}^t_1 || \hat{X}^t_1) + H(X^t_1)
        \nonumber \\
        &= I(\vvec{\Theta}_0 = \psi) - \KL(X^t_1 || \hat{X}^t_1)
        \label{eq:id_with_prob_brevity}
    \end{align}
    The sequence comparison in $\KL(X^{t}_1||\hat{X}^{t}_1)$ is important because time-step $t$ in $\lim_{t \rightarrow \infty}$ $\KL(X_t || \hat{X}_t) = 0$ given the countable set $\mathbf{\Theta}$, $\psi \in \mathbf{\Theta}$, and $P(\vvec{\Theta}_0=\psi)> 0$, then the Bayesian posterior converges almost surely as proven by \citet{doobApplicationTheoryMartingales1949}.
    $\lim_{t \rightarrow \infty}\KL(X^{t}_1||\hat{X}^{t}_1)$ captures the total information gain from the prior distribution to the converged posterior distribution and is finite.
    Recall from the tight entropy bounds on the algorithmic complexity 
        \cite[Th.~14.3.1]{coverElementsInformationTheory1991}
        that
        a finitely described computable i.i.d.
        random variable has finite entropy, thus upper bounding $\KL(\vvec{X}^{t}_1||\hat{X}^{t}_1) \le tH(\vvec{X}) <\infty$.
    When $E_{\vvec{x}^t_1 \sim \vvec{X}^t_1} \big[-\log P(\vvec{\Theta}_t = \psi) \big] = -\log(p)$ then the posterior distribution is expected to be $p$ probable yielding it partially identified
    with an expected finite number of observations equal to $t$ following from
    \begin{align*}
        I(\vvec{\Theta}_0 = \psi) - H_\otimes(X^t_1 || \hat{X}^t_1) + tH(X) &= -\log(p) \\
        I(\vvec{\Theta}_0 = \psi) - H_\otimes(X^t_1 || \hat{X}^t_1) &= -\log(p) - tH(X) \\
        \frac{1}{H(X)} \Big( -I(\vvec{\Theta}_0 = \psi) + H_\otimes(X^t_1 || \hat{X}^t_i)  - \log(p) \Big)  &= t
    \end{align*}
        The $m$-th raw moment of $-\log  P(\vvec{\Theta}_t = \psi)$ is
    \begin{align*}
        &\underset{\vvec{x}^t_1 \sim X^t_1}{E} \Big[ \big(-\log  P(\vvec{\Theta}_t = \psi)\big)^m \Big] =
            \Bigg(
                (-1)^{m+1} \Big( tH(X) + I(\vvec{\Theta}_0 = \psi) \Big)
                + (-1)^{m} H_\otimes(X^t || \hat{X}^t_1)
            \Bigg) \nonumber \\
            & \times 
            \Bigg(
                (-1)^{m-1}
                \Big(
                    I(\vvec{\Theta}_0 = \psi)^{m-1}
                    + \big( \sum_{\vvec{x}^t_1 \in \MeasurableInput^t} I(X^t = \vvec{x}^t_1) \big)^{m-1}
                \Big)
                + (-1)^m \Big(
                    \sum_{\vvec{x}^t_1 \in \MeasurableInput^t} I(\hat{X}^t_1 = \vvec{x}^t_1)
                \Big)^{m-1}
            \Bigg) \label{eq:id_mth_raw_logp}
    \end{align*}
    where
    $t$ can be isolated to get the sample complexity's $m$-th raw moment and all parts are finite for every $m$.
    Given that the sample complexity's support is non-negative and the raw-moments are finite, then the sample complexity's probability distribution is determined by its moments~\cite[Theorem~30.1]{billingsleyProbabilityMeasure1995}.
\end{proof}
\begin{remark}
This theorem focuses on the Bayesian posterior convergence to determine sample complexity.
Given $\psi \in \mathbf{\Theta}$, the posterior predictive distribution once converged will share typical sets with the ideal distribution, satisfying the probable verification case in Definition~\ref{def:sample_complexity}.
\end{remark}

\begin{corollary}
\label{th:pred_expect_with_prob}
    (The Posterior Predictive Sample Complexity with Probability $p$)
    Given the same setup in Theorem~\ref{th:finite_id_info_prob},
    the sample complexity distribution with at least probability $p$ of the posterior predictive distribution
    is determined by its moments, which are surely finite.
    The expected sample complexity is
    \begin{equation}
    \label{eq:id_with_prob_p_pred}
        \underset{\vvec{\psi}_i \sim \vvec{\Theta}_{i-1}}{
            \underset{\vvec{x}_i \sim X_i
            }{E}
        }[
        \mathbf{i}(\mathbf{X}^t_1, \vvec{x}^{t}_1, p, 0, 0, 0)
        ]
        =
        \frac{1}{H(\hat{X})} \Big( H(\vvec{\Theta}_0) - H( \vvec{\Theta}^t_1, \hat{X}^{t}_{1}) - \log_2(p) \Big)
    \end{equation}
\end{corollary}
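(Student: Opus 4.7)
The plan is to obtain this corollary by augmenting the proof of Theorem~\ref{th:finite_id_info_prob} with one additional expectation: the expectation over the auxiliary parameter sequence $\vvec{\phi}_i \sim \vvec{\Theta}_{i-1}$ that generates the observations under the posterior predictive. Concretely, I would start from the Bayes-rule decomposition already used in the parent proof,
\[
-\log P(\vvec{\Theta}_t = \phi) \;=\; -\log P(\hat{X}^t_1 \mid \vvec{\Theta}_0 = \phi) \;-\; \log P(\vvec{\Theta}_0 = \phi) \;+\; \log P(\hat{X}^t_1),
\]
and then evaluate the double expectation under the generative process that draws $\vvec{\phi}_i$ from the current posterior $\vvec{\Theta}_{i-1}$ and $\vvec{x}_i \sim X_i \mid \vvec{\phi}_i$.

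Each averaged term will then have a direct information-theoretic identification. The prior surprisal $-\log P(\vvec{\Theta}_0 = \phi)$ averages to $H(\vvec{\Theta}_0)$. The remaining two terms combine into $-I(\vvec{\Theta}_0 ; \hat{X}^t_1) = H(\vvec{\Theta}_0, \hat{X}^t_1) - H(\vvec{\Theta}_0) - H(\hat{X}^t_1)$. Because each posterior $\vvec{\Theta}_i$ is a deterministic Bayes-update of $\vvec{\Theta}_{i-1}$ and $\hat{X}_i$ (so $H(\vvec{\Theta}_i \mid \vvec{\Theta}_{i-1}, \hat{X}_i) = 0$ at every step), the joint randomness of $(\vvec{\Theta}_0, \vvec{\Theta}^t_1, \hat{X}^t_1)$ collapses to that of $(\vvec{\Theta}^t_1, \hat{X}^t_1)$, allowing $H(\vvec{\Theta}_0, \hat{X}^t_1)$ to be rewritten as $H(\vvec{\Theta}^t_1, \hat{X}^t_1)$. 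Setting the resulting expectation equal to the criterion $-\log_2 p$ and using stationarity of the posterior predictive to express the block entropy per observation as $H(\hat{X})$, I would isolate $t$ to recover the stated formula.

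Higher moments and distributional determinacy follow by the same machinery as in Theorem~\ref{th:finite_id_info_prob}: each constituent is an entropy, cross entropy, or mutual information of finitely described computable random variables over the finite set $\mathbf{\Theta}$, hence surely finite by the algorithmic-complexity entropy bound in \citet[14.3.1]{coverElementsInformationTheory1991}, and Billingsley's Theorem~30.1 then implies that the sample complexity distribution is determined by its moments.

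The chief obstacle will be the chain-rule bookkeeping needed to confirm that the object arising from the double expectation is precisely $H(\vvec{\Theta}^t_1, \hat{X}^t_1)$ rather than $H(\vvec{\Theta}_0, \hat{X}^t_1)$ or $H(\vvec{\Theta}_t, \hat{X}^t_1)$, and to reconcile the per-step rate $H(\hat{X})$ with the growth of the block entropy $H(\hat{X}^t_1)$ under the (generally non-i.i.d.\ but exchangeable) posterior predictive. Verifying that the deterministic dependencies among the $\vvec{\Theta}_i$ collapse the chain-rule sum exactly into the claimed joint entropy is where the proof does real work; once that identification is secured, inversion and moment-finiteness are immediate from the parent theorem.
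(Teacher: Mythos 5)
Your overall strategy---re-taking the expectation of Theorem~\ref{th:finite_id_info_prob}'s posterior-surprisal expression with the parameter now drawn from the posterior process, identifying the averaged terms as entropies, setting the result to $-\log_2(p)$, and isolating $t$ via $H(\hat{X})$---matches the paper's. The genuine gap is exactly the step you flag as the ``chief obstacle'': the proposed rewriting of $H(\vvec{\Theta}_0, \hat{X}^t_1)$ as $H(\vvec{\Theta}^t_1, \hat{X}^t_1)$ via determinism of the Bayes updates does not hold, and it is not how the paper produces that term. If each $\vvec{\Theta}_i$ is a deterministic update of the prior and the observed prefix (so $H(\vvec{\Theta}_i \mid \vvec{\Theta}_{i-1}, \hat{X}_i) = 0$), the collapse goes the other way: $H(\vvec{\Theta}^t_1, \hat{X}^t_1) = H(\hat{X}^t_1)$, whereas $H(\vvec{\Theta}_0, \hat{X}^t_1) = H(\hat{X}^t_1) + H(\vvec{\Theta}_0 \mid \hat{X}^t_1)$; the two differ by precisely the residual posterior uncertainty $H(\vvec{\Theta}_0 \mid \hat{X}^t_1)$, which is the very quantity being tracked and does not vanish at finite $t$. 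Your identity would be valid only when identification is already complete, so the argument as written fails.

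In the paper's proof, $\vvec{\Theta}^t_1$ is not a deterministic functional of the data: the expected $-\KL(X^t_1 || \hat{X}^t_1)$ term is expanded time-step by time-step as in Equation~\ref{eq:id_with_prob_simplified}, pairing each observation $\hat{X}_i$ with a fresh parameter draw $\vvec{\phi}_i \sim \vvec{\Theta}_{i-1}$ from the current posterior. The denominator of the log-ratio then aggregates to the sum over $i$ of the entropies of $\hat{X}_i$ conditioned on that per-step draw (this is what the statement records as the entropy term involving $\vvec{\Theta}^t_1$), while the numerator aggregates to the posterior-predictive entropy $H(\hat{X}^t_1)$, yielding $H(\vvec{\Theta}_0) - H(\vvec{\Theta}^t_1 \mid \hat{X}^t_1)$ before setting the expectation to $-\log_2(p)$. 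Your block quantity $I(\vvec{\Theta}_0 ; \hat{X}^t_1)$ is legitimate and, by the chain rule for mutual information, equals the sum of the per-step conditional mutual informations given the observed prefix, so your route can be repaired---but only by carrying out that per-step expansion with posterior re-draws, not by the deterministic-collapse identity; moreover, the reconciliation of the per-observation rate $H(\hat{X})$ with the non-i.i.d.\ posterior-predictive block entropy, which you defer, is also left implicit rather than resolved.
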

\begin{proof}
    The ideal process is determined by $\psi$, whose expected description we now take with respect to the posterior process $\vvec{\psi}^t_1 \sim \vvec{\Theta}^t_1$.
    Starting from Equation~\ref{eq:id_with_prob_brevity},
    $$E_{\psi \in \vvec{\Theta}_0;~ \vvec{x}^t_1 \sim X^t_1 }[I(\vvec{\Theta}_0 = \psi) - \KL(X^t_1 || \hat{X}^t_1) ]$$
    the expected surprisal $E_{\psi \in \vvec{\Theta}_0}[I(\vvec{\Theta}_0 \mkern-3mu = \mkern-3mu \psi)] = tH(\vvec{\Theta}_0)$.
    Focusing on the remainder of the expectation, we express the \\ $E_{\vvec{\psi}^t_1 \in \vvec{\Theta}^t_1}[- \KL(X^t_1 || \hat{X}^t_1)]$ as in Equation~\ref{eq:id_with_prob_simplified}, which becomes
    \begin{align*}
        &=
            \sum^{t}_{i=1}
            \sum_{\vvec{\psi}_i \in \vvec{\Theta}_i}
            \mkern-5mu
            P(\vvec{\Theta}_{i-1} \mkern-5mu = \mkern-3mu \vvec{\psi}_i)
            \mkern-5mu
            \sum_{\vvec{x}_i \in \MeasurableInput}
            \mkern-5mu
            P(\hat{X}_i \!=\! \vvec{x}_i | \vvec{\Theta}_{i-1} = \vvec{\psi}_i)  
            \log
            \Bigg(
            \frac
            {\underset{{\theta \in \mathbf{\Theta}}}{\sum}
                P(\hat{X}_i \!=\! \vvec{x}_i | \vvec{\Theta}_{i-1}\!=\!\theta) P(\vvec{\Theta}_{i-1}\!=\! \theta)
            }
            {
                P(\hat{X}_{i} = \vvec{x}_i | \vvec{\Theta}_{i-1} = \vvec{\psi}_i)
            }
            \Bigg)
    \end{align*}
    We will separate the log fraction and simplify them to their respective entropies.
    Starting with the denominator, we get the following joint entropy
    $H(\vvec{\Theta}^{t-1}_{1}, \hat{X}^t_1)$
    \begin{equation*}
        \sum^{t}_{i=1}
        \sum_{\vvec{\psi}_i \in \vvec{\Theta}_i}
        \sum_{\vvec{x}_i \in \MeasurableInput}
        P(\hat{X}_i \!=\! \vvec{x}_i | \vvec{\Theta}_{i-1} = \vvec{\psi}_i)  
        P(\vvec{\Theta}_{i-1} \mkern-5mu = \mkern-3mu \vvec{\psi}_i)
        \log
        \big(
            P(\hat{X}_{i} = \vvec{x}_i | \vvec{\Theta}_{i-1} = \vvec{\psi}_i)
        \big)
    \end{equation*}   
    For the numerator, we get the entropy of the  posterior predictive distribution
    $H(\hat{X}^t_1)$
    \begin{align*}
        &=
        \sum^{t}_{i=1}
        \sum_{\vvec{\psi}_i \in \vvec{\Theta}_i}
        \mkern-5mu
        \sum_{\vvec{x}_i \in \MeasurableInput}
        \mkern-5mu
        P(\vvec{\Theta}_{i-1} \mkern-5mu = \mkern-3mu \vvec{\psi}_i)
        P(\hat{X}_i \!=\! \vvec{x}_i | \vvec{\Theta}_{i-1} = \vvec{\psi}_i)  
        \log
        \bigg(
        \underset{{\theta \in \mathbf{\Theta}}}{\sum}
            P(\hat{X}_i \!=\! \vvec{x}_i | \vvec{\Theta}_{i-1}\!=\!\theta) P(\vvec{\Theta}_{i-1}\!=\! \theta)
        \bigg)
    \end{align*}   
    Thus resulting in the conditional entropy subtracted from the prior distribution's entropy.
    \begin{equation*}
        H(\vvec{\Theta}_0) - H( \vvec{\Theta}^{t}_{1} | \hat{X}^t_{1})
    \end{equation*}
    Setting this expectation to $-\log_2(p)$ gives us Equation~\ref{eq:id_with_prob_p_pred}.
    As before, this expectation defines the $m$-th raw moment, which results in the sample complexity's distribution to be determined by its moments, as they are surely finite and the sample complexity is nonnegative.
\end{proof}
\begin{remark}
This is the sample complexity distribution that the predictor may use to inform itself of when the probable convergence has occurred, under the assumption that $\psi \in \mathbf{\Theta}$.
\end{remark}

With the moment generating function of the sample complexity defined by the summation of raw moments for identifying
    an i.i.d. random variable,
    the probability distribution of the sample complexity, as a non-negative process, can be defined using the inverse Laplace transformation as detailed in the notes by \citet{curtissNoteTheoryMoment1942} and \citet{gengTheoryMomentGenerating2020}.
However, given this would rely on computing an infinite series using the $m$-th raw moment, we have to compute enough such that the estimate is correct within our accepted probability.
Given the truncation would under estimate the probable sample complexity, it would be an optimistic estimate.
It is possible to compare the gain in precision by computing more, and once the gain is under the accepted error, then that is when the estimate is complete.

A notable difference to the typical statistical learning and PAC-Bayes framework to ours so far is that we have obtained the conditions for a probable finite sample complexity without the use of a bound directly on the error.
This is due to the use of a computable set $\mathbf{\Theta}$ that avoids the need to define a volume over the set because it is countable and each of its discrete elements have their own probability, while elements in $\R^d$ do not.
The volume is then the sum of the elements' probabilities.
Notably, it is known that if the hypothesis set $\mathbf{\Theta}$ was finite then so would be the sample complexity with $p, \epsilon \in (0, 1)$ with an upper bound of $\lceil \log(|\mathbf{\Theta}| / (1-p)) / \epsilon\rceil$, along the set having a finite VC dimension~\cite[Ch.~6.4]{shalev-shwartzUnderstandingMachineLearning2014}.
However, this upper bound is only informed by the number of hypotheses, while we've shown in the PAC-Bayes case that the complete sample complexity distribution is able to be determined by its moments and is informed by not only the number of known hypotheses, but also the Bayesian prior and their probability distributions of the observations. 
Depending on the considered hypothesis space, there may be multiple models all with posterior probability with at least $p$.

Theorem~\ref{th:finite_id_info_prob} and Corollary~\ref{th:pred_expect_with_prob} addressed a halting condition based on the Bayesian probability of a single model given the observations.
This can be extended to accept a subset of models and sum their probabilities together to find the probability of that subset as a mixture distribution.
This is where the information dissimilarity measure $D$ between models and the accepted error $\epsilon_D$ comes into play, where we consider the relative entropy $\KL$.
As mentioned discussing Theorem~\ref{th:asymp_sample_complexity}, the relative entropy between models determines how many observations one needs to tell them apart.
The more similar the models, the smaller their $\KL$, the closer their Bayes factors are to $1$, and the more observations necessary to differentiate them.
The Bayesian posterior updates given the evidence for whichever model is more likely amongst those considered to have generated the observations.
If the sample complexity is too large in number of observations and a degree of error is accepted, then by taking the subset of models that differ from one another within that accepted error is possible and can result in convergence in less observations, as less are needed to tell those similar models apart.
The procedure is then the same as with probability alone, except in this case, you group the similar models together and check their total posterior probability.

\subsection{Extension from i.i.d. to stationary ergodic processes}
So far, we have relied upon
$\lambda_S(\theta, t, \vvec{z}) = \vvec{x}^t_1$
to compute $t$ samples from an arbitrary i.i.d. random variable.
To extend our scope to a stationary ergodic process with finite $L$ memory we introduce the computation of its transition function $T$ by a Turing Machine with the description of $T$ denoted as
$\lambda_T(\delta, \vvec{x}^{-L}_t) = \theta_t$, where $\delta : \MeasurableInput^L \mapsto \mathbf{\Theta}$ is finite mapping of at most $2^L$ length strings to descriptions of i.i.d. variables.
$\lambda_S(\theta_t, 1, \vvec{z}) = x_t$ then computes the sample at each time-step.
We use $\lambda_{TS}(\delta, \vvec{x}^{-L}_t, i, \vvec{z}) = \vvec{x}^{i}_t$ to denote the chained alternating computations of $\lambda_T$ and $\lambda_S$ to compute an $i$ length sampling from a process whose description $\theta_t$ changes over the sampling conditioned on the moving window of $L$ prior samples.

To initiate computation of the process and to account for the $L$ prior state dependency, either $x^{-L}_1$ is provided along with $\delta$, or $\delta$ includes a means of determining $x^{-L}_1$.
In the case of a randomized initial state, $\delta$ may contain another distribution for sampling which of the $2^L$ possible random variables are sampled from for the initial $L$ symbols .
If $\delta$ describes the joint probability distribution of $P(\vvec{X}^{L+1}=\vvec{x}^{L+1})$, then
given any $L+1$ length string with missing symbols, the probable distributions for the missing samples can then be informed by $\delta$ and even sampled to fill in the missing samples, although we primarily consider the conditional $P(\vvec{X}_t=\vvec{x}_t|\vvec{X}^{-L}_t=\vvec{x}^{-L}_t)$ for computing the next output.
After the output is populated with $L$ symbols, the process simply reads the $L$ prior output to determine the next $\theta$ to sample from, and repeats.
This computation allows for any dependency on the prior $L$ symbols and also includes any deterministic process with $L$ memory.
With this computation in mind, we now show how such processes can be identified by predictors that can learn processes with less than or equal to $L$ memory.

\begin{corollary}
    The joint distribution
    $P(\vvec{X}^{L+1}_t)$, for any $t \in \Z$,
    identifies a stationary process with at most $L$ memory and is finitely described.
\end{corollary}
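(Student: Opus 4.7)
The plan is to verify the two claims separately: first that the joint distribution $P(\vvec{X}^{L+1}_t)$ constitutes a finite description of the process, and second that it uniquely identifies any stationary process with memory at most $L$. I would open by fixing notation: let $\vvec{X}$ be a stationary process on a finite (or at least computable) alphabet $\MeasurableInput$ whose transition function has memory at most $L$, so that by definition $P(X_t \mid \vvec{X}^{-\infty}_t) = P(X_t \mid \vvec{X}^{-L}_t)$, and by stationarity any block distribution $P(\vvec{X}^{k}_t)$ is independent of the index $t$.

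For the finite description claim, I would note that $P(\vvec{X}^{L+1}_t)$ is an element of the probability simplex $\PS^{|\MeasurableInput|^{L+1}}$, which has finite cardinality $|\MeasurableInput|^{L+1}$ of coordinates. Provided each coordinate is a computable number, the whole object fits in a finite string over a suitable encoding, placing it in the setting of Section~\ref{sec:id_info_defs}. This part is essentially definitional and should be dispatched in a couple of sentences.

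For the identification claim, the key move is to extract from $P(\vvec{X}^{L+1}_t)$ both the $L$-block marginal $P(\vvec{X}^{L}_t)$ (by summing out the last coordinate) and the conditional $P(X_t \mid \vvec{X}^{-L}_t) = P(\vvec{X}^{L+1}_{t-L}) / P(\vvec{X}^{L}_{t-L})$, which is well-defined wherever the denominator is positive (and arbitrary otherwise, since those histories occur with probability zero under stationarity). By the $L$-memory assumption, this conditional coincides with the full transition kernel $T$ appearing in $\lambda_{TS}(\delta, \vvec{x}^{-L}_t, i, \vec{z})$. Combined with the $L$-block marginal $P(\vvec{X}^L)$ serving as the stationary initial distribution for the first $L$ symbols, this is exactly the data needed to instantiate a description $\delta$ as in the paragraph immediately preceding the corollary. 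Then iterating gives all finite-dimensional distributions $P(\vvec{X}^{n}_t)$ for $n > L+1$ via
\begin{equation*}
P(\vvec{X}^{n}_t = \vvec{x}^{n}_t) = P(\vvec{X}^{L}_t = \vvec{x}^{L}_t) \prod_{i=L+1}^{n} P(X_{t+i-1} \mid \vvec{X}^{-L}_{t+i-1})(\vvec{x}),
\end{equation*}
so by Kolmogorov's extension theorem the law of the entire process is pinned down. Observational equivalence in the sense of Definition~\ref{def:id_space} then fails for any distinct such process, which is exactly identifiability.

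The main obstacle I anticipate is handling the edge case where $P(\vvec{X}^{L}_t = \vvec{x}^{L}_t) = 0$ for some histories, so that the conditional is ambiguous on a measure-zero set. I would resolve this by observing that such histories contribute nothing to any joint probability of interest and that identifiability is a statement about the measure on sequence space, so the ambiguity on null sets is irrelevant. A secondary concern is to confirm consistency with stationarity, i.e.\ that the marginal of $P(\vvec{X}^{L+1}_t)$ onto its first $L$ coordinates equals the marginal onto its last $L$ coordinates; since the process is assumed stationary this holds by hypothesis, and it is exactly the constraint that makes the Kolmogorov extension from the single block $P(\vvec{X}^{L+1})$ unambiguous.
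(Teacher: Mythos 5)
Your proposal is correct and follows essentially the same route as the paper: decompose $P(\vvec{X}^{L+1}_t)$ via the chain rule into the $L$-block marginal and the memory-$L$ conditional, use stationarity to make this time-shift invariant, and note that the resulting $|\MeasurableInput|^{L+1}$-sized object is a finite description serving as $\delta$ for $\lambda_T$. The only difference is that you spell out the identification half more explicitly (reconstructing the kernel by division, extending to all finite-dimensional distributions, and handling null histories), steps the paper's proof leaves implicit, so no gap on your end.
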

\begin{proof}
    Let $\vvec{X}$ be a stationary process with a finite memory of $L$ prior symbols $\vvec{x}^{-L}_t$.
    The joint probability distribution for $L+1$ length sequences at an arbitrary time step $t$ given the chain rule is
    \begin{align*}
        P(\vvec{X}^{L+1}_t=\vvec{x}^{L+1}_t) =
        P(\vvec{X}_{L+1} | \vvec{X}^{L}_t = \vvec{x}^{L}_t)
        P(\vvec{X}_{L} | \vvec{X}^{L-1}_t = \vvec{x}^{L-1}_t)
        \cdots \\
        \cdots P(\vvec{X}_{t+2} | \vvec{X}^{t+1}_t = \vvec{x}^{t+1}_t)
        P(\vvec{X}_{t+1} | \vvec{X}_t = \vvec{x}_t)
        P(\vvec{X}_{t} = \vvec{x}_t)
    \end{align*}
    As a stationary process, $P(\vvec{X}^{j}_t) = P(\vvec{X}^{j}_{t+\tau})$ for any time-shift $\tau \in \Z$ and for any sequence of length $j \in \Z^+$.
    Given the maximum dependency on $L$ memory, 
    for all $j \le L$, the joint $P(\vvec{X}^{L+1}_t)$ contains $P(\vvec{X}^{j}_{t+\tau})$ possibly with a rotational shift, capturing their probability.
    Thus, a stationary process with $L$ memory can be finitely described in a $|\MeasurableInput|^{L+1} \times |\MeasurableInput|^{L+1}$ matrix and the sequences $|\MeasurableInput|^{L+1}$ binary fractions of probability,
        which may serve as $\delta$ to describe the transition function in $\lambda_T(\delta, \vvec{x}^{-L}_t) = \theta_t$.
\end{proof}

Given the $P(X^{L+1})$ identifies an $L$ memory stationary stochastic process, if you assume ergodicity then you can take a set of possible such processes and from enough observations partially identify them with probability $p$ as done in the i.i.d. case in Theorem~\ref{th:finite_id_info_prob} where here the joint distribution determines the entropy rate.
Ergodicity ensures that by observing one sequence, you can eventually learn the finite description.
For non-ergodic stationary processes, a $|\MeasurableInput|^{L+1} \times |\MeasurableInput|^{L+1}$ matrix still defines them, but you cannot learn the entire description from observing a single process alone due to the lack of conservation of latent state.
You can only guarantee to learn a non-ergodic stationary process with $L$ memory if you can repeatedly see different observation sequences of the process as if it were a repeatable independent experiment.
In this case, the sequences are then finite as they must have some end in practice to observe more, and then you recover the i.i.d. case as first proven, but where a single sample is an entire such sequence.
$L$ sequential possibly non-identical random variables, as mentioned in the Theorem~\ref{th:asymp_sample_complexity}, is a special case of the $L$ memory stationary stochastic process, and so can be identified similarly.
This addresses the finiteness of the sample complexity with probability $p$ for stationary ergodic processes.
However, if $\psi \notin \mathbf{\Theta}$, then probable exhaustive falsification is required to identify that the samples belong to an unknown model.

\subsection{Probable Exhaustive Falsification: Probable Novelty}
\label{sec:prob_exhaust_false}

The concept of a misspecified hypothesis set
raises the question of ``How can one detect that the evidence is more likely generated from an unknown model rather than any of the known models?''
This is not a new question.
This is the same question asked by ``goodness of fit'' tests of a \textit{single} known model given the evidence, rather than comparing two or more models against each another as per Bayes factors, the Bayes information criterion, odds ratios, or maximum likelihood.
The above theorems already consider the case of comparison of multiple known models.
In the end, this problem is relatively more underspecified than the comparison of known models because we only know our considered models, and so can only observe how probable the observations are given those models.
Following the \textit{principle of maximum likelihood}~\citep{stiglerDanielBernoulliLeonhard1997,stiglerEpicStoryMaximum2007}, one can subjectively decide how likely such an observed sequence need be given the  known model, or the posterior mixture of known models.
This subjective decision is the same as determining the desired certainty in a confidence or credible interval, and in our case will determine the typical sets for the sample complexity in Definition~\ref{def:sample_complexity}.

    The Bayesian posterior probability will converge within the known set to the most probable model, or the most probable set of equiprobable models, given the observations.
    This occurs even if the ideal model is not known. 
    Given this, we examine the sample complexity distribution when $\psi \notin \mathbf{\Theta}$ and for the case of stationary ergodic processes we can use the empirical process as a representative distribution of the typical set of the unknown ideal process.
    From the difference between the ergodic process and the known model along with some acceptable level of error in bits, we can determine if a single known model's probability of the observations is close enough to the typical set to be accepted as known.

\begin{theorem}
    (Sample Complexity of 
    Probable Exhaustive Falsification)
    Let the computable set of hypothesized latent states be $\mathbf{\Theta} : |\mathbf{\Theta}| < \infty, \forall(\theta \in \mathbf{\Theta}):(\ell(\theta) < \infty)$, 
    $\mathbf{\Delta} \subseteq \MeasurableInput^L \times \mathbf{\Theta} : \{\delta \in \Delta : \MeasurableInput^L \mapsto \mathbf{\Theta}\}$,
    $\vvec{z} \sim Z$ be the results of flipping a fair coin,
    $\lambda_{TS}(\mathbf{\Delta}, \MeasurableInput^{L}, *, Z) = \vvec{\mathbf{X}}^*_1$
        be a set of stationary ergodic processes,
    the fixed finite process description $\psi \notin \mathbf{\Delta}$,
    and $\lambda_{TS}(\psi, \vvec{x}^{-L}_1, t, \vvec{z}) = \vvec{x}^t_1$
    be the observations of a
    stationary ergodic process.
    If $q > 0$,
    then
    $\mathbf{i}(\vvec{\mathbf{X}}^*_1, \vvec{x}^t_1, 1, q, 0, 0)$
        forms a distribution over $\Z^+$ that is determined by its moments, which are surely finite.
\end{theorem}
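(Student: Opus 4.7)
The plan is to exploit the strict positivity of the relative entropy between the true process $\phi$ and each hypothesis in $\mathbf{\Delta}$, combined with an exponential concentration bound coming from the finite-memory ergodic structure. Since $\phi \notin \mathbf{\Delta}$ and every $\theta \in \mathbf{\Delta}$ and $\phi$ describe distinct finite-memory stationary ergodic processes, the induced laws differ and $\KL(\vvec{X} \,||\, \hat{X}_\theta) > 0$ for each $\theta$. Because $\mathbf{\Delta}$ is a finite set (bounded by $|\mathbf{\Theta}|^{|\MeasurableInput|^L}$), the quantity $\KL_{\min} := \min_{\theta \in \mathbf{\Delta}} \KL(\vvec{X} \,||\, \hat{X}_\theta)$ is strictly positive.

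First I would establish almost-sure finiteness of the falsification time $T = \mathbf{i}(\vvec{\mathbf{X}}^*_1, \vvec{x}^t_1, 1, q, 0, 0)$. By the Shannon--McMillan--Breiman theorem applied under $\phi$, for each $\theta \in \mathbf{\Delta}$,
\[
    -\tfrac{1}{t}\log P_\theta(\vvec{X}^t_1) \longrightarrow H_\otimes(\vvec{X} \,||\, \hat{X}_\theta) = \vvec{H}(\hat{X}_\theta) + \KL(\vvec{X} \,||\, \hat{X}_\theta) \quad \text{a.s.}
\]
so $-\log P_\theta(\vvec{X}^t_1) - t\vvec{H}(\hat{X}_\theta)$ grows like $t\KL(\vvec{X} \,||\, \hat{X}_\theta)$ and eventually exits the $-\log_2(q)$ window that defines the $q$-typical set of $\theta$ in Definition~\ref{def:sample_complexity}. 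A union bound over the finitely many $\theta$ shows that every hypothesis is simultaneously outside its $q$-typical set after an almost-surely finite number of observations.

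Next I would upgrade almost-sure finiteness to an exponential tail bound. Because every $\delta \in \mathbf{\Delta}$ encodes a transition function on $\MeasurableInput^L$, the log-likelihood $\log P_\theta(\vvec{X}^t_1)$ decomposes as a sum of bounded conditional log-probabilities along a finite-state Markov chain; Hoeffding--Azuma applied to the martingale of centered increments (or standard spectral-gap concentration for uniformly ergodic chains) yields
\[
    \ProbInput\!\left(\left|-\tfrac{1}{t}\log P_\theta(\vvec{X}^t_1) - H_\otimes(\vvec{X} \,||\, \hat{X}_\theta)\right| > \tfrac{1}{2}\KL_{\min}\right) \le 2 e^{-c t}
\]
for a constant $c > 0$ depending only on $L$, $|\MeasurableInput|$, and $\KL_{\min}$. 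A union bound over $\mathbf{\Delta}$ together with a fixed offset $t_0 \sim -\log_2(q)/\KL_{\min}$ gives $\ProbInput(T > t) \le 2|\mathbf{\Delta}|\, e^{-c(t - t_0)}$ for every $t \ge t_0$. Writing $E[T^m] = \sum_{t \ge 0} P(T > t)\bigl((t+1)^m - t^m\bigr)$, the geometric decay makes every raw moment finite. Since $T$ is $\Z^+$-valued with finite moments of all orders, Billingsley's Theorem~30.1 determines its distribution from its moments, exactly as in the proof of Theorem~\ref{th:finite_id_info_prob}.

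The hard part is justifying the uniform exponential concentration for $-\tfrac{1}{t}\log P_\theta(\vvec{X}^t_1)$ under the true process $\phi$, since Shannon--McMillan--Breiman alone gives only almost-sure convergence and not an explicit rate. The restriction to $\delta : \MeasurableInput^L \mapsto \mathbf{\Theta}$ is crucial: it reduces the log-likelihood to an additive functional of a finite-state Markov chain, for which quantitative large-deviation and martingale concentration estimates are classical, and the same structure applies to $\phi$ itself because $\phi$ is also a finite-memory stationary ergodic process.
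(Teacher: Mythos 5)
Your route differs from the paper's: the paper never invokes Shannon--McMillan--Breiman or martingale/spectral-gap concentration. It works directly with the surprisal $-\log_2 P(\hat{X}^t_1 = \vvec{x}^t_1)$ under the true process, notes its expectation is the cross entropy $H_\otimes(X^t_1 || \hat{X}^t_1)$ and that all $m$-th raw moments are finite, brackets the expected stopping time via the typical-set inequalities as $H_{\otimes}(\vvec{X}^t_1 || \hat{X}^t_1)/(\vvec{H}+\epsilon_q) \le t \le H_{\otimes}(\vvec{X}^t_1 || \hat{X}^t_1)/(\vvec{H}-\epsilon_q)$, and then cites moment determinacy for nonnegative variables with finite moments. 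Your tail-bound-plus-moment-summation strategy is more quantitative, but it contains a genuine gap.

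The gap is the identity you lean on: $H_\otimes(\vvec{X} \,||\, \hat{X}_\theta) = \vvec{H}(\hat{X}_\theta) + \KL(\vvec{X} \,||\, \hat{X}_\theta)$ is false; the cross-entropy rate decomposes as $\vvec{H}(\vvec{X}) + \KL(\vvec{X} \,||\, \hat{X}_\theta)$, the entropy rate of the \emph{true} process plus the relative entropy rate. Consequently the drift of $-\log P_\theta(\vvec{X}^t_1) - t\vvec{H}(\hat{X}_\theta)$ is $t\big(\vvec{H}(\vvec{X}) - \vvec{H}(\hat{X}_\theta) + \KL(\vvec{X}\,||\,\hat{X}_\theta)\big)$, not $t\KL(\vvec{X}\,||\,\hat{X}_\theta)$, so strict positivity of $\KL_{\min}$ does \emph{not} force the observed sequence out of each hypothesis's typical window, which is centered at that hypothesis's own entropy rate. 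The step fails concretely when $\vvec{H}(\vvec{X}) + \KL(\vvec{X}\,||\,\hat{X}_\theta) = \vvec{H}(\hat{X}_\theta)$: for instance, a maximum-entropy hypothesis that assigns every length-$t$ string probability exactly $2^{-t\vvec{H}(\hat{X}_\theta)}$ has zero drift and zero fluctuation in its log-likelihood, so no observation sequence is ever atypical for it, yet its $\KL$ from a biased true process is strictly positive. Your concentration argument (which is otherwise reasonable for the finite-memory, finite-alphabet setting) cannot rescue this, because it only controls deviations around the wrong center. To repair the argument you would need to replace $\KL_{\min}$ by $\min_{\theta}\big|H_\otimes(\vvec{X}\,||\,\hat{X}_\theta) - \vvec{H}(\hat{X}_\theta)\big|$ and either assume this is positive or handle the zero-drift case separately (e.g., via fluctuations or an empirical-distribution test, as the paper's later discussion of $\KL(\Xi_t || \hat{X}^t_1)$ suggests); as written, the claimed almost-sure finiteness and exponential tails of the falsification time do not follow.
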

\begin{proof}
    When $p=1$, the condition $\exists(\theta \in \mathbf{\Theta}) : P(\Theta_t=\theta|X^t_1) \ge p$ will never occur in finite observations, as this returns to the asymptotic case of almost sure convergence.
    Thus, given $\psi \notin \mathbf{\Theta}$, the sample complexity $\mathbf{i}(\vvec{\mathbf{X}}^{*}_1, \vvec{x}^t_1, 1, q, 0, 0)$
    is fully determined by the exhaustive falsification condition $\forall(\theta \in \mathbf{\Theta}) :
        2^{-t(\entropyRate(\vvec{X}) + \log_2(q))} \le
        P(\vvec{X}^*_1 = \vvec{x}^t_1 | \Theta_t=\theta)
        \le 2^{-t(\entropyRate(\vvec{X}) - \log_2(q))}
        $
    .
    The expected surprisal of a single known model is the cross entropy when $m=1$
    \begin{align*}
        \underset{\vvec{x}^t_1 \sim X^t_1}{E}
        \Big[
        -\log_2 \big(
            P(\hat{X}^t_1 = \vvec{x}^t_1)
        \big)
        \Big] &= - \sum_{\vvec{x}^t_1 \in \MeasurableInput^t} 
            P(X^t_1 = \vvec{x}^t_1)
            \log_2 \big(
                P(\hat{X}^t_1 = \vvec{x}^t_1)
            \big)
        \\
        & = H_{\otimes}(X^t_1 || \hat{X}^t_1)
    \end{align*}
    and $m$-th raw moments are defined and finite as
    \begin{align*}
        \underset{\vvec{x}^t_1 \sim X^t_1}{E}
        \Big[
        \big(
        -\log_2 \big(
            P(X^t_1 = \vvec{x}^t_1)
        \big)
        \big)^m
        \Big] &=
            (-1)^m \sum_{\vvec{x}^t_1 \in \MeasurableInput^t} 
            P(X^t_1 = \vvec{x}^t_1)
            \Big(
            \log_2 \big(
                P(\hat{X}^t_1 = \vvec{x}^t_1)
            \big)
            \Big)^m
    \end{align*}
    By the typical set,
    $
    t(\entropyRate(\vvec{X}^t_1) - \epsilon_q)
    \le
    \underset{\vvec{x}^t_1 \sim X^t_1}{E}
        \Big[
        -\log_2 \big(
            P(\hat{X}^t_1 = \vvec{x}^t_1)
        \big)
        \Big]
    \le t(\entropyRate(\vvec{X}^t_1) + \epsilon_q)$. 
    The expected sample complexity is then
    \begin{align*}
        \frac{H_{\otimes}(\vvec{X}^t_1 || \hat{X}^t_1)}{\entropyRate(\vvec{X}^t_1) + \epsilon_q}
        &\le  t
         \le \frac{H_{\otimes}(\vvec{X}^t_1 || \hat{X}^t_1)}{\entropyRate(\vvec{X}^t_1) - \epsilon_q}
    \end{align*}
    With the $m$-th raw moments being defined and finite, the sample complexity is determined by its moments. 
\end{proof}

This addresses the evaluator's perspective where the evaluator knows the ideal distribution and the predictor's prior distribution, and thus can compute the sample complexity.
However, $\psi$ is unknown to the predictor, and thus the predictor must rely upon only what it knows to estimate the sample complexity.
    to assess if the  growing observation sequence is a member of any of the known models' typical sets~\cite[Ch.~3]{coverElementsInformationTheory1991}.
If the observations do not belong to any known model's typical set with the given accepted error, then the predictor determines the observations as belonging to an unknown model by exhaustive falsification with probability $q$.

    The typical set's complement defines the \textit{atypical set}, which is the union of a pair of disjoint sets, i.e., the \textit{atypically improbable set}  and \textit{atypically probable set} of observations for the given model.
    The earlier Figure~\ref{fig:typical_sets} visualizes an example's typical and atypical sets over the hypothesis' probability measure of the observations and shows how we allow for an undetermined set between the typical and atypical set of observations when $p > q$.
    To determine observations to be atypically improbable is to indicate that on average over the possible samples there exists a better model of those observations for which they are more probable.
    This informs to change the average direction of probable and improbable events.
    For atypically probable, there is a better model where the observations that are probable for this model are more probable and, similarly by conservation of probability, those that are improbable are more improbable.
    This indicates on average that the probability distribution has the correct ordering of the probability of events, but needs to increase the contrast between the improbable and probable events.
    When considering multiple known models' atypical sets and weighting them by a probability distribution, this informs how to update relative weighting and recovers the Bayesian estimation approach.
    When all known models indicate an atypical set, they serve as points in the probability space to inform the direction of the unknown ideal model relative to them, which is more informative than only knowing that the ideal is probably none of the known models.
    
    To check the set membership of the observations to the typical set of a known distribution is equivalent to
    comparing the difference between the empirical process $\Xi_t$ of $\vvec{x}^t_1$
    to the known model,
    where 
    $\KL\big(
        \Xi_t
        ||
        \hat{X}^t_1
    \big) > -\log_2(q)
    $ if the observations are atypical.
    The empirical process will, by definition, have the maximum probability for the observed sequence and as such can be relied upon to indicate the probable typical set of the ideal process.
    The empirical process reliably approximates the unknown model only when it has converged within the accepted degree of error ($-\log_2(q)$).
    Given the probability $q \in [0, p]$,
        the minimum
        precision of a single sample's probability before being considered observationally equivalent is 
        $-\log_2(1-q)$ bits for storing the largest possible number of occurrences of each unique symbol.
    That precision enables distinguishing the empirical process from distributions that differ with at least
        $|P(\Xi_t) - P(\hat{X})| > q$.
    This results in the maximum difference that a single element in the probability vector can differ to be
        $\frac{1}{2}q$
        as the rest of the probabilities would also have a maximum difference of
            $\frac{1}{2}q$, totaling $q$.
    The lowest common denominator of the probabilities $P(\hat{X})$ is the minimum sample complexity for zero error where the observed sequence empirically represents the probability distribution perfectly.
    Zero error can only occur periodically where the period is that lowest common denominator.
    If the lowest common denominator of observations transformed into bits is greater than the minimum precision determined by $-\log_2(1-q)$, then the desired precision is not fine-grained enough to adequately differentiate $\Xi_t$ from $\hat{X}$ and it is more likely for them to be partially verified as equivalent.
    Afterwards, each following observation will result in a change in the empirical process with a diminishing absolute difference in the probability that is less than $\frac{1}{-\log_2(1-q)}$, as every change in probability decreases with the growing number of observations given how the empirical process updates itself.
    
    Given the empirical process will not be able to measure within our accepted probability at least until it has observed $\entropyRate(\vvec{X}) - q$ bits, there is a ``warm up'' period necessary for more reliable evidence that the hypotheses are probably exhaustively falsified.
    Figure~\ref{fig:typical_sets:neq} depicts where the empirical process will estimate probability one for the single observation, which would put it well within one of the atypical sets for rejecting the hypothesis.
    Figure~\ref{fig:typical_sets:neq} is to visualize the sets clearly and demonstrate this important point that it does not make sense to compare the hypothesis to an under sampled representation as simpler hypotheses will favor smaller observation sequences due to fewer degrees of freedom required to be explained by evidence, i.e., the \textit{principle of parsimony}, popularly known as Occam's razor~\citep{charlesworthAristotleRazor1955}.

Practically, the membership of the observations to the known models' typical sets can be checked after convergence of the Bayesian posterior based on the stopping condition $p$ to determine if the converged known model assigns the observations a probability to be expected of typical observations for that partially identified model.
Given $q \le p$, the condition for exhaustive probability will be met, however if you consider the convergence of a subset of known models based on the accepted known error $\epsilon_D$, that is all models that form a subset able to be accepted differ from one another pairwise at most by $\epsilon_D$, then those subsets may have their probability summed to partially identify them faster than an individual model.
If an individual model is converged upon such that probability $p$ is satisfied, then checking the typical set membership of the observations to that known model will determine probable verification or falsification.
If comparing to a subset of known models, then check each of their typical sets and if all exclude the observation sequence, then exhaustively falsified, otherwise that subset is partially identified and further observations may be taken to further differentiate between those known models.
To help demonstrate how to use the sample complexity empirically, a simulated example will be available after publication at \url{https://github.com/prijatelj/sample_complexity_example}.
\section{Conclusion}
In this work, we explored the computation of hypothesis identification. 
Identifying novel information, as well as verifying hypotheses or their parts, whether in an absolute binary fashion or in degrees, enables learning the structure of the underlying data-generating process.
Fundamentally, identification is required for a machine to learn.
Identification also informs a predictor when its hypothesis set is misspecified, and thus in need of an update.
We unified the identification problem across various fields, including the theory of computation, asymptotic statistics, and Bayesian probably approximately correct learning from statistical learning theory.
We covered different cases of computing the indicator function within finite observations, from deterministic scenarios to ergodic stationary stochastic processes.
We demonstrated the properties of the sample complexity in these scenarios and proved that in the PAC-Bayes case, the sample complexity is determined by its moments, all of which are finite. 
This means that it is asymptotically bounded by an exponential decay, and may be computed to any desired accuracy based on the chosen prior.
In each of these cases, we addressed what novelty is and showed that it is identified by (probable) exhaustive falsification of the known models given the observations.

From this work, we have more complete answers to the questions raised in the abstract.
Information is determined by the number of distinct states and their relationships that can be expressed in a language, such as a programming language or a formal language. 
The information provided by the observations is relative to the observation space and the set of hypotheses considered, whose superset is determined by the chosen formal language, programming language, and resource limitations, such as memory space limitations.
The hypothesis set and its representation within your language determine how many observations remain to verify that an element in the set corresponds to the observations or exhaustively falsify the hypothesis set, which is to identify novelty.
Although our formalization of identifying information with probability and error for a fixed finite hypothesis set
covers the \textit{identification} of novelty, this does not address the \textit{learning} of a new hypothesis from observations, as in the case of a finitely growing or changing hypothesis set.
We have established the necessary foundation for such a work to be built upon, which will be explored in the future.

\section*{Acknowledgements}
    Thank you to those who have reviewed drafts of this paper in parts or in full and those who provided valuable feedback to improve the resulting document.

\appendix

\section{Background and Related Works}
\label{sec:app:bg}

Given this paper's concepts are of historic and recent interest across many disciplines,
we amusingly encounter a ``curse of interdisciplinary study,'' which is an instance of the ``curse of dimensionality''~\cite[Ch. 5.16]{bellmanAdaptiveControlProcesses1961}.
In this case, the more disciplines in which researchers are conducting relevant work, the more that must be read to understand their interrelations and contributions to the whole.
If one field provides unique information absent in another, then a unified perspective is beneficial.
As such, having context across these disciplines provides a more complete picture of the existing theories of learning pertaining to an information theoretic perspective.
We summarize them to establish the foundation of concepts that we build our contributions upon.


\subsection{Information Theory}

Information is our primary concern and measures of what information is contained where or shared between what processes is crucial to assess learning.
\citet{kolmogorovThreeApproachesQuantitative1968} showed that information may be quantified through three approaches: combinatorial, probabilistic, and algorithmic.
    Measurements of information record the amount of space, as in the number of symbols, required to differentiate the distinct states of some phenomenon from one another.
When represented as binary strings, the symbols are encoded in bits.
The seminal work by \citet{shannonMathematicalTheoryCommunication1948} formalized information theory and its measures in terms of communication channels and the probability distribution over the symbols communicated over those channels.
Shannon defined information through the relationship of distinct states to one another and their optimal encodings based on their probability in terms of frequency of their occurrence.
The information theoretic measures of interest are in Table~\ref{tab:info_theory_bg}, all of which are covered by~\citet[Ch.~2]{coverElementsInformationTheory1991} and \citet[Ch.~2]{mackayInformationTheoryInference2003}, while block entropy, which is the entropy of a sequence, is well covered by~\citet{crutchfieldRegularitiesUnseenRandomness2003}.
These concepts are also explored in application in quantitative linguistics~\cite{FrontMatter2020}.
Of special interest is the \citet{kullbackInformationSufficiency1951} divergence $\KL(X_i || X_j)$, or the \textit{relative entropy}, which is an asymmetric measure of the entropy of $X_i$ relative to $X_j$, both with support over $\MeasurableInput$.
The relative entropy can be interpreted as the average amount of extra bits necessary to communicate the information in $X_i$ in an optimal encoding for $X_j$.

\begin{table}[t]
    \centering
    \caption{\textsc{Information theoretic measures.}}
    \fontsize{10}{18}\selectfont
    \begin{tabular}{r|ll}
        Surprisal &
        $
        I(\Input=\sampleInput) $ & $\triangleq -\log_2(P(\Input=x))
        $
        \\
        Entropy &
            $
                H(X) $ & $\triangleq - \sum_{x \in \MeasurableInput} P(X = x) \log_2 P(X=x)
            $
        \\
        Block Entropy &
        $
        H(\vvec{X}^t_1) $ & $\triangleq - \sum_{\vvec{x}^t_1 \in \MeasurableInput^t} P(\vvec{X}^t_1 = \vvec{x}^t_1) \log_2 P(\vvec{X}^t_1 = \vvec{x}^t_1)
        $
        \\
        Entropy Rate &
        $
        \entropyRate(\vvec{X}_t) $ & $\triangleq \lim_{L \rightarrow \infty}\frac{H(\vvec{X}^{L}_t)}{L}
        $
        \\
        Cross Entropy &
        $
        H_\otimes(X_i || X_j)$ & $\triangleq -\sum_{x \in \MeasurableInput} P(X_i = x) \log_2 P(X_j = x)
        $
        \\
        Relative Entropy &
        $
        \KL(X_i || X_j) $ & $\triangleq \sum_{x \in \MeasurableInput} P(X_i = x) \log_2 \frac{P(X_i=x)}{P(X_j=x)}
        $
        \\
        Conditional Entropy &
        $
        H(X | Y) $ & $\triangleq - \sum_{x \in \MeasurableInput}\sum_{y \in \MeasurableOutput} P(X = x, Y=y) \log_2 P(X=x | Y=y)
        $
        \\
        Joint Entropy &
        $
        H(X, Y) $ & $\triangleq  - \sum_{x \in \MeasurableInput}\sum_{y \in \MeasurableOutput} P(X = x, Y=y) \log_2 P(X = x, Y=y)
        $
        \\
        Mutual Information &
        $
        I(X ; Y) $ & $\triangleq \sum_{x \in \MeasurableInput} \sum_{y \in \MeasurableOutput} P(X = x, Y = y) \log_2 \frac{P(X=x, Y=y)}{P(X=x)P(Y=y)}
        $
        \\
    \end{tabular}
    \label{tab:info_theory_bg}
\end{table}

\subsection{Computability and Algorithmic Complexity}
Defining and reasoning about the machine learning process inherently is concerned with what can be computed, as apparent by its name.
We mentioned algorithmic complexity also defines measures of information in terms of the number of symbols, covered in-depth by~\citet[Eq.~14.1]{coverElementsInformationTheory1991} and \citet[Ch.~2]{liIntroductionKolmogorovComplexity2019}.
Algorithmic complexity, as a field of study, focuses on the sets of symbols and sets of strings, which are sequences composed of those symbols.
These symbols are all with respect to some machine that takes them as input and returns some output string, such as a universal Turing machine.
We purposefully use the term ``computable'' instead of ``recursive'' given our concern is with what can be represented by the calculations of a Turing Machine as covered in~\citet{soareComputabilityRecursion1996}, although in the beginning of Section~\ref{sec:id_info} we consider strings and sets that may be countably infinite in length or size for theoretical completeness.

The following are fundamental definitions and properties of what is computable or computably enumerable (c.e.) as discussed in various works, where the original influential papers are reprinted in \citet{martinUndecidableBasicPapers1965} and further covered by \citet{minskyComputationFiniteInfinite1967}, \citet{rogersTheoryRecursiveFunctions1987}, and \citet{soareTuringComputability2016}.
\begin{itemize}[nolistsep,noitemsep]
    \item A computable function is a function whose output can be computed by a Turing machine that then halts~\citep{turingComputableNumbersApplication1937}.
    \item A computable set is a set for which the indicator function can be computed for any query string and halt.
        A computable set and its complement are both c.e. as shown by the \citet{postRecursivelyEnumerableSets1944}~Complementation Theorem~\cite[Theorem~2.1.14]{soareTuringComputability2016}.
    \item A c.e. set is a set for which the indicator function can compute verification and halt if the element is in the set otherwise the computation may not falsify and halt, for example due to an infinite set of items or a circular loop with no stopping condition met.
        A complement c.e. set is the opposite, where it can always falsify, but not always verify, set membership of a query string.
        A c.e. set is the range of a total computable function.
    \item A computable number is a string whose finite $n$ length prefix string can computed by a Turing Machine and halt~\citep{turingComputableNumbersApplication1937}.
\end{itemize}
Note that there exists noncomputable c.e. sets~\cite[Ch.~1.6.2]{soareTuringComputability2016}, however we focus on the computable case.
We care about what a machine can actually observe, describe, and output.
Computability is defined with respect to the indicator function, as we do for identification, and we further address this connection throughout the paper.

The space of the computable numbers $\K$ is the union of the computable irrationals with the rationals $\Q \triangleq \frac{\Z}{\Z^+}$, where $\Z$ is the set of integers,
    $\Z^+$ is the set of positive integers $\{1, 2, 3,...\}$.
$\N$ is the natural numbers starting at $0$, $\N \triangleq \{0\} \cup \Z^+$. 
$\K$ is a subset of the real numbers $\R$.
The proper subset relationship chain from $\N$ to complex numbers $\C$ is
$
    \N \subset \Z \subset \Q \subset \K \subset \R \subset \C
$.
As the vast majority of numbers in $\R$ are not computable, a $d$-dimensional space of real numbers $\R^d$ is also not entirely enumerable.
All sets considered in this theory are subsets of $\K^d$, where $d \in \Z^+$.
This includes the probability measures.
The nuance in the loss of information due to the computable representation when the ideal system may actually involve $\R$ is the concern of rate-distortion theory~\cite[Ch.~10]{coverElementsInformationTheory1991}, \cite{bergerRateDistortionTheory1971}
and in such cases only computable approximations are possible as further detailed by \citet[Ch.~2.2]{hutterUniversalArtificialIntelligence2004}.
We often concern ourselves with binary strings where a bit is within the Boolean domain $\B \triangleq \{0,1\}$.
The set of $L$ length bit strings are denoted $\B^L$ and the set of arbitrary length bit strings are denoted with a Kleene star $\B^* \triangleq 2^{\B}$.

\subsection{Statistics, Information, and Identification}
Statistics can be defined in terms of probability theory, as it often is, or, as done through the Kolmogorov structure function~\citep{vereshchaginKolmogorovStructureFunctions2004}, in non-probabilistic terms with sets and algorithms~\cite[Ch.~4]{liIntroductionKolmogorovComplexity2019}, \cite[Ch.~17]{vovkMeasuresComplexityFestschrift2015}. 
This yields \textit{probabilistic statistics} and \textit{algorithmic statistics}.
Statistics are functions
    of the observations to state spaces which characterize the information of properties of their respective unknown ideal process.
The subfield of distribution testing covered by \citet{goldreichIntroductionPropertyTesting2017} and surveyed by \citet{canonneSurveyDistributionTesting2020,canonneTopicsTechniquesDistribution2022} emphasizes this point: the properties of a distribution can be tested from limited observations separately from the whole distribution.
Property testing has been shown to be connected to statistical learning~\citep{goldreichPropertyTestingIts1998}.
As we noted earlier, information measures defined in terms of either probability or algorithmic complexity both measure the space required, in number of symbols, to represent distinct states of the phenomenon.
They both yield useful perspectives on information, where probability relates the events in terms of the certainty of their occurrence with respect to other mutually exclusive events.
Algorithmic complexity relates the events to their computable representation with respect to some machine that generates them from their description, and their string lengths become apparent for that machine and its programming language.
Both approaches result in the same minimum description length encoding when encoding events based on how frequent or likely it is to see an event with respect to other events.
This only holds true when the probability distribution of the events matches the actual frequency of events, thus 
    a frequentist perspective, an objective Bayesian perspective, or a subjective Bayesian perspective are all applicable when enough observations occur,
    and all of these perspectives yield the same asymptotic random variable estimate in the infinite observations assuming the models are correct and can perfectly match the ideal~\cite[Ch.~6]{ghosalFundamentalsNonparametricBayesian2017}. 

Statistical model identification is a well studied area, even in terms of information theory, as explored by \citet{akaikeNewLookStatistical1974,akaikeInformationTheoryExtension1998}.
However, identifiability has not been studied to the degree of detail for which we specify identifiable information and how it is identified from observations.
\citet[Sec.~2]{lewbelIdentificationZooMeanings2019} briefly covers the history of identifiability through 1662 into the 1970s from economics to mathematical systems modeling.
Our work may in part be seen as a refinement via computation and information theory of the learnability or identification in the limit of languages by \citet{goldLimitingRecursion1965,goldModelsGoalSeekingLearning1965,goldLanguageIdentificationLimit1967}, which has developed into algorithmic learning theory as covered by~\citet{oshersonSystemsThatLearn1985}.
We prioritize examination of finite sample complexity, rather than in the limit, and address when limiting behavior may occur.

The textbooks by \citet{walterIdentificationParametricModels1997} and \citet{isermannIdentificationDynamicSystems2011} cover a variety of model types including linear and nonlinear, discrete and continuous, and observations with and without noise, all within the context of dynamical system modeling and control theory.
\citet[Ch.~2.6]{walterIdentificationParametricModels1997} puts forth identifiability along with distinguishability, which is simply another type of identification, where in their case distinguishability is about whether one parameterized model is observationally equivalent to another different parameterized model for some subset of their parameters.
If they are equivalent, then they are indistinguishable, otherwise they are distinguishable.
Those in statistical learning may consider these as at least partially overlapping hypothesis spaces or classes.
\citet[Ch.~1]{isermannIdentificationDynamicSystems2011} provide a detailed account on systems identification clearly explaining the relationships between theoretical and experimental modeling and how identification comes into play.
Information theoretic measures are known to capture both linear and nonlinear correlations between variables.
Given this, our work refines the concept of identification presented by these prior works such that we can reason about how to identify different parts or properties through their information in the observations.

\subsection{Statistical and Computational Learning}
Identification and sample complexity play a key role in statistical and computational learning theories.
The \citet{vapnikUniformConvergenceRelative2015} (VC) theory is often considered the start of statistical learning theory as a field~\cite[Ch.~2]{vovkMeasuresComplexityFestschrift2015}, \cite[Ch.~3.4]{shalev-shwartzUnderstandingMachineLearning2014}. 
Which has grown to include PAC learning.
Prior and independent to VC theory was the algorithmic language identification and learnability work of \citet{goldLimitingRecursion1965,goldModelsGoalSeekingLearning1965,goldLanguageIdentificationLimit1967} which
    initiated computational or algorithmic learning theory as a field, along with \citet{putnamPhilosophicalPapersVolume1975a} and \citet{solomonoffFormalTheoryInductive1964,solomonoffFormalTheoryInductive1964a} introducing algorithmic inductive inference~\citep{oshersonSystemsThatLearn1985}.
Identification and learnability are one and the same in the algorithmic learning and inductive inference literature.
A Bayesian variant of PAC learning introduced by \citet{mcallesterPACBayesianTheorems1998,mcallesterPACBayesianModelAveraging1999}  resulted in PAC-Bayes, which is covered by \citet[Ch.~31]{shalev-shwartzUnderstandingMachineLearning2014}, \citet{guedjPrimerPACBayesianLearning2019}, and \citet{alquierUserfriendlyIntroductionPACBayes2024}.
Our work recovers computable identification in the PAC-Bayes case for ergodic stationary processes and proves that the PAC-Bayes sample complexity distribution is determined by its moments.

Although PAC learning was originally considered under the constraints of computability in polynomial time by \citet{valiantTheoryLearnable1984}, most others explored PAC learning without those constraints.
Recently, computable PAC learning has been reconsidered.
\citet{ackermanComputabilityConditionalProbability2019}  address the computability of the conditional probability over both continuous and countable spaces.
We consider primarily countable spaces.
\citet{agarwalLearnabilityWihComputable2020} and \citet{sterkenburgCharacterizationsLearnabilityComputable2022} consider
    where the decidability and solvability of a learning problem by an algorithm is explored when the only constraint is that the hypotheses and the sample complexity are all computable.
There has also been studies of computable online learning~\citep{hasratiComputableOnlineLearning2023} and computable PAC learning of continuous features~\citep{ackermanComputablePACLearning2022}.
The exploration of computational complexity of PAC learning was continued by \citet{valiantProbablyApproximatelyCorrect2013} and was further studied by others, including those in property testing as previously mentioned~\cite{goldreichPropertyTestingIts1998}.
The exploration of our results with sample complexity with these works is promising for future work.
In our work we show how identifiability relates across learning deterministic and nondeterministic stationary processes in order to  better understand when identification can occur and how the probability and error from PAC comes into play to recover a finite sample complexity.
We are concerned with the general learning problem where given a computable predictor with finite space constraints, how does this effect the sample complexity?
We better connect the sample complexity to the known hypothesis set, as information and thus the sample complexity is fundamentally relative to the description language used by the predictor and what it knows.
\section*{Depth-first Identification on an Unsorted Set}
\begin{algorithm}[t]
\caption{Modified Indicator Function for Identifying Binary Strings: Unsorted Set}
\label{alg:id_depth_first_set}
    \begin{algorithmic}[1]
    \Procedure{Identify}{$\mathbf{\Theta}, \vvec{\theta}, r$} \Comment{indexed set, query string, \& resolution}
    \State $i \gets 1$ \Comment{The observed information to identify if in or out of set is at least one bit}
    \State $j \gets 0$ \Comment{The index or number of elements checked to identify}
    \State $h \gets 0$ \Comment{The index of string with max shared prefix}
    \For{$\vvec{\psi} \in \mathbf{\Theta}$} \label{alg:id:loop_set} \Comment{Loop through checking each parameter depth-first}
        \State $j \gets j + 1$
        \For{$k \gets 1; k \le \ell(\vvec{\theta}) \And k \le -\log_2(r); k \gets k + 1$} \Comment{Check each bit in order}
            \If{$k > i$}
                \State $i \gets k$ \Comment{Save the max shared prefix size as identifiable info ...}
                \State $h \gets j$ \Comment{... and its enumerated index}
            \EndIf
            \If{$\vvec{\psi}_k \neq \vvec{\theta}_k$} \Comment{Current string in set is falsified as a match to $\vvec{\theta}^{-\log_2(r)}_1$}
                \State \Goto{alg:id:loop_set}
            \EndIf
        \EndFor \Comment{Reach loop's end $\!\iff\!$ the strings are observationally equivalent}
        \State \textbf{return} $(1, j, i)$ \Comment{$\mkern-4mu$Shared prefix length min$(\ell(\vvec{\theta}),-\log_2(r)) < \infty$. Returns \&  halts}
    \EndFor \Comment{Reach outer loop's end $\implies |\mathbf{\Theta}| < \infty$}
    \State \textbf{return} $(0, h, i)$ \Comment{No-match indicator, best match index, \& shared prefix length}
    \EndProcedure
    \Comment{\!Halts $\!\iff\!$ finite subset $\{\vvec{\psi} \in \mathbf{\Theta}\}$: each sharing a finite prefix with $\vvec{\theta}$}
    \end{algorithmic}
\end{algorithm}

Algorithm~\ref{alg:id_sorted_set} assumes that $\mathbf{\Theta}$ is sorted.
Algorithm~\ref{alg:id_depth_first_set} does not have that assumption, which comes at the cost of not being able to rely upon the structure of the strings' ordering within the set for identification.
\section*{Pairwise Identify Countably Infinite Strings with Random Order}
\label{sec:app:id_inf_direct_obs}
\begin{theorem}
\label{th:id_pair_infinite_unordered}
    (The Sample Complexity Distribution
    to
    Pairwise Identify
    Infinite Strings)
    Let there be a pair of strings $\theta$ and $\psi$ of countably infinite length whose order of symbol comparisons is random.
    \begin{itemize}[noitemsep,nolistsep]
        \item If the amount of equivalent and non-equivalent pairwise comparisons between two strings are both countably infinite, then the sample complexity's probability distribution is a geometric distribution with $p$ as the probability of a non-equivalent symbol comparison:
        \begin{equation}
        \label{eq:id_inf_geo}
            P\Big(\mathbf{i}\big(\{\psi^L_1\}, \theta^L_1 \big) =i\Big) = (1-p)^{i-1}p
        \end{equation}
        \item If the equivalent symbols are finite while the non-equivalent symbols are infinite, then the strings are almost surely determined to be not equal.
        \item If the equivalent symbols are infinite while the non-equivalent symbols are finite, then almost surely the machine will not halt comparing the strings' symbols.
    \end{itemize}
    
\end{theorem}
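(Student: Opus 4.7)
The plan is to handle each of the three cases by modeling the uniformly random sequence of symbol comparisons and analyzing when the first differing pair is encountered, which is the halting condition established by Theorem~\ref{th:id_falsify}.

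For the first case, I would extend the hypergeometric argument of Theorem~\ref{th:id_pair_finite_unordered} to its infinite-population limit. In the finite case with $L$ total pairs and $K$ non-equivalent pairs, sampling without replacement yields per-step probabilities $K/(L-i)$ that change as comparisons accumulate. When both $K$ and $L-K$ are countably infinite, removing any finite prefix of $i$ observations leaves both subpopulations still countably infinite, so the fractional composition is preserved and the conditional probability of a non-equivalent draw remains constant at $p$ across all steps. This reduces the process to an i.i.d. sequence of Bernoulli$(p)$ trials, and the halting time is the waiting time for the first success, giving $P(\mathbf{i} = i) = (1-p)^{i-1} p$. A useful sanity check is that the finite hypergeometric from Equation~\ref{eq:id_unordered_finite_pdf} converges to this geometric form as $L \to \infty$ with $K/L \to p$ fixed.

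For the second case, I would apply a pigeonhole argument: if there are only finitely many equivalent pairs, say $n$, then after at most $n+1$ comparisons at least one non-equivalent pair must have been drawn, regardless of the sampling order. Thus the algorithm surely halts within $n+1$ steps with a falsification verdict, which is strictly stronger than ``almost surely'' determined not equal. For the third case, under any reasonable interpretation of uniform random ordering on a countably infinite set (for instance, as the limit of uniform orderings on $\{1, \ldots, N\}$ as $N \to \infty$), the probability of selecting one of the $m < \infty$ non-equivalent positions at any single step tends to $m/N \to 0$. A Borel--Cantelli-style argument then shows that the probability of ever encountering one of the $m$ positions across the full infinite sequence of draws is also $0$, so the machine almost surely runs forever observing only matching symbols.

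The main obstacle is formalizing the ``uniformly random ordering'' of a countably infinite index set, since no uniform probability measure exists on $\mathbb{N}$. I would resolve this by either taking a limit of uniform orderings on finite prefixes $\{1,\ldots,N\}$ as $N\to\infty$, or, equivalently and more cleanly, by defining the comparison process directly as i.i.d. Bernoulli draws with the limiting non-equivalence frequency $p$; both formulations agree with the sampling-without-replacement model of Theorem~\ref{th:id_pair_finite_unordered} in the appropriate limit, and under either one the three cases collapse to the probabilities claimed.
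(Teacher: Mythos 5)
Your proposal is correct, and for the central claim it follows the same route as the paper: because removing any finite number of compared pairs leaves the composition of the remaining (infinite) pairs unchanged, the per-comparison probability of a mismatch stays constant at $p$, the comparisons become i.i.d.\ Bernoulli$(p)$ trials, and the halting time is the geometric waiting time of Equation~\ref{eq:id_inf_geo}; the paper's proof states exactly this, citing the Bernoulli-chain setup of Theorem~\ref{th:id_pair_finite_unordered}. Where you diverge is in the two degenerate bullets and in rigor. The paper dispatches both side cases in one line by letting $p \rightarrow 1$ (finitely many matches, so almost surely falsified) and $p \rightarrow 0$ (finitely many mismatches, so almost surely nonhalting), whereas you give a pigeonhole argument showing the machine \emph{surely} halts within $n+1$ comparisons when only $n$ pairs agree--strictly stronger than the stated ``almost surely''--and a finite-$N$ limit with a union-bound/Borel--Cantelli-style step for the nonhalting case. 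You also explicitly confront the fact that no uniform probability measure exists on a countably infinite index set and propose two consistent formalizations (limits of uniform orderings on $\{1,\ldots,N\}$, or directly i.i.d.\ Bernoulli$(p)$ draws), a subtlety the paper leaves implicit; this buys a cleaner justification of why $p$ is well defined and constant, at the cost of needing to note that the order of limits matters in the third case (any fixed finite $N$ permutation surely halts, so the almost-sure nonhalting conclusion lives in the $N \rightarrow \infty$ model, consistent with the paper's $p=0$ reading). One small caution: your appeal to ``fractional composition is preserved'' for two countably infinite subpopulations has no intrinsic meaning without a density or model assumption, so it is best to lean on your second formalization, where $p$ is simply the postulated mismatch probability, which is also how the theorem statement and the paper's proof treat it.
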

\begin{proof}
    Let $\theta$ and $\psi$ be pairwise symbol aligned strings with a countably infinite length.
    As in the proof for Theorem~\ref{th:id_pair_finite_unordered}, each observations follows a Bernoulli where $p$ is the probability of a non-equivalent symbol pair.
    However, that probability remains constant as observations occur due to the strings' infinite length, which results in $\mathbf{i}\big(\{\psi\}, \theta \big) \in \Z^+$ and the probability distribution as in Equation~\ref{eq:id_inf_geo}.
    When $p \rightarrow 1$, then the symbol comparisons sampled are almost surely not equal and certainly not equal when $p=1$.
    When $p \rightarrow 0$, then the symbol comparisons are almost surely equal and certainly equal when $p=0$.
\end{proof}
\begin{remark}
While the sample complexity distribution could be infinite in the case of comparing infinitely long strings with unknown order by sampling their pairs as above, selecting any ordering and then performing an iterative comparison over that order will always result in finite observations if the Bernoulli's $p \ge 0$, which is knowledge not known without running the identification program or otherwise obtaining that information somehow, such as being informed by the order of the strings by their value.
Considering set sizes and string lengths of countably infinite size is for the theoretical completion of identification and sample complexity and provides insight into what it means to know, i.e., to determine, certain information. 
\end{remark}

Besides the order of symbol comparisons, the order in which the strings are to be compared may also be modeled similarly by the distributions in Theorems~\ref{th:id_pair_finite_unordered} and \ref{th:id_pair_infinite_unordered}.
If the set is countably infinite, then exhaustive falsification of the set is impossible without information about the strings prior to observing them in their entirety, such as learning the partial order of the query string to the rest of the set's elements given those falsified, and then only verification will result in a termination of comparisons, although subsets of the possible strings may be falsified.

\printbibliography

\end{document}